\documentclass{article}
\usepackage[T1]{fontenc}
\usepackage{lmodern}
\usepackage{iclr2027_conference}

\usepackage{amsmath,amsfonts,bm}

\def\eqref#1{equation~\ref{#1}}

\def\plaineqref#1{\ref{#1}}

\def\1{\bm{1}}

\DeclareMathAlphabet{\mathsfit}{\encodingdefault}{\sfdefault}{m}{sl}
\SetMathAlphabet{\mathsfit}{bold}{\encodingdefault}{\sfdefault}{bx}{n}

\newcommand{\E}{\mathbb{E}}

\newcommand{\R}{\mathbb{R}}

\newcommand{\softmax}{\mathrm{softmax}}

\newcommand{\Var}{\mathrm{Var}}

\newcommand{\Cov}{\mathrm{Cov}}

\usepackage[colorlinks=true]{hyperref}
\usepackage{url}
\usepackage{graphicx}
\usepackage{capt-of}
\usepackage{wrapfig}
\usepackage{needspace}
\usepackage{amsmath,amssymb,amsthm}
\usepackage{booktabs}
\usepackage{xcolor}
\definecolor{citeblue}{RGB}{31,119,180}
\hypersetup{citecolor=citeblue,linkcolor=citeblue,urlcolor=citeblue}

\definecolor{adaptpinkbg}{RGB}{253,241,245}    
\definecolor{adaptpinkframe}{RGB}{174,91,116}  

\usepackage{enumitem}
\usepackage{algorithm}
\usepackage[noend]{algpseudocode}
\usepackage{fontawesome}   

\newtheoremstyle{paperthm}
  {7pt plus 2pt minus 1pt}{7pt plus 2pt minus 1pt}
  {\itshape}{}{\bfseries}{.}{0.6em}{}
\newtheoremstyle{paperdef}
  {7pt plus 2pt minus 1pt}{7pt plus 2pt minus 1pt}
  {\normalfont}{}{\bfseries}{.}{0.6em}{}
\theoremstyle{paperthm}
\newtheorem{theorem}{Theorem}
\newtheorem{proposition}[theorem]{Proposition}
\newtheorem{corollary}[theorem]{Corollary}
\newtheorem{lemma}[theorem]{Lemma}
\theoremstyle{paperdef}

\theoremstyle{remark}

\newcommand{\Prob}{\mathbb{P}}
\newcommand{\ind}[1]{\mathbf{1}\{#1\}}

\newcommand{\adaptationnote}[1]{%
  \par
  \Needspace{7\baselineskip}%
  \smallskip
  \begingroup
  \setlength{\fboxsep}{5pt}%
  \setlength{\fboxrule}{0.5pt}%
  \noindent\fcolorbox{adaptpinkframe}{adaptpinkbg}{%
    \parbox{\dimexpr\linewidth-2\fboxsep-2\fboxrule\relax}{%
      \small\textbf{Comment.}\enspace #1}}%
  \endgroup
  \par\smallskip
}

\graphicspath{{figures/}{pdf_figure/}}

\title{Shared Actors Need Not Share Critics:\\
Effects of Value Mismatch in\\
Parallel Reinforcement Learning}

\author{Zhenya Liu$^{1*}$, Yang Meng$^{1}$, Zhuokai Zhao$^{1}$, Xuefeng Liu$^{2\dagger}$ \& Yuxin Chen$^{1\dagger*}$\\[2pt]
$^{1}$University of Chicago\quad $^{2}$University of Florida\\[6pt]
{\normalsize
 \faGlobe\,\href{https://liu-zhenya.github.io/shared-actors-need-not-share-critics/}{\textbf{Project page}}\qquad
 \faGithub\,\href{https://github.com/Liu-Zhenya/share-actor-need-not-share-critic}{\textbf{Code}}}}

\iclrfinalcopy  
\begin{document}
\raggedbottom
\maketitle
\lhead{Preprint}
{\makeatletter\renewcommand{\thefootnote}{}%
\long\def\@makefntext#1{\noindent#1}%


{}\footnotetext{Preprint.
$^{\dagger}$Equal supervision. $^{*}$Corresponding authors
}

\makeatother\addtocounter{footnote}{-1}}
\begin{abstract}
When a single policy is trained in parallel across multiple environments of the same task,
such as procedurally generated levels, randomized dynamics, or curricula, implementations
commonly use one critic across all sampled environments.
Yet different environments can assign different expected returns to the same input visible
to the critic. A critic without environment information must then reconcile distinct value targets,
systematically shifting the sampled advantages within individual environments.
Using illustrative bandit models with multiple environments and a common optimal arm, we characterize how
this value mismatch redistributes sampled policy updates, reinforcing unhelpful actions while
attenuating or even reversing useful ones. The oracle processes using no baseline, the shared
value, or the value specific to the sampled environment have the same mean logit update at a fixed policy
and converge to the same optimal policy, yet their realized learning paths can differ sharply.
The analysis motivates a minimal intervention: give only a logged environment index to the critic so that it can separate the value targets. 
Controlled CartPole and MuJoCo experiments expose the predicted shifted values, advantages, and performance gaps. 
In the more complex BipedalWalker and Procgen settings, the same intervention yields more stable learning and higher returns. 
Across all $16$ Procgen games, the multihead conditional critic improves aggregate normalized return on $600$ unseen levels per game by $40.8\%$. 
In conclusion, the theory identifies value mismatch as a direct mechanism through which critic sharing can degrade stochastic learning dynamics, not captured by scalar estimator variance alone, and the experiments show that conditioning on an index is broadly effective in parallel reinforcement learning.

\end{abstract}

\section{Introduction}
\label{sec:intro}

Policy gradient methods optimize the expected return
$J(\theta):=\E_{\tau\sim\pi_\theta}[G(\tau)]$ by updating a stochastic policy from
sampled trajectories \citep{williams1992reinforce,suttonbarto2018}. A sampled update is
proportional to $(G_t-B_t)\nabla_\theta\log\pi_\theta(a_t\mid s_t)$, where $G_t$ is the
sampled return and $B_t$ is an action-independent baseline. Subtracting such a baseline
changes the realized update but leaves its expectation unchanged at a fixed policy
\citep{weaver2001optimal,greensmith2004variance}; a learned value function is the
standard choice \citep{schulman2016gae,schulman2017ppo}. But training is a closed
loop: the policy that generates a sample is also changed by that sample. Preserving the expected gradient
does not preserve the law of the online learning process. Stochastic
softmax policy gradient can therefore follow very different learning paths despite the
same local expected direction \citep{mei2021stochasticity}. In particular, baseline choices
alter the signs and aggressiveness of sampled updates, not merely their variance
\citep{chung2021beyond,mei2022baselines}. 

We study \emph{parallel learning across multiple environments within one task}. Procedural
generalization \citep{cobbe2020procgen}, dynamics randomization
\citep{peng2018dynamics}, and level curricula such as PLR, ACCEL, and PATH
\citep{jiang2021plr,parkerholder2022accel,liu2026active} repeatedly sample environment
variants while learning one policy that must work across them. A shared actor is therefore
the objective. Standard Procgen and level replay implementations also use one value
function across sampled levels
\citep{cobbe2020procgen,jiang2021plr,raileanu2021idaac}.

The key counterintuitive point is that \emph{the same task does not imply the same state value}.
Let $S$ denote the input or representation visible to the critic. For a fixed policy $\pi$,
define the value in environment $z$ as
$V_z^\pi(s):=\E_\pi[G\mid S=s,Z=z]$.
A value function shared across environments but without access to $Z$ is associated with
the marginalized value
\[
\bar V^\pi(s):=\E_\pi[G\mid S=s]
=\sum_z q_z^\pi(s)V_z^\pi(s),
\qquad
q_z^\pi(s):=\Prob_\pi(Z=z\mid S=s).
\]
Different dynamics or horizons can therefore produce different continuation values even at
comparable reward scales.
We suppress the dependence of $q_z^\pi(s)$ on the rollout distribution when it is clear.
The resulting value mismatch in environment $z$ is
$e_z^\pi(s):=V_z^\pi(s)-\bar V^\pi(s)$. Thus a shared value can be correct for the
information available to the critic and remain unbiased as a baseline, yet systematically
miscenter samples within individual environments.
Figure~\ref{fig:simplex} previews the resulting learning dynamics in the smallest setting
that retains this mismatch. The three processes use the same shared actor and environment
distribution; they differ only in how the sampled return is centered.

\begin{center}
\begin{minipage}{\linewidth}
\centering
\includegraphics[width=\linewidth]{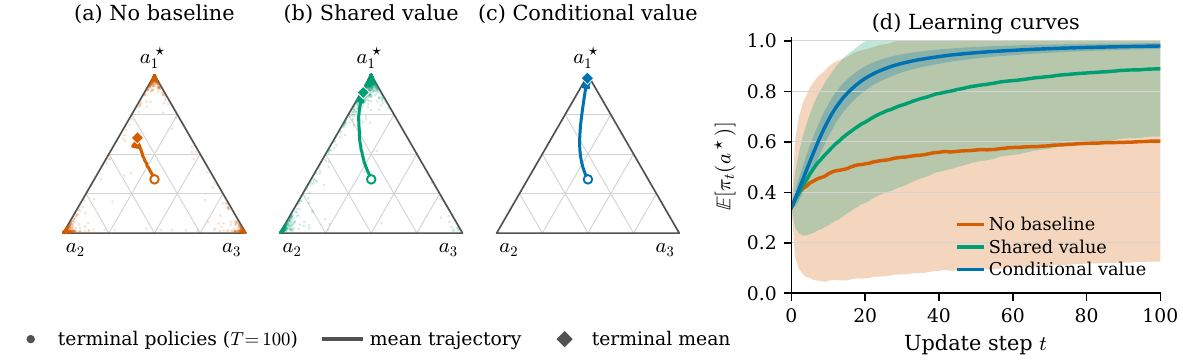}
\captionof{figure}{\textbf{Same destination, different sampled paths.}
A shared softmax actor is trained across two three-arm bandits with a common optimal arm.
The panels compare raw returns (no baseline), the shared value $\bar V^{\pi_t}$, and the
sampled environment's value $V_{Z_t}^{\pi_t}$. (a--c)~Policies after $T=100$ updates and
their mean trajectories. (d)~Mean optimal-arm probability across $4{,}000$ paired runs,
with one standard deviation. Rewards are $(1,0.9,0.8)$ and $(0.4,0.2,0)$,
$q_E=q_H=1/2$, $\eta=2$, and $\pi_0=(0.34,0.33,0.33)$.
Proposition~\ref{prop:convergence} later shows that all three processes converge to the
same optimal corner; the figure exposes their separation after only $100$ updates.}
\label{fig:simplex}
\end{minipage}
\end{center}

Conditioning only the critic has close architectural and theoretical precedents. AACC supplies
simulator factors only to the critic to support adaptation under changing dynamics
\citep{yue2024aacc}, while PAMDP conditions a dual critic on profiles for persona alignment
\citep{yang2026pamdp}. Prior theory establishes conditional value identities, unbiased
expected gradients, and aliasing benefits for privileged critics
\citep{baisero2022unbiased,li2024dcrl,lambrechts2025theoretical,ebi2026iaac}.
Our question concerns a different consequence: when environment-specific values differ,
how does their shared marginal redistribute realized updates along the closed-loop learning
path?
Our contributions are twofold. First, we explain value mismatch through the resulting
learning dynamics. In illustrative bandits (Figure~\ref{fig:simplex}), the three baselines have the
same expected update at a fixed policy and the same asymptotic destination but sharply
different sampled paths. We
characterize when shared value estimation reinforces suboptimal samples or attenuates and
reverses optimal ones. Second, this account motivates a deliberately simple intervention:
provide only an arbitrary logged environment index to the critic, allowing it to represent
a distinct value for each environment without changing the actor. For instance, controlled CartPole and
MuJoCo experiments exhibit value and advantage signatures consistent with the mechanism, and the conditioned
critic consistently outperforms the shared one. In the more complex
BipedalWalker and Procgen settings, conditioning also yields substantial empirical gains: the two designs of
conditioned critics raise the final mean return on unseen BipedalWalker terrains from $90.7$
(shared critic) to $155.6$ and $190.4$, and improve the aggregate normalized Procgen return
on unseen levels over the shared critic by $21.2\%$ and $40.8\%$, respectively. These results show that correcting value mismatch can be useful
across several forms of parallel environment training.

\section{Related work}
\label{sec:related}

\paragraph{What do baselines do?}
The classical account is variance reduction of the policy gradient estimator~\citep{williams1992reinforce,weaver2001optimal,greensmith2004variance}, refined by
control variates that depend on the action and Stein control variates~\citep{gu2017qprop,liu2018steincv,
wu2018variancereduction}. On common benchmarks, \citet{tucker2018mirage} found that the
learned baselines that depend on the action did not reduce variance beyond a
baseline conditioned only on state, and traced previously reported gains to implementation differences.
A second line studies the coupled process of sampling and updating: committal behavior~\citep{chung2021beyond}, update aggressiveness~\citep{mei2021stochasticity,
mei2022baselines}, softmax policy gradient convergence~\citep{mei2020global,li2021softmax,
agarwal2021policygradient,mei2023succeeds}, and REINFORCE convergence at any fixed
learning rate~\citep{robertson2025reinforce}. Our analysis builds directly on this view.
\citet{chung2021beyond} show in a single environment that baseline placement, rather than
variance alone, controls sampled signs and committal behavior. 
We show how sharing a value
estimate creates a structured placement error across environments: one population value
can be too low in environments with high values and too high in those with low values.
This reintroduces rival reinforcement despite using a value baseline and also suppresses
useful samples.

\paragraph{Multitask scaling and environment variation.}
PopArt normalizes heterogeneous value targets while preserving their unnormalized
predictions \citep{vanhasselt2016popart}. Multitask PopArt combines a shared policy with
value outputs indexed by task \citep{hessel2019popartmultitask}, making it a direct
architectural precedent for sharing an actor without fully sharing its critic. When a
critic without task information is shared across tasks with widely different return
scales, it must pool widely separated value targets, making scale heterogeneity an
especially visible source of value mismatch. We isolate the subtler case in which
environment variation creates different continuation values at comparable reward scales.
Target normalization controls target magnitude, whereas critic conditioning separates
environment-specific values; the two interventions address complementary aspects of the
interference.

\paragraph{Conditional and asymmetric critics.}
Conditioned values and information available only to the critic are established designs
\citep{schaul2015uvfa,pinto2018asymmetric,hu2024privileged}. Under partial observability,
\citet{baisero2022unbiased} establish a conditional value identity and an unbiased
asymmetric policy gradient; DCRL combines a critic using only history with one using both history and state to study a
variance tradeoff \citep{li2024dcrl}; and IAAC treats general privileged signals and studies
expected gradient validity and informativeness \citep{ebi2026iaac}. In complementary
theory for finite training horizons, privileged critics remove agent-state aliasing terms from linear
actor--critic bounds \citep{lambrechts2025theoretical}. These works explain validity, variance,
or aliasing benefits. We instead study how environment-dependent value offsets redistribute
realized updates even when the expected direction at a fixed policy is unchanged.

AACC is a close precedent for conditioning the critic on environment information: it learns an encoding of continuous
simulator factors for the critic and studies adaptation under changing dynamics
\citep{yue2024aacc}. Its formulation also relates values conditioned on simulator factors to their
marginal over observations alone. PAMDP instead uses a dual critic conditioned on profiles for persona
alignment \citep{yang2026pamdp}. We use an arbitrary categorical environment index, without
physical parameters, ordering, or profile semantics, as a controlled intervention on value
estimation. Our contribution is a complementary account of the learning dynamics: environment
information changes how the same mean policy gradient update is distributed across sampled
branches and thereby changes the realized path. When one critic prediction represents
environments with different futures, the same bar can promote suboptimal samples in some
environments and attenuate or reverse optimal samples in others. This is a structured
instance of perceptual aliasing \citep{chrisman1992aliasing,singh1995softaggregation}; we
characterize its consequences for sampled paths and test the index intervention across parallel
RL benchmarks.

\section{Problem setting: one policy, many environments}
\label{sec:setting}

\paragraph{An explanatory bandit model with multiple environments.}

We isolate the effect of critic sharing in the smallest model that retains a value
that depends on the environment.  Let $\mathcal Z$ be a finite collection of environments and
$\mathcal A$ a finite set of $K\ge2$ arms.  At round $t$, an environment index
$Z_t\sim q$ is drawn independently, where $q_z>0$ and $\sum_zq_z=1$.  The actor is a
single softmax actor
$\pi_t(a)\propto\exp\theta_t(a)$ that samples $a_t\sim\pi_t$ without observing
$Z_t$.  Environment $z$ assigns a deterministic scalar reward $r_z(a)$ to
arm $a$; because both sets are finite, the reward table is uniformly bounded.

We assume that the environments share one strict optimal arm: $r_z(a^*)>r_z(i)$ for
every $z\in\mathcal Z$ and $i\ne a^*$. Because $a^*$ is optimal in every environment,
any sampled update that decreases $\pi(a^*)$ cannot be attributed to conflicting
objectives across environments; it reflects how that sampled branch is centered and
updated.

For a policy $\pi$, define the environment-specific value
$V_z^\pi:=\E_{a\sim\pi}[r_z(a)]=\sum_a\pi(a)r_z(a)$ and its shared marginal
$\bar V^\pi:=\E_{Z\sim q}[V_Z^\pi]=\sum_zq_zV_z^\pi$.
At round $t$, we compare the three oracle baselines $B_t^0=0$,
$B_t^{\mathrm{shared}}=\bar V^{\pi_t}$, and
$B_t^{\mathrm{cond}}=V_{Z_t}^{\pi_t}$.
These oracle quantities isolate the effect of centering from critic fitting; the experiments
study the same intervention with learned critics.

\begin{algorithm}[h]
\caption{Softmax Policy Gradient across Environments}
\label{alg:eispg}
\begin{algorithmic}[1]
\Require $q$, rewards $\{r_z\}$, step size $\eta$, baseline $B\in\{B^0,B^{\mathrm{shared}},B^{\mathrm{cond}}\}$
\For{$t=0,1,2,\dots$}
  \State draw $Z_t\sim q$ and $a_t\sim\pi_{\theta_t}$
  \State observe $G_t=r_{Z_t}(a_t)$ and evaluate the baseline $B_t$
  \State $\theta_{t+1}\gets\theta_t+\eta\,(G_t-B_t)(\mathbf e_{a_t}-\pi_{\theta_t})$
\EndFor
\end{algorithmic}
\end{algorithm}

\noindent Algorithm~\ref{alg:eispg} is the complete stochastic process analyzed in the theory.
For the full softmax logit parameterization,
$\nabla_\theta\log\pi_\theta(a_t)=\mathbf e_{a_t}-\pi_\theta$,
so Line~4 is one sampled REINFORCE update with an action-independent baseline.

The bandit is an explanatory abstraction. At a matched input visible to the critic in an MDP,
$r_z(a)$ represents the return for action $a$ in environment $z$. The
abstraction isolates value mismatch from state visitation, critic estimation error, and
function approximation.

\paragraph{Why convergence first.}
We first establish a common asymptotic result to isolate path effects over a finite horizon
and to prove the two facts required below: entry into a near-optimal region and infinite exploration.

\begin{proposition}[A common destination and a rate for time averages]
\label{prop:convergence}
Consider Algorithm~\ref{alg:eispg} with finite initial logits, the common strict optimal arm
defined above, and any fixed learning rate $\eta\in(0,\infty)$. For each
$B\in\{B^0,B^{\mathrm{shared}},B^{\mathrm{cond}}\}$,
\begin{equation}
\label{eq:prop1-convergence}
\Prob\!\left(
\lim_{t\to\infty}\pi_t^B(a^*)=1
\right)=1.
\end{equation}
Moreover, given $\bar r(a):=\sum_z q_z r_z(a)$, for each such $B$,
almost surely there exist finite random constants $C_B$ and
$T_{0,B}$ such that, for every integer
$T\ge \max\{T_{0,B},2\}$,
\begin{equation}
\label{eq:prop1-rate}
\frac{1}{T}\sum_{t=0}^{T-1}
\left[
\bar r(a^*)-\sum_a\pi_t^B(a)\bar r(a)
\right]
\le
C_B\frac{\log T}{T}.
\end{equation}
\end{proposition}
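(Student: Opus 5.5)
The plan is to treat all three baselines at once by viewing Algorithm~\ref{alg:eispg} as single-environment stochastic REINFORCE on the averaged bandit with mean rewards $\bar r(a)$. The only randomness beyond the action comes from the i.i.d.\ reward noise $r_{Z_t}(a)-\bar r(a)$.

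\textbf{Step 1: common mean update and bounded increments.} For every $B\in\{B^0,B^{\mathrm{shared}},B^{\mathrm{cond}}\}$, $B_t$ does not depend on $a_t$ given $(Z_t,\mathcal F_t)$, and $Z_t\perp a_t$. Hence
\[
\E[\theta_{t+1}-\theta_t\mid\mathcal F_t]=\eta\,\pi_t\circ(\bar r-\pi_t^\top\bar r)=\eta\nabla J(\theta_t),\qquad J(\theta):=\sum_a\pi_\theta(a)\bar r(a).
\]
Moreover $|G_t-B_t|\le 2R$ with $R:=\max_{z,a}|r_z(a)|$, so all logit increments are bounded by $4\eta R$. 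Every later step uses only these two facts, so one proof covers the three baselines; the constants $C_B,T_{0,B}$ differ only through the realized path.

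\textbf{Step 2: almost-sure convergence.} The quantity to track is the odds ratio $\rho_t:=\sum_{i\ne a^*}e^{\theta_t(i)-\theta_t(a^*)}=(1-\pi_t(a^*))/\pi_t(a^*)$. Only differences of logits move:
\begin{itemize}[leftmargin=*]
\item sampling $a^*$ multiplies $\rho_t$ by $e^{-\eta A_t}$, where $A_t:=G_t-B_t$;
\item sampling $i$ multiplies the $i$th summand by $e^{\eta A_t}$.
\end{itemize}
First, by the conditional Borel--Cantelli lemma (L\'evy's extension), every arm with $\sum_t\pi_t(a)=\infty$ is pulled infinitely often. Second, because the gaps are strict in every environment, the conditional mean of $\theta(a^*)-\theta(i)$ picks up a positive drift $\eta\pi_t(a^*)(\bar r(a^*)-J_t)$ plus a term that is nonnegative whenever $\pi_t(i)$ is small. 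Third, I would show that $\log\pi_t(a^*)$ cannot drift to $-\infty$. Combining a stopping-time argument with these two facts, the path a.s.\ enters a region $\{\pi(a^*)\ge 1-\epsilon\}$. Inside that region I would show that $\rho_t$ is a nonnegative supermartingale up to the exit time, with exit probability bounded away from $1$. Repeated attempts then give eventual permanence, and $\rho_t\to0$ a.s.

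\textbf{Step 3: the $\log T/T$ rate.} The instantaneous regret is $\sum_{i\ne a^*}\pi_t(i)(\bar r(a^*)-\bar r(i))\le C\rho_t$. After the a.s.\ finite entry time $T_0$, I would Taylor-expand $\E[\rho_{t+1}\mid\mathcal F_t]$ with the bounded increments from Step 1. The aim is the recursion
\[
\E[\rho_{t+1}\mid\mathcal F_t]\le\rho_t-c\,\rho_t^2,
\]
with $c>0$ depending on $\eta$, the minimal gap and $R$. Here the $\rho_t^2$ term arises because the suboptimal arms are sampled with probability about $\rho_t$ and each such sample carries a negative advantage of order the gap. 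In the same spirit as \citet{robertson2025reinforce}, the recursion yields $\rho_t\le C'/t$ pathwise. Summing $C\rho_t$ over $t$ gives $\sum_{t<T}(\text{regret}_t)\le C_B\log T$, with the pre-$T_0$ regret absorbed into $C_B$.

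\textbf{Main obstacle.} The hardest step is Step 2 for a large fixed $\eta$. The mean drift is an ascent direction, but the second-order term $\eta^2\|g_t\|^2$ does not vanish. For $B^0$ and $B^{\mathrm{shared}}$, moreover, a rare suboptimal sample with positive raw or shared-centered return can make a large committal jump. My first attempt is to work directly with the multiplicative dynamics of $\rho_t$ rather than with $J$: its one-step factors are bounded in $[e^{-4\eta R},e^{4\eta R}]$, and I would show $\E[\log\rho_{t+1}-\log\rho_t\mid\mathcal F_t]<0$ uniformly on $\{\rho\le\epsilon\}$. If that drift computation fails for some baseline, I would fall back on Robertson's fixed-step REINFORCE argument, treating $r_{Z_t}$ as bounded stochastic rewards with mean $\bar r$.
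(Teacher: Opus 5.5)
Your proposal has a genuine gap in Steps~2--3. Both steps need $\rho_t$ (or, in your fallback, $\log\rho_t$) to have negative drift near the optimum, and for $B^0$ and $B^{\mathrm{shared}}$ this is false at large fixed $\eta$.

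The branch algebra is part of the problem. Write $p_t:=\pi_t(a^*)$. With the score $\mathbf e_{a_t}-\pi_t$, sampling $a^*$ multiplies the $i$th summand of $\rho_t$ by $e^{-\eta A_t(1-p_t+\pi_t(i))}$, not by $e^{-\eta A_t}$, and this exponent vanishes at the corner. Sampling rival $j$ multiplies its own summand by $e^{\eta A_t(1+p_t-\pi_t(j))}\approx e^{2\eta A_t}$ and every other rival summand by about $e^{\eta A_t}$. At $p_t=1-\varepsilon$ both branches therefore contribute at order $\varepsilon^2$. The rival term involves $\E[e^{2\eta A_t}\mid\mathcal F_t,a_t=j]-1$, which depends on the whole conditional law of $A_t$, not only on its mean; that law is exactly where the three baselines differ.

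Two examples show the failure. With one environment, rewards $(1,0.9)$, $B^0$ and $\eta=2$, one gets $\E[\rho_{t+1}-\rho_t\mid\mathcal F_t]=(e^{3.6}-5)\varepsilon^2+O(\varepsilon^3)>0$. With $r_E=(1,0.9)$, $r_H=(0.4,0.2)$, $q_E=q_H=\tfrac12$, $B^{\mathrm{shared}}$ and $\eta=2$, the optimal branch is $O(\varepsilon^3)$ because $\E[A_t\mid\mathcal F_t,a_t=a^*]=O(\varepsilon)$. A rival sample has centered return $+0.2$ or $-0.5$, so the drift is $\bigl(\tfrac12(e^{0.8}+e^{-2})-1\bigr)\varepsilon^2\approx0.18\,\varepsilon^2>0$.

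So $\rho_t$ is a submartingale near the optimum. The exit-probability argument and the recursion $\E[\rho_{t+1}\mid\mathcal F_t]\le\rho_t-c\rho_t^2$ both fail. Your premise that rival samples carry negative advantage holds only for $B^{\mathrm{cond}}$.

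The $\log\rho_t$ fallback has negative drift for $K=2$, where it is minus one logit margin, but not for $K\ge3$. On a rival-$j$ branch, $\log\rho_t$ moves by about $\eta A_t+\log\bigl(1+w_j(e^{\eta A_t}-1)\bigr)$, with $w_j:=\pi_t(j)/(1-p_t)$. That is roughly $2\eta A_t$ when $\eta A_t\gg1$ but only $\eta A_t$ when $\eta A_t\ll-1$. For example, $r_E=(1,0.95,0.95)$, $r_H=(0.4,0.39,0.39)$, equal rival masses, $B^{\mathrm{shared}}$ and $\eta=10$ give $\E[\log\rho_{t+1}-\log\rho_t\mid\mathcal F_t]\approx+0.32\,\varepsilon$.

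Three further steps are incomplete. First, a valid recursion in conditional expectation would only control $\E\rho_t$; it does not yield a pathwise $\rho_t\le C'/t$. Second, entry into $\{\pi(a^*)\ge1-\epsilon\}$ is asserted rather than proved. It needs infinite exploration of every arm, which the paper gets from logit conservation; L\'evy's lemma only turns $\sum_t\pi_t(a)=\infty$ into infinitely many pulls. Third, Robertson's argument applies directly only to $B^0$; under the other baselines the law of $r_{Z_t}(a)-B_t$ moves with $\pi_t$.

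The missing idea is to track only \emph{linear} statistics of $\theta$: single logits and optimal--rival margins. Their drifts depend only on the common mean update from your Step~1, which is correct and is the paper's key reduction, so the predictable shift cancels exactly. Their variances are self-bounded by their drifts. With $C:=2R$ and $\delta,L$ the smallest and largest gaps of $\bar r$:
\begin{itemize}
\item $\Var_t(\Delta\theta_t(a^*))\le(\eta C^2/\delta)\,\E_t[\Delta\theta_t(a^*)]$ everywhere;
\item in a cone $\{p_t\ge\rho\}$ with $\rho\in(1-\delta/L,1)$ and $\gamma:=\delta-L(1-\rho)$, each rival has drift at most $-\eta\gamma\pi_t(i)$ and variance at most $\eta C^2/\gamma$ times that magnitude.
\end{itemize}
A time-uniform Freedman--Ville bound then gives finite buffers and eventual domination of the martingale part by the compensator. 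Infinite exploration makes the compensators diverge, so $\theta_t(a^*)\to+\infty$. A barrier argument then reaches and traps the cone, and every rival logit tends to $-\infty$.

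The rate needs no per-step bound on $\rho_t$. Normalize $\theta_0(a^*)=0$ and wait until the rival logits are nonpositive. Then $\theta_t(a^*)\ge\kappa\sum_{s=\tau}^{t-1}(1-p_s)$ with $\kappa=\eta\delta/4$, and $1-p_t\le(K-1)e^{-\theta_t(a^*)}$. This exponential feedback bounds the cumulative suboptimality after $\tau$ by $\kappa^{-1}\log\bigl(1+(K-1)(e^{\kappa}-1)(T-\tau)\bigr)$ pathwise.
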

Proposition~\ref{prop:convergence} serves two roles. First, from any initialization with
finite logits and for any fixed finite $\eta>0$, all three processes reach the same optimal
policy, and each has an $O(\log T/T)$ upper bound on time-averaged suboptimality. Their separation
under finite training budgets must therefore come from how they travel, not from their final destination.
Second, the proposition and its proof establish entry into a near-optimal region and infinite
exploration for Propositions~\ref{prop:single-ratchet} and~\ref{prop:pool-ratchet};
under their stated conditions, the regimes of eventual ratcheting and recurring drawdowns are
reached almost surely rather than merely characterized conditionally. These quantifiers
also show that, whenever the mismatch condition holds, the resulting pathwise effect is not an
artifact of a favorable initialization or of choosing a small step size. Learning rate can still
control the severity over a finite horizon: in the fixed Appendix instance, larger $\eta$ amplifies
the updates on reversed branches without implying a general monotone ordering of return across learning
rates (Figure~\ref{fig:learning-rate-sweep}).

At a fixed policy, averaging over the sampled arm within each environment cancels every
action-independent baseline, and averaging over the environment leaves only the pooled reward vector
$\bar r$. Thus the three processes share the same expected update when evaluated at
the same policy. Their sampled updates nevertheless place them at different policies,
so later rounds evaluate that common mean direction at different points. The common
arm $a^*$ is the unique maximizer of $\bar r$. Appendix~\ref{app:prop1} derives the
exact identity for the mean update and gives a self-contained proof of
Proposition~\ref{prop:convergence}, while explaining its relation to
\citet{robertson2025reinforce}.
The rate is a statement about a time average within each process: it neither orders the last iterates
at a finite $T$ nor forces the random entrance times and constants to agree across
baselines. The realized stochastic processes can therefore differ sharply at any finite
time.
Figure~\ref{fig:simplex}
runs the exact process of Algorithm~\ref{alg:eispg} on an instance with three arms and two environments: after
$T=100$ updates, the three schemes occupy sharply different regions of the simplex even
though all converge to the same corner asymptotically.  The rest of the theory explains this separation
through realized update branches.

\section{Baselines change the online update dynamics}
\label{sec:mechanics}

Proposition~\ref{prop:convergence} establishes the common destination and bounds an
optimality gap averaged over time; it does not determine how Algorithm~\ref{alg:eispg} travels.
At a fixed policy, every action-independent baseline yields the identical expected update,
but the
algorithm never takes that expected step: each round draws a single action and applies
the realized update of that branch alone. Baselines that agree in expectation can
therefore still differ in the magnitude and even the sign of individual realized updates.
Because the policy determines which actions are sampled and those samples in turn update
the policy, these branch differences accumulate into distinct stochastic processes.

Variance gives the classical aggregate account of stochastic optimization.  For
unbiased estimators with the same mean, smaller variance tightens the smooth SGD lower
bound on expected improvement after one step and, with standard continuity and boundedness
conditions, yields sharper convergence guarantees \citep{bottou2018optimization}.
Yet this account is weak as an explanation of the learning process in parallel RL: even in the
elementary bandit, the exact minimum variance baseline can produce a slower and less
stable approach over finitely many steps than the ordinary conditional value baseline
(Appendix~\ref{app:variance-view}).  The scalar variance does not record which sampled
environment and action branches are reinforced.  We therefore need a more interpretable
theory that tracks the coupling between sampling by the policy and updates from individual
samples. This coupling determines the realized optimization path and directly affects
performance under the finite training budgets used in reinforcement learning.

\subsection{One environment: an eventual ratchet}
\label{sec:single-ratchet}

A value baseline is a moving bar: only an arm whose reward exceeds the current value is
reinforced.  In two arms, both possible samples therefore move the policy toward the
better arm; without a baseline, sampling a rival with positive reward reinforces the mistake.
The same distinction eventually holds for any finite number of arms. In the case of one environment, the shared and conditional baselines coincide, and we write $\pi^V$ and $\pi^0$ for the policies updated with the value baseline and without a baseline, respectively.

\begin{proposition}[Eventual ratchet versus persistent drawdowns]
\label{prop:single-ratchet}
Consider a deterministic finite bandit with finite initial logits, a unique optimal arm
$a^*$, and any fixed $\eta>0$.  Under the oracle value baseline,
\begin{equation}
\Prob\!\left(\exists T<\infty:\
\pi^V_{t+1}(a^*)>\pi^{V}_t(a^*)\ \text{for every }t\ge T\right)=1.
\label{eq:single-ratchet}
\end{equation}
If at least one rival has positive reward, the process without a baseline instead satisfies
\begin{equation}
\Prob\!\left(\pi^{{0}}_{t+1}(a^*)<\pi^{{0}}_t(a^*)\ \text{infinitely often}\right)=1.
\label{eq:no-eventual-ratchet}
\end{equation}
\end{proposition}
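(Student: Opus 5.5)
Both claims reduce to how a single sampled update changes $\pi(a^*)$, combined with the almost-sure convergence and infinite exploration supplied by Proposition~\ref{prop:convergence}, applied here with one environment.

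\textbf{The one-step update.} With a baseline $B$, the realized logit step after sampling arm $a$ is $\eta(r(a)-B)(\mathbf e_a-\pi)$. If $a=a^*$, then $\theta(a^*)$ increases relative to every other logit by $\eta(r(a^*)-B)$. If $a=i\ne a^*$, then $\theta(i)$ increases relative to all other logits, including $\theta(a^*)$, by $\eta(r(i)-B)$, and all the remaining logits move together. Since all logits other than the sampled one shift by the same amount, the sign of the change in $\pi(a^*)$ is determined by the sign of the coefficient:
\begin{itemize}
\item sampling $a^*$ increases $\pi(a^*)$ iff $r(a^*)-B>0$;
\item sampling $i\ne a^*$ decreases $\pi(a^*)$ iff $r(i)-B>0$.
\end{itemize}

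\textbf{The ratchet \eqref{eq:single-ratchet}.} Take $B=V^{\pi_t}=\sum_a\pi_t(a)r(a)$. By Proposition~\ref{prop:convergence}, $\pi^V_t(a^*)\to1$ almost surely, so $V^{\pi_t}\to r(a^*)$. We need two things eventually:
\begin{itemize}
\item $r(a^*)-V^{\pi_t}>0$. This holds at every $t$ with $\pi_t(a^*)<1$, because $V$ is a strict convex combination and $a^*$ is the unique maximizer. Finite logits give $\pi_t(a^*)<1$ always.
\item $r(i)-V^{\pi_t}<0$ for every rival $i$. Let $\delta:=r(a^*)-\max_{i\ne a^*}r(i)>0$. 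Once $V^{\pi_t}>r(a^*)-\delta$, every rival sample has a strictly negative coefficient, so it lowers $\theta(i)$ relative to the rest and strictly raises $\pi(a^*)$. This threshold is reached at some almost surely finite $T$ by the convergence.
\end{itemize}
For $t\ge T$, every possible sample therefore strictly increases $\pi(a^*)$, which proves \eqref{eq:single-ratchet}. A rival whose reward is tied with another rival causes no difficulty, since only $r(a^*)$ must be unique.

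\textbf{Persistent drawdowns \eqref{eq:no-eventual-ratchet}.} Take $B=0$ and fix a rival $i$ with $r(i)>0$. Each time $i$ is sampled, $\pi(a^*)$ strictly decreases. It therefore suffices to show that $i$ is sampled infinitely often almost surely. This is the \emph{infinite exploration} fact the paper says Proposition~\ref{prop:convergence}'s proof establishes.

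If that fact is not available in exactly this form, I would prove it via the conditional Borel--Cantelli (Lévy) lemma. The goal is to show $\sum_t\pi^0_t(i)=\infty$ almost surely.

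\emph{Bounding the logit gap.} Consider the gap $\theta_t(a^*)-\theta_t(i)$. It changes only when $a^*$ or $i$ is sampled, and each such change is bounded by $\eta R$ with $R:=\max_a|r(a)|$. So the gap grows by at most $\eta R$ per sample of $a^*$.

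\emph{Concluding divergence.} Then $\pi_t(i)\ge \pi_t(a^*)e^{-(\theta_t(a^*)-\theta_t(i))}$, and the gap is at most $c+\eta R\,t$. This crude bound alone gives summands like $e^{-\eta R t}$, which are summable, so it is insufficient. A finer argument is needed. On the event $\{\pi_t(a^*)\to1\}$, the expected drift of $\theta(a^*)-\theta(i)$ is
\[
\eta\bigl[\pi(a^*)(r(a^*)-r(i))(1-\pi(a^*)-\pi(i))+\dots\bigr],
\]
which is driven by the deficit $1-\pi(a^*)$. Since the gap grows only like $\log t$, we get $\pi_t(i)\gtrsim 1/t$ and the sum diverges. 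I would reuse this logarithmic-growth estimate from the $O(\log T/T)$ rate proof of Proposition~\ref{prop:convergence}. That rate bounds $\sum_t(1-\pi_t(a^*))\lesssim\log T$, which in turn bounds the total expected increase of the gap by $O(\log T)$. A martingale law-of-large-numbers argument controls the fluctuation term.

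\textbf{Main obstacle.} Proving $\sum_t\pi^0_t(i)=\infty$ almost surely for the no-baseline process is where I expect the main difficulty; the ratchet part is routine once convergence is granted.
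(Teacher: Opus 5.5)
Your plan has the same skeleton as the paper's proof: eventual convergence from Proposition~\ref{prop:convergence}, a sign analysis branch by branch, and infinite exploration. But the one-step lemma that both halves rest on is false once $K\ge3$. Under the full softmax score, sampling $a$ moves every non-sampled logit by $-\eta A\pi(j)$, where $A=r(a)-B$. Those logits therefore do \emph{not} move together.

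Write $p:=\pi(a^*)$ and $x_j:=\theta(a^*)-\theta(j)$.
\begin{itemize}
\item A sample of $a^*$ gives $\Delta x_j=\eta A(1-p+\pi(j))$. The sign is as you say; only the magnitude is off.
\item A sample of rival $i$ gives $\Delta x_i=-\eta A(1+p-\pi(i))$. For every other rival $j$ it gives $\Delta x_j=\eta A(\pi(j)-p)$, which agrees in sign with $\Delta x_i$ only when $\pi(j)<p$.
\end{itemize}

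Two concrete counterexamples, both with $\pi=(0.1,0.1,0.8)$ and $\eta=1$:
\begin{itemize}
\item Value baseline, $r=(1,0,0)$. Here $V^\pi=0.1>r_{(2)}=0$, so your ratchet threshold holds. Yet sampling arm $2$ (coefficient $-0.1$) changes the logits by $(0.01,-0.09,0.08)$ and lowers $\pi(a^*)$ to about $0.095$.
\item No baseline, $r=(1,0.5,0)$. Sampling the rival with positive reward raises $\pi(a^*)$ to about $0.12$. So ``each time $i$ is sampled, $\pi(a^*)$ strictly decreases'' is false as stated.
\end{itemize}

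The missing ingredient is the condition $p>1/2$, or more precisely $\pi(j)<p$ for every rival. Under it, every margin moves in the direction set by the sign of $A$, so $\pi(a^*)=(1+\sum_{j\ne a^*}e^{-x_j})^{-1}$ moves monotonically.
\begin{itemize}
\item For the ratchet, the paper builds this into the threshold $\bar p=\max\{1/2,(r_{(2)}-r_-)/(r_*-r_-)\}$. This forces both $V^\pi>r_{(2)}$ and $p>1/2$, and the region $\{p>\bar p\}$ is invariant.
\item For the drawdowns, the paper waits until $p_t>1/2$, which happens at a time that is finite almost surely by Proposition~\ref{prop:convergence}. After that, all but finitely many of the infinitely many samples of the rival with positive reward are strict drawdowns.
\end{itemize}
You can repair your argument by choosing $T$ late enough that $p_t>1/2$ as well, and by replacing the claim that the remaining logits move together with the margin computation above.

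For exploration, cite the argument in the proof of Lemma~\ref{lem:common shift}, as the paper does; it uses logit conservation and conditional Borel--Cantelli and needs no rate. Your fallback would not close. An $O(\log t)$ bound on the gap yields only $\pi_t(i)\gtrsim t^{-c}$, which is summable unless you also prove $c\le1$. The constants available from Corollary~\ref{cor:common shift-rate}, with $\kappa=\eta\delta/4$, give $c$ of order $4L/\delta\ge4$.
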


Both chains converge to $a^*$, but only value centering eventually turns every sample
into progress.  Appendix~\ref{app:single-ratchet} gives the threshold, branch algebra,
and argument based on infinite exploration behind these two probability statements.

\subsection{Multiple environments: the shared offset}
\label{sec:mismatch}

We now extend to multiple environments. Recall that $S$ denotes the input or representation
visible to the critic. Environment differences hidden from this representation can change
continuation values while forcing an unconditioned critic to assign them one marginalized
value $\bar V^\pi(s)=\sum_z q_z^\pi(s)V_z^\pi(s)$, where
$q_z^\pi(s)=\Prob_\pi(Z=z\mid S=s)$.
In the bandit model, this conditional mixture reduces to the fixed environment weight
$q_z$. Define the value mismatch by
$e_z^\pi(s):=V_z^\pi(s)-\bar V^\pi(s)$. For one sampled return $G$, the conditional and
shared advantages satisfy
\[
A^{\mathrm{cond}}:=G-V_z^\pi(s),
\qquad
A^{\mathrm{shared}}:=G-\bar V^\pi(s)
=A^{\mathrm{cond}}+e_z^\pi(s).
\]
Thus, for the same sampled environment and action, value mismatch shifts the scalar
multiplying the score vector by $e_z^\pi(s)$; Appendix~\ref{app:mismatchproofs} gives
the exact identity for the logit update. The offset is agnostic to its source: heterogeneous reward scales and
different dynamics can both produce value mismatch.
Three facts make this offset a genuine problem rather than a transient.
\emph{First, it need not fade as the policy improves}: for a fixed limiting mixture,
the offsets converge to constants that depend on the environment, with at least one nonzero
whenever the limiting optimal values are not all equal; an offset can even grow along
training, so better training does not repair it.
\emph{Second, it can systematically change update signs}: environments above the average
have $e_z^\pi(s)>0$, so
suboptimal draws can clear the bar and be reinforced --- the feedback branch of
Section~\ref{sec:single-ratchet} returns; environments below the average have
$e_z^\pi(s)<0$, so even the
optimal draw can fall below the bar and be suppressed.
\emph{Third, it makes the shared baseline differently aggressive across environments}: the
shared critic removes the value level averaged across environments, but within each
environment it misplaces the bar by $e_z^\pi(s)$. The conditional critic $V_z^\pi(s)$
removes this offset. We illustrate these effects below.

\par\smallskip
\noindent\begin{minipage}{\linewidth}
\centering
\includegraphics[width=\linewidth]{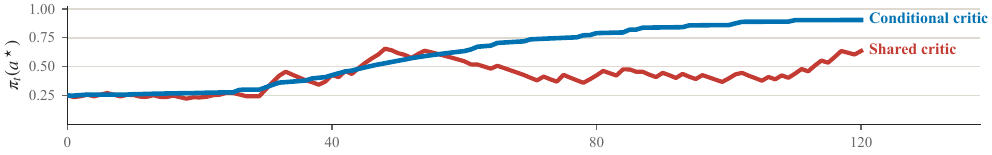}
\vspace{-9pt}
\captionof{figure}{A sample trajectory of Algorithm~\ref{alg:eispg} under the shared
and the conditional value baseline ($q_E=q_H=\tfrac12$, $r_E=(1,0.8,0.6)$,
$r_H=(0.4,0.2,0)$, $\eta=0.5$): the conditional path increases at every update
after an early transient, while the shared path keeps stepping backward.}
\label{fig:theory-path-intuition}
\end{minipage}
\par\smallskip

\begin{proposition}[Conditional ratchet versus persistent shared drawdowns]
\label{prop:pool-ratchet}
Consider Algorithm~\ref{alg:eispg} with finite initial logits and any fixed $\eta>0$.
Under the oracle conditional baseline,
\begin{equation}
\Prob\!\left(\exists T<\infty:\
\pi^{B^{\mathrm{cond}}}_{t+1}(a^*)>\pi^{B^{\mathrm{cond}}}_t(a^*)\ \text{for every }t\ge T\right)=1.
\label{eq:conditional-pool-ratchet}
\end{equation}
If the reward $r_z(a^*)$ of the optimal arm is not identical across environments, the shared
baseline process satisfies
\begin{equation}
\Prob\!\left(\pi^{B^{\mathrm{shared}}}_{t+1}(a^*)<\pi^{B^{\mathrm{shared}}}_t(a^*)\ \text{infinitely often}\right)=1.
\label{eq:shared-persistent-drawdown}
\end{equation}
\end{proposition}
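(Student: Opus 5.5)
Both parts rest on the exact branch algebra of one update. If arm $b$ is sampled with scalar $c=G_t-B_t$, the logit change is $\eta c(\mathbf e_b-\pi_t)$. Writing $p=\pi_t(a^*)$, the new probability is
\[
p'=\frac{p\,e^{\eta c(\ind{b=a^*}-p)}}{\sum_a \pi_t(a)e^{\eta c(\ind{b=a}-\pi_t(a))}}.
\]
Cancelling the common factor gives $p'=p\,e^{\eta c}/(1-p+pe^{\eta c})$ if $b=a^*$ and $p'=p/(1-\pi_t(b)+\pi_t(b)e^{\eta c})$ if $b\neq a^*$. Hence $p'>p$ exactly when $c>0$ on the $a^*$ branch, or when $c<0$ on a rival branch, and the strict reverse holds with the opposite sign (for $p\in(0,1)$). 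The whole proposition therefore reduces to the sign of $c$ on each (environment, arm) branch near the optimal corner.

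\emph{Conditional ratchet.} Set $\Delta:=\min_z\min_{i\neq a^*}(r_z(a^*)-r_z(i))>0$ and $R:=\max_{z,a}|r_z(a)|$. Then $r_z(a^*)-V_z^{\pi}=\sum_{i\ne a^*}\pi(i)(r_z(a^*)-r_z(i))>0$ whenever $\pi(a^*)<1$, so the $a^*$ branch is always positive. For a rival $i$,
\[
r_z(i)-V_z^\pi=-\pi(a^*)\bigl(r_z(a^*)-r_z(i)\bigr)+\sum_{j\ne a^*}\pi(j)\bigl(r_z(j)-r_z(i)\bigr)\le-\pi(a^*)\Delta+2R(1-\pi(a^*)).
\]
This is negative once $\pi(a^*)>p_0:=2R/(\Delta+2R)$, uniformly over $z$ and $i$. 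By Proposition~\ref{prop:convergence}, $\pi_t^{B^{\mathrm{cond}}}(a^*)\to1$ almost surely, so there is an almost surely finite $T$ after which $\pi_t(a^*)>p_0$. From then on every realized branch strictly increases $\pi_t(a^*)$; finite logits keep $\pi_t(a^*)<1$. This gives \eqref{eq:conditional-pool-ratchet}, in direct parallel with Proposition~\ref{prop:single-ratchet}.

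\emph{Shared drawdowns.} As $\pi\to\mathbf e_{a^*}$, $\bar V^\pi\to\bar r(a^*)$. Since the values $r_z(a^*)$ are not all equal, some environment $z_-$ has $r_{z_-}(a^*)<\bar r(a^*)$. The $(z_-,a^*)$ branch then has $c=r_{z_-}(a^*)-\bar V^{\pi_t}\to r_{z_-}(a^*)-\bar r(a^*)<0$, so there is an almost surely finite $T'$ after which this branch strictly decreases $\pi_t(a^*)$. The event $\{Z_t=z_-,a_t=a^*\}$ has conditional probability $q_{z_-}\pi_t(a^*)\ge q_{z_-}/2$ for $t\ge T'$, say. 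The conditional Borel--Cantelli (Lévy) lemma applied to the filtration of the process therefore shows that this branch occurs infinitely often almost surely, which yields \eqref{eq:shared-persistent-drawdown}.

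The main obstacle is making the almost-sure quantifiers clean: the thresholds $T,T'$ are random, so the Lévy extension of Borel--Cantelli is needed rather than independence. It must be applied on the almost-sure event from Proposition~\ref{prop:convergence}, using $\sum_t \Prob(\text{branch at }t\mid\mathcal F_t)=\infty$. Unlike the single-environment case, no infinite exploration of rivals is required here, because the drawdown comes from the frequently sampled optimal arm itself.
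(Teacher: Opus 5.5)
Your overall plan matches the paper's: a uniform threshold beyond which every conditional residual has the right sign, reached almost surely via Proposition~\ref{prop:convergence}, and, for the shared process, an eventual sign flip of the $(z_-,a^*)$ branch combined with L\'evy's conditional Borel--Cantelli lemma. The problem is the one-step lemma behind your conditional part, which is false as stated. The score vector $\mathbf e_b-\pi_t$ subtracts $\pi_t(a)$ from coordinate $a$, and this varies with $a$. So the factors $e^{-\eta c\pi_t(a)}$ are not common and do not cancel. Neither closed form for $p'$ is correct; they describe the update $\eta c\,\mathbf e_b$ instead.

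On the $a^*$ branch your sign conclusion still holds, because every margin $x_i:=\theta(a^*)-\theta(i)$ changes by $\eta c(1-p+\pi_t(i))$. On a rival branch $j$, however, $x_j$ changes by $-\eta c(1+p-\pi_t(j))$, while each other $x_i$ changes by $\eta c(\pi_t(i)-p)$. So when $c<0$, any rival more probable than $a^*$ gains on it, and $p$ can fall. For example, take $\pi_t=(0.1,0.1,0.8)$ with the optimal arm first. Sampling the second arm with $\eta c=-1$ gives $p'\approx0.057<0.1$, whereas your formula predicts $p'\approx0.107$. Your claim that $p'>p$ whenever $c<0$ on a rival branch therefore needs the extra condition $p>\max_{i\ne a^*}\pi_t(i)$.

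The repair is short and lands exactly on the paper's proof: argue through the margin increments above and require $p>1/2$. The paper builds this into its threshold $\bar p_c$. Your threshold in fact already suffices, but you have to say so. Your gap satisfies $\Delta\le 2R$, so $p_0=2R/(\Delta+2R)\ge1/2$, and $p>p_0$ forces $\pi_t(i)\le1-p<p$ for every rival. Then every branch strictly increases every margin, and hence increases $p=(1+\sum_{i\ne a^*}e^{-x_i})^{-1}$.

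Your shared-drawdown argument uses only the $a^*$ branch, where the sign equivalence is valid, and it is correct as written. The paper additionally shows that these drawdowns occur with asymptotic frequency $q_-$, which the statement does not require. One minor slip: in your rival-residual decomposition the sum should read $\sum_{j\ne a^*}\pi(j)(r_z(i)-r_z(j))$. The bound by $2R(1-\pi(a^*))$ is unaffected.
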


The proposition concerns realized updates, not only expected drift. Conditional centering
eventually makes every sample, regardless of its environment and arm, move the shared actor toward the optimum. Shared
centering converges to the same policy but continues to step backward whenever an
environment below the average supplies a sample of the optimal arm sufficiently late. Those events
have asymptotic frequency equal to the total sampling mass of environments below the
average, while their magnitudes vanish near the limit.
Mismatch severity controls when this regime begins and how strongly it acts: a larger
negative offset lets optimal samples from hard environments fall below the shared bar while the
policy is still farther from its limit, and produces larger reversals thereafter; a
larger positive offset lets more rivals sampled in easy environments clear the bar.
Appendix~\ref{app:mismatchproofs} gives the exact thresholds, magnitudes, and frequency
statement. More generally, the effect over a finite training horizon depends jointly on the current policy,
mismatch magnitude, and learning rate: larger mismatch can trigger reversals farther from
optimality, while a larger $\eta$ amplifies each reversed update and its effect on subsequent
sampling. Through this sampling and update feedback, earlier reversals and their larger
magnitudes can compound, producing a less favorable trajectory over a finite training horizon
even though the asymptotic destination is unchanged
(Figure~\ref{fig:theory-path-intuition}).

\paragraph{Why the mechanism harms real RL.}
In Algorithm~\ref{alg:eispg}, each sampled score contribution reinforces its arm exactly when the
centered return is positive --- committal behavior in the sense of
\citet{chung2021beyond}. Minibatch PPO aggregates and transforms many such contributions,
so this is a mechanism at the signal level rather than an exact claim about the net optimizer
step. In real RL the bar is learned. The oracle shared value above is defined by
marginalizing over the unobserved environment identity. Lemma~\ref{lem:projection}
separately connects this quantity to value fitting: under regression with squared error, a critic
that observes $s$ but not $z$ has $\bar V^\pi(s)$ as its population target. Thus the
shift across environments is present in the regression target itself rather than arising only
from noise due to finite samples. Bootstrapping, approximation, and PPO transformations affect
how closely a learned critic realizes that target. As mismatch grows, both sides worsen: mistakes in easy environments receive a
larger positive lift, while useful updates in hard environments are attenuated earlier and
more strongly.  This attenuation matters even before an advantage changes sign: once
the policy is sufficiently close to the optimal corner, weakening its frequent positive
updates can cost more probability than the stronger negative reinforcement of rare rivals
recovers (Lemma~\ref{lem:hard-attenuation} in Appendix~\ref{app:mismatchproofs}).
In practice, easy environments can supply many plausible but inferior
trajectories while useful trajectories from hard environments are rare; shared value estimation can reinforce
the former and attenuate or reverse the latter.  Conditioning removes this value offset,
though not genuine disagreement about the best action. Modern PPO commonly uses
Generalized Advantage Estimation (GAE), which interpolates between one-step bootstrapping
and Monte Carlo returns \citep{schulman2016gae,schulman2017ppo}. On the same rollout, define
$D_t^\lambda:=\widehat A_t^{\mathrm{shared},\lambda}
-\widehat A_t^{\mathrm{cond},\lambda}$.
Appendix~\ref{app:mismatchproofs} shows that $D_t^\lambda$ is a temporal filter of the
sequence of value mismatches. Thus GAE propagates and mixes mismatch across a rollout rather
than removing it.

Figure~\ref{fig:hero} shows the mechanism in a minimal example with learned critics:
two environments share one state (Figure~\ref{fig:hero}A). The shared critic mean
approaches the average across environments (Figure~\ref{fig:hero}B), shifting mean
advantages above zero in the easy environment and below zero in the hard environment
(Figure~\ref{fig:hero}C,D). The resulting policy trajectory appears in
Figure~\ref{fig:hero}E, while conditioning delivers advantages with the correct signs.
\begin{figure}[h]
\centering
\includegraphics[width=\linewidth]{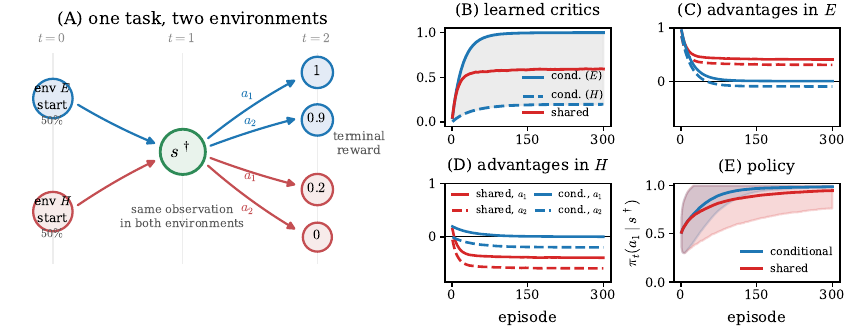}
\vspace{-10pt}
\caption{A minimal tabular example with \emph{learned} critics ($2{,}000$ seeds, $\eta=1$,
critic learning rate $\beta=0.1$, common random numbers; logits and both critics initialized at
zero, so the initial policy is uniform).
(A)~Two environments E and H, drawn $50\%$ each, share exactly one state $s^\dagger$;
the same actions lead to endpoints worth $(1,\,0.9)$ in $E$ and $(0.2,\,0)$ in $H$, so
$a_1$ is optimal in both.
(B)~The conditional critic means track each environment; the shared critic mean approaches
their average ($\to0.6$).
(C,D)~Mean advantages in $E$ and $H$: sharing promotes $E$'s wrong action ($+0.3$)
and suppresses $H$'s correct action ($-0.4$), while conditioning gives the correct signs.
(E)~$\pi_t(a_1\mid s^\dagger)$, mean$\,\pm\,$1 s.d.: the conditional baseline dominates
the shared one throughout training.}
\label{fig:hero}
\end{figure}

\subsection{Method: condition only the critic}
\label{sec:method}

The analysis prescribes a minimal intervention on the critic: give only a logged
environment index $z$ to the critic and leave the actor and sampling process unchanged.
The index is arbitrary and categorical; it does not explicitly provide geometry, ordering,
difficulty, physical parameters, or behavioral semantics. It only identifies which
value target for that environment the critic should fit. The intervention therefore
directly tests the optimization effect of correcting value mismatch.
We instantiate it with the
two basic conditioning designs available for a fixed set of environments:
\textbf{FiLM} \citep{perez2018film}, an affine scale and shift for each environment on shared critic
features, and a \textbf{multihead} critic, a shared encoder with one value head per
environment \citep{hessel2019popartmultitask}. Both are standard components: FiLM introduces two
additional vectors per environment, and a multihead critic introduces one environment-specific linear readout; the critic's forward pass is otherwise unchanged, and no additional
rollouts, updates, or inference machinery are required, since $z_t$ is a logged index
and requires no estimation. Both retain shared structure while allowing the critic to
represent the offset $e_z^\pi(s)$ identified above as the harmful object, and the actor never
receives $z$ during training. At deployment the critic and logged index are discarded,
leaving the same shared actor.

In the oracle model, conditioning removes the offset exactly. Learned critics only
approximate these values for individual environments; the experiments test whether the
same intervention improves the real RL optimization process.

\section{Experiments}
\label{sec:experiments}

The experiments are organized around three roles. CartPole
\citep{barto1983neuronlike} provides a controlled, end-to-end identification of the
value mismatch mechanism, from conflicting value targets to shifted advantages and the
resulting learning behavior.
MuJoCo tests whether the predicted advantage structure persists with continuous states,
function approximation, and hidden dynamics variation. BipedalWalker
\citep{brockman2016gym} and Procgen \citep{cobbe2020procgen} then evaluate the practical
value and scalability of critic conditioning across $100$--$200$ procedural environments.
All experiments use PPO with Generalized Advantage Estimation (GAE)
\citep{schulman2016gae,schulman2017ppo}. In every main comparison, only the critic is
modified: conditioned variants receive the logged environment index, while the actor,
sampling protocol, and PPO pipeline remain shared. Full architectures, protocols, and
hyperparameters are in
Appendix~\ref{app:expdetails}.

\subsection{CartPole}
\label{sec:cartpole}

CartPole uses two logged levels with the same observation space and reward function. In the
heterogeneous setting their gravities are $g\in\{10,50\}$; an identical control assigns
$g=10$ to both level identities. Figure~\ref{fig:cartpole}(a--c) shows why the
heterogeneous pair creates value mismatch: the same state and action lead to different
futures. Correspondingly, the learned multihead values separate by gravity, whereas the
shared value lies between them (Figure~\ref{fig:cartpole}d).

\begin{figure}[h]
\centering
\hspace*{-8mm}%
\includegraphics[width=\linewidth]{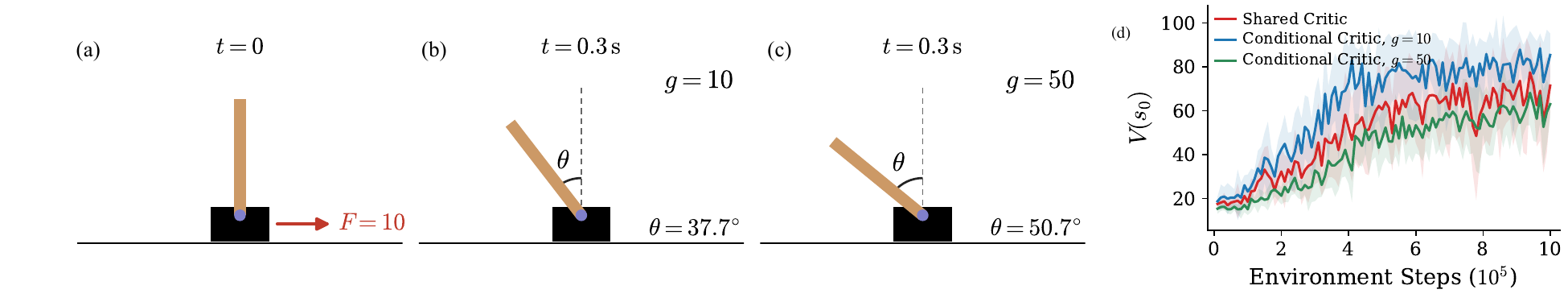}
\caption{CartPole with two gravities. (a--c)~From the same initial state and
the same push ($F=10$), $0.3$\,s of dynamics tilts the pole to $37.7^\circ$ under
$g=10$ and $50.7^\circ$ under $g=50$: one observation, two futures.
(d)~Learned value of the initial state over training: the multihead critic means separate
by gravity (blue: $g{=}10$ above, green: $g{=}50$ below), while the shared critic mean (red)
settles between them --- a signature consistent with the value mismatch mechanism. Curves
in (d) show mean$\,\pm\,$1 s.d.\ over $20$ seeds.}
\label{fig:cartpole}
\end{figure}

To identify what the critic must learn, we compare four variants: a shared critic; a
multihead critic routed by the true level identity; the same multihead critic with
every sample routed to head $0$; and a shared critic augmented by one learned scalar bias
for each level. All four begin with identical value predictions. The identical gravity
control tests whether conditioning helps in the absence of mismatch, while the
heterogeneous pair orders the variants by the environment information and correction they
can represent.

\begin{figure}[t]
\centering
\includegraphics[width=\linewidth]{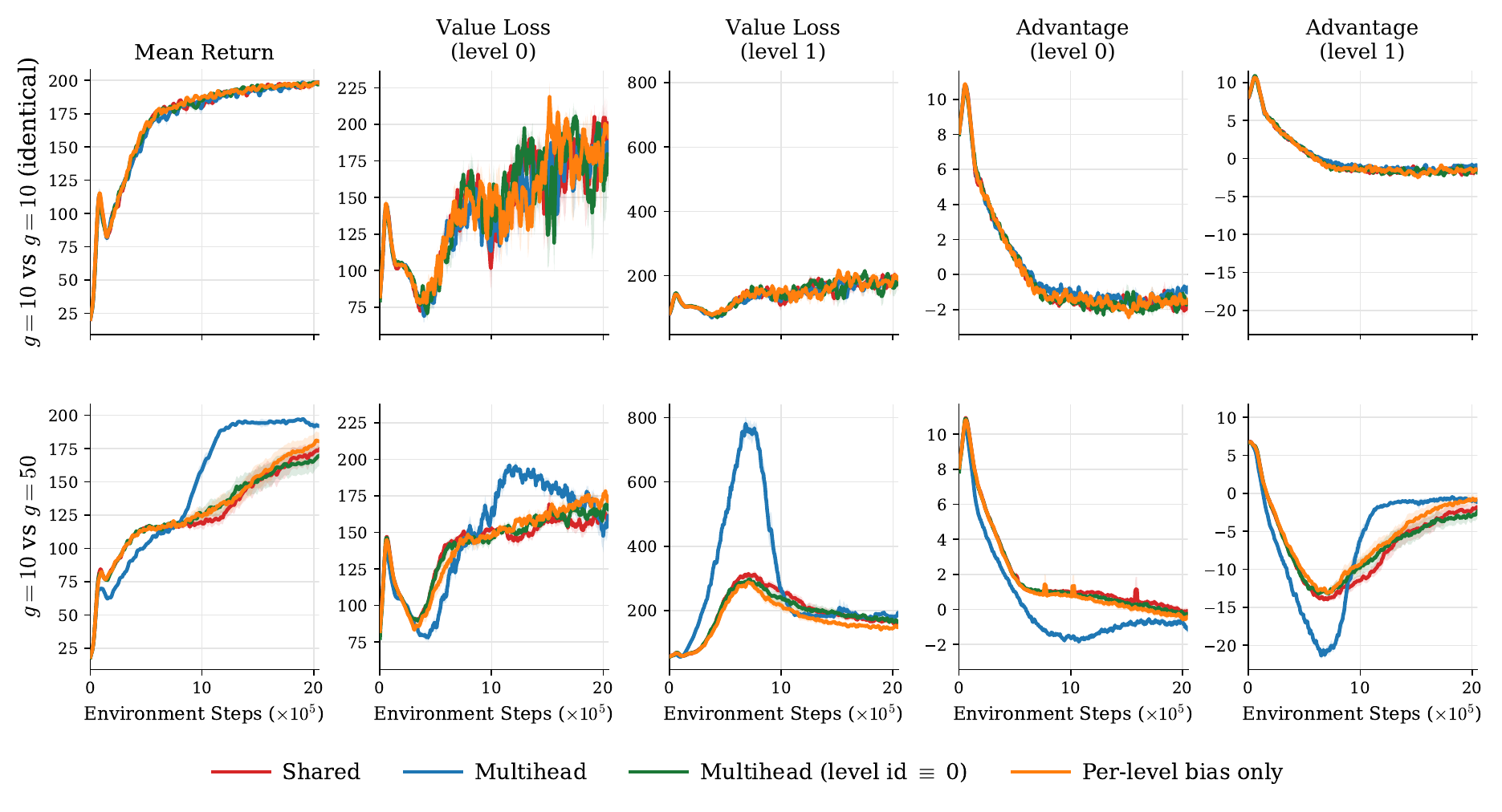}
\vspace{-10pt}
\caption{Controlled identification of the CartPole mechanism, mean$\,\pm\,$1 s.e.\ across
$20$ seeds. The top row uses two labeled levels with identical dynamics ($g=10$); the
bottom row uses $g\in\{10,50\}$. Columns report mean return, value loss for levels $0$ and
$1$, and the mean sampled raw GAE advantage for levels $0$ and $1$. With identical
dynamics, all variants behave similarly. Under heterogeneous gravity, the multihead critic
initially pays a larger fitting cost on the hard level, but its hard level advantage recenters
as the loss falls and its return subsequently approaches $200$. Routing every sample to the
same head (level id $\equiv 0$) closely tracks the shared critic, while a learned scalar bias
for each level provides only a partial correction.}
\label{fig:cartpole-mechanism}
\end{figure}

The row with identical dynamics shows no systematic separation, as expected in the absence of
value mismatch. Under heterogeneous gravity, the shared critic keeps the hard level's mean
advantage negative while its return remains lower and more variable across seeds
(Figure~\ref{fig:cartpole-mechanism}). Conditioning does not supply an oracle value: each
head must learn online from samples of its own level while the shared representation is
changing. In this compact joint actor--critic model, the resulting fitting transient is
clearly visible: the multihead loss on the hard level initially peaks near $780$, and its mean
advantage falls to roughly $-22$.

The contrast emerges once the hard head catches up. Its mean advantage recenters earlier and
the multihead return rises to a narrow band near $200$, while the shared critic and the critic
given a constant index remain lower with wider variation. Routing every sample to the same head closely
tracks the shared critic, showing that additional heads without informative level routing do
not reproduce the gain. The critic with one learned scalar bias per level recovers only part of the performance,
indicating that a substantial component of the required correction is state dependent. From
mid to late training, the shared critic shifts the mean advantage upward on level $0$ and
sharply downward on level $1$ relative to the multihead critic. The corresponding return gap
matches the predicted pattern of positive and negative value mismatches.

\subsection{MuJoCo}
\label{sec:mujoco}

For HalfCheetah, Hopper, and Walker2d \citep{todorov2012mujoco}, we vary body mass over
ten levels without providing it explicitly in the observation, while keeping episodes at a
fixed length; only the FiLM critic receives the level index. FiLM has higher mean return on all three tasks
(Figure~\ref{fig:mujoco}, right middle), with the largest gains on HalfCheetah (roughly
$2600$ vs.\ $2050$) and Walker2d ($2450$ vs.\ $2000$); Hopper shows a smaller gap in the end, but the conditional critic still dominates the shared one during training. More importantly, the advantage heatmaps show the predicted
structure: the shared critic shifts light levels positive and heavy levels negative,
whereas conditioning largely removes this ordering (Figure~\ref{fig:mujoco}, right bottom).

\begin{figure}[h]
\centering
\begin{minipage}[b]{0.35\linewidth}
\vspace{0pt}
\centering
\includegraphics[width=\linewidth]{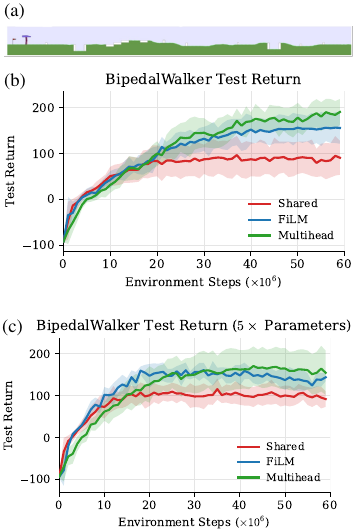}
\end{minipage}\hspace{0.02\linewidth}%
\begin{minipage}[b]{0.62\linewidth}
\vspace{0pt}
\centering
\raisebox{-2mm}{%
\includegraphics[width=\linewidth]{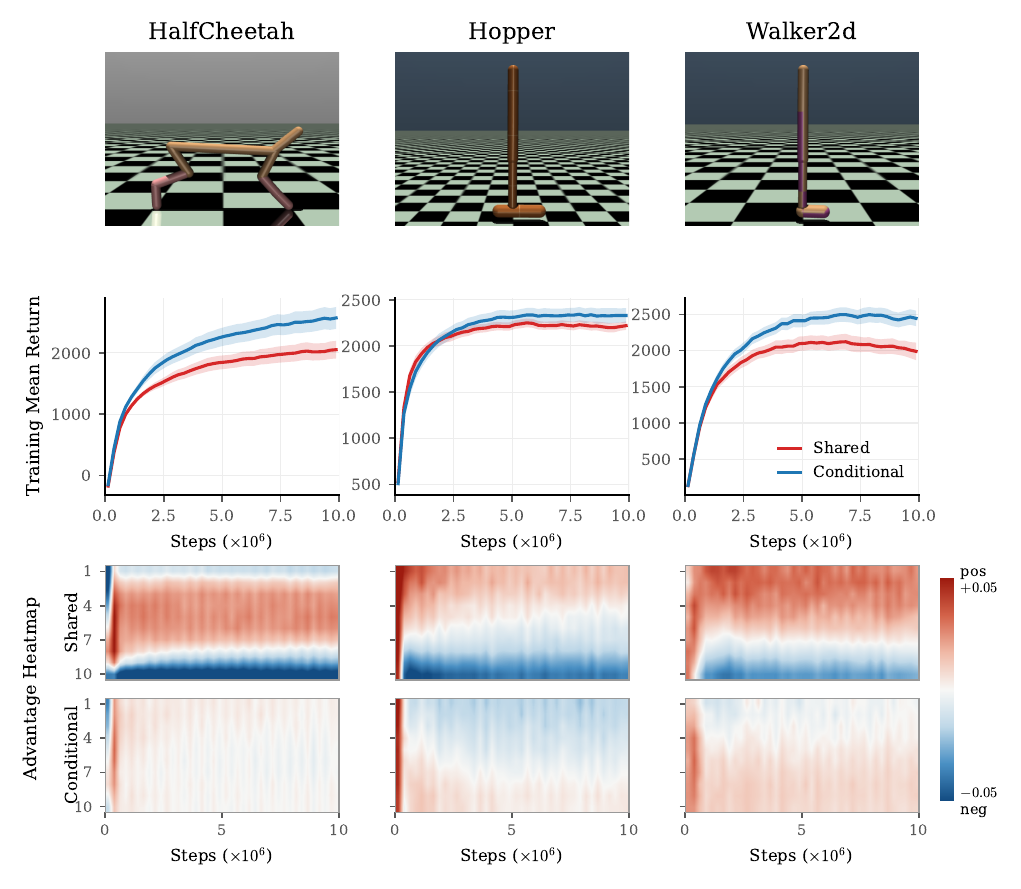}}
\end{minipage}
\caption{Continuous control benchmarks. \textbf{Left:} BipedalWalker with
$100$ pinned terrains: \textbf{(a)}~terrain sample, \textbf{(b)}~test return on $100$
unseen terrains, and \textbf{(c)}~the control with $5\times$ as many parameters
(Section~\ref{sec:ablations}); curves show mean$\,\pm\,$1 s.d.\ over $10$ seeds.
\textbf{Right:} MuJoCo with ten body mass levels not explicitly provided in the observation
($10$M steps). Columns are
HalfCheetah, Hopper, and Walker2d; the middle row shows training return,
mean$\,\pm\,$1 s.d.\ over $30$ seeds, and the bottom row shows continuous means across seeds of GAE advantages for each of $10$ levels during training
(light to heavy; red positive).}
\label{fig:continuous-control}
\label{fig:bipedal}
\label{fig:mujoco}
\end{figure}

\subsection{BipedalWalker}
\label{sec:bipedal}

BipedalWalker uses a training set of $100$ pinned terrains dominated by hard terrains: $10$
standard and $90$ Hardcore (Figure~\ref{fig:bipedal}, left a). Only the critic sees terrain
identity, and evaluation uses $100$ unseen terrains with the same $10/90$
standard/Hardcore split. The shared critic plateaus below
$100$, while FiLM reaches roughly $150$ and multihead $165$--$190$, with visibly
tighter bands (Figure~\ref{fig:bipedal}, left b). Both conditioning architectures escape the
same plateau, indicating that the effect is not specific to one architecture.

\subsection{Procgen}
\label{sec:procgen}

Each of the $16$ Procgen games is trained separately on $200$ pinned training levels for $25$M
steps; the critic may see level identity, but the actor never does. This setting asks
whether an intervention using only a level index remains useful when value mismatch is distributed
across hundreds of procedurally distinct environments rather than a small ordered family.

Evaluation on unseen levels shows that the effect extends beyond the pinned training levels.
FiLM or multihead has a higher final mean than the shared critic in all of the
$16$ games. Relative to the shared critic, their aggregate normalized returns on unseen levels
improve by $21.2\%$ and $40.8\%$, respectively (Table~\ref{tab:heldout}).

The training learning curves separately expose the optimization behavior. Improvements
appear across navigation, control, and arcade games in the illustrative subset in
Figure~\ref{fig:procgen}; the full set is in Appendix Figure~\ref{fig:procgen-full}. Complete
training curves and final returns are in Appendix Table~\ref{tab:popart}.

\begin{figure}[t]
\centering
\includegraphics[width=\linewidth]{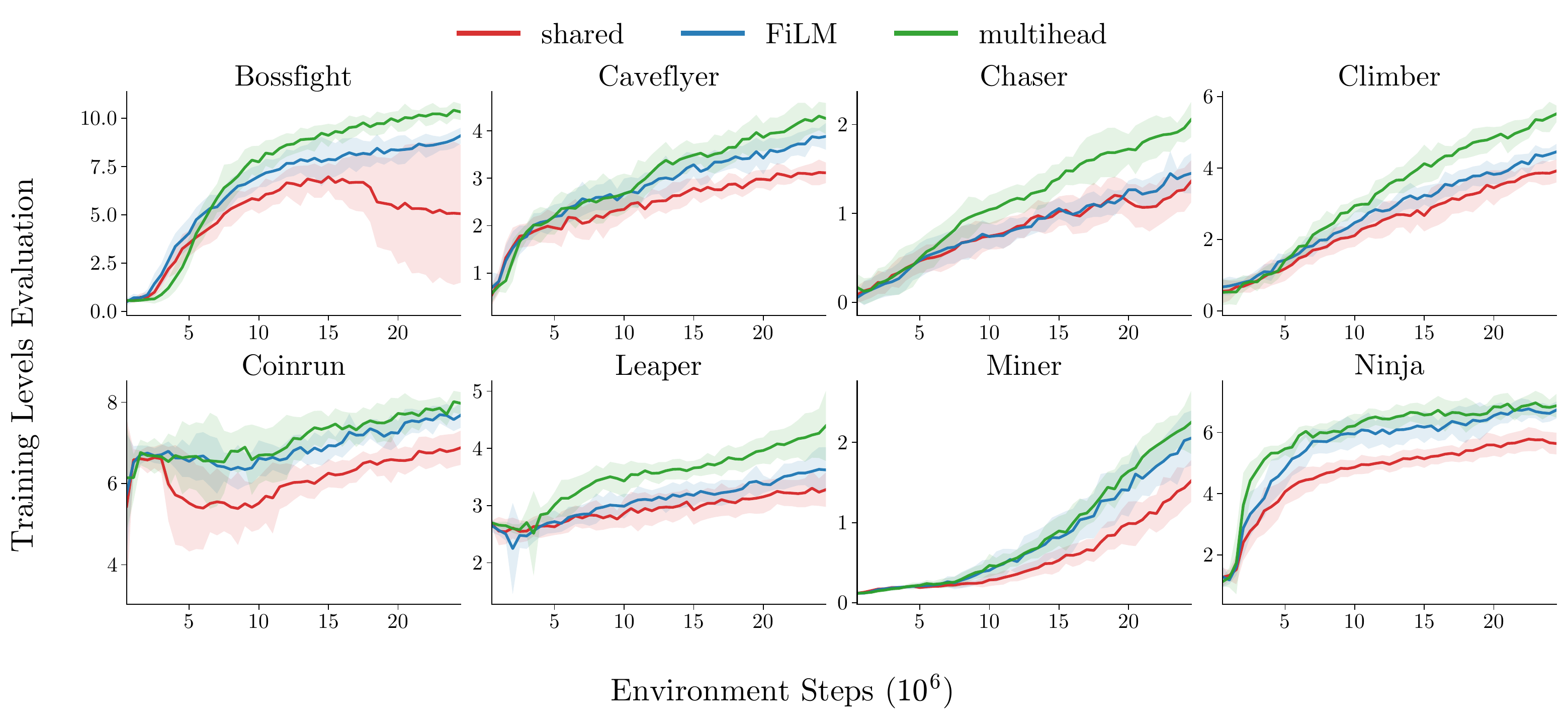}
\vspace{-10pt}
\caption{An illustrative subset of eight Procgen games: evaluation return on the $200$ pinned training levels
over $25$M steps, mean$\,\pm\,$1 s.d.\ over $10$ seeds, shared (red) vs.\ FiLM (blue)
vs.\ multihead (green). Conditioned critics train to higher returns broadly; in
Bossfight the shared critic mean collapses after roughly $15$M steps while the
conditioned critic means keep climbing.}
\label{fig:procgen}
\end{figure}

\begin{table}[h]\centering\small
\caption{Procgen, $16$ games: final evaluation return on $600$ unseen
levels ($25$M steps; mean$\,\pm\,$1 s.d.\ over $10$ seeds per method). Normalized returns per run are computed by dividing the average test return per run for each environment by the corresponding
average test return of the shared critic baseline over all runs. \textbf{Bold} $=$ best in the row.}
\label{tab:heldout}
\begin{tabular}{l rrrr}
\toprule
& \multicolumn{4}{c}{evaluation return (600 unseen levels)} \\
\cmidrule(lr){2-5}
Game & shared & FiLM & multihead & \,+\,PopArt \\
\midrule
Bigfish & 1.87\,\tiny$\pm$0.17 & 2.88\,\tiny$\pm$0.57 & 2.99\,\tiny$\pm$0.61 & \textbf{3.56\,\tiny$\pm$1.27} \\
Bossfight & 4.44\,\tiny$\pm$2.56 & 7.44\,\tiny$\pm$0.51 & \textbf{8.97\,\tiny$\pm$0.29} & 8.11\,\tiny$\pm$0.82 \\
Caveflyer & 1.08\,\tiny$\pm$0.16 & 1.90\,\tiny$\pm$0.23 & \textbf{2.65\,\tiny$\pm$0.43} & 1.87\,\tiny$\pm$0.31 \\
Chaser & 0.95\,\tiny$\pm$0.26 & 0.94\,\tiny$\pm$0.19 & \textbf{1.32\,\tiny$\pm$0.35} & 0.75\,\tiny$\pm$0.17 \\
Climber & 1.40\,\tiny$\pm$0.19 & 1.74\,\tiny$\pm$0.27 & \textbf{2.75\,\tiny$\pm$0.22} & 2.13\,\tiny$\pm$0.27 \\
Coinrun & 5.76\,\tiny$\pm$0.29 & 6.35\,\tiny$\pm$0.28 & \textbf{6.80\,\tiny$\pm$0.37} & 6.46\,\tiny$\pm$0.29 \\
Dodgeball & 0.98\,\tiny$\pm$0.05 & 0.91\,\tiny$\pm$0.14 & 1.10\,\tiny$\pm$0.20 & \textbf{1.32\,\tiny$\pm$0.18} \\
Fruitbot & 7.39\,\tiny$\pm$1.58 & \textbf{8.95\,\tiny$\pm$0.70} & 8.14\,\tiny$\pm$1.24 & 8.31\,\tiny$\pm$1.01 \\
Heist & 0.24\,\tiny$\pm$0.06 & \textbf{0.28\,\tiny$\pm$0.05} & 0.20\,\tiny$\pm$0.03 & 0.24\,\tiny$\pm$0.06 \\
Jumper & 2.23\,\tiny$\pm$0.13 & \textbf{2.36\,\tiny$\pm$0.15} & 2.32\,\tiny$\pm$0.12 & 2.35\,\tiny$\pm$0.62 \\
Leaper & 2.81\,\tiny$\pm$0.36 & 3.02\,\tiny$\pm$0.24 & \textbf{3.25\,\tiny$\pm$0.46} & 3.09\,\tiny$\pm$0.28 \\
Maze & 1.23\,\tiny$\pm$0.13 & 1.33\,\tiny$\pm$0.13 & 1.49\,\tiny$\pm$0.18 & \textbf{1.49\,\tiny$\pm$0.15} \\
Miner & 0.53\,\tiny$\pm$0.12 & 0.58\,\tiny$\pm$0.08 & \textbf{0.80\,\tiny$\pm$0.14} & 0.77\,\tiny$\pm$0.13 \\
Ninja & 3.39\,\tiny$\pm$0.21 & 3.98\,\tiny$\pm$0.15 & \textbf{4.39\,\tiny$\pm$0.29} & 4.10\,\tiny$\pm$0.41 \\
Plunder & 2.37\,\tiny$\pm$0.33 & 2.38\,\tiny$\pm$0.18 & 2.28\,\tiny$\pm$0.18 & \textbf{2.42\,\tiny$\pm$0.29} \\
Starpilot & 7.96\,\tiny$\pm$1.06 & 10.22\,\tiny$\pm$1.11 & 13.10\,\tiny$\pm$1.46 & \textbf{14.98\,\tiny$\pm$2.35} \\
\midrule
\textbf{Normalized return (\%)} & 100.0\,\tiny$\pm$19.8 & 121.2\,\tiny$\pm$27.9 & \textbf{140.8\,\tiny$\pm$47.1} & 133.3\,\tiny$\pm$41.7 \\
\bottomrule
\end{tabular}
\end{table}

\paragraph{Conditional value fitting is not a persistent bottleneck.}
The early fitting transient in the compact CartPole study does not persist systematically on
the larger benchmarks. On BipedalWalker, the initially higher multihead loss falls to a
comparable level late in training; on all three MuJoCo tasks, FiLM remains below the shared critic
through most of training; and on Procgen, both FiLM and multihead finish below the shared
critic in $11$ of $16$ games (Appendix Figures~\ref{fig:value-loss-continuous}
and~\ref{fig:value-loss-procgen}). Thus the return gains are not accompanied by a
systematic deterioration in critic fitting.

\subsection{Ablations}
\label{sec:ablations}

\paragraph{Conditioning, not value network capacity.}
Conditioning changes the parameter count, although the increase is modest in several settings
(Appendix Table~\ref{tab:paramcounts}). More decisively, on BipedalWalker we widen each critic
architecture to roughly five times its original size while holding its conditioning mechanism
fixed. The enlarged shared critic still plateaus, whereas FiLM and multihead remain near $150$
or above (Figure~\ref{fig:bipedal}c). Parameter count alone therefore does not reproduce the
conditioning gain.

\paragraph{Conditioning, not target normalization.}
A second possibility is that the gains come from correcting target scale rather than
separating values by environment. FiLM and multihead improve both BipedalWalker and
Procgen without PopArt. Adding PopArt
\citep{vanhasselt2016popart,hessel2019popartmultitask} to the same multihead critic instead
reduces the aggregate normalized training gain on the pinned levels from $+24.2\%$ to $+10.0\%$
(Table~\ref{tab:popart}). Explicit target normalization is therefore not required for the
conditioning gain and does not explain the multihead improvement. The consistent factor
across the successful variants is the ability to fit a distinct value target for each
environment.

\paragraph{Other conditioning.}
On BipedalWalker we test two further conditioning schemes
(Appendix Figure~\ref{fig:bipedal-conditioning-location}). The first removes the shared
representation entirely and assigns each environment its own value network: it learns much
more slowly than FiLM and multihead, indicating that a shared critic representation remains
useful even when value targets differ. The second asks whether conditioning the actor also
helps training: supplying the environment index to the actor on top of the multihead critic
collapses both training and test return. These results favor the minimal asymmetric
intervention used in our experiments: retain representation sharing in the critic, separate
its environment-specific predictions, and leave the actor unconditioned.

\section{Discussion, limitations, and conclusion}
\label{sec:discussion}

\paragraph{Statistical sharing versus target separation.}
Conditioning is most useful when environment identity explains substantial variation in
value. At the population level, the conditional critic classes contain the shared critic as
a special case, so their optimal squared prediction error cannot be larger under the same
data distribution. With finite data, however, estimating environment-dependent components
can increase estimation error when values are already similar or some environments are
sampled infrequently. The practical tradeoff is therefore between statistical sharing and
separation of conflicting value targets. FiLM and multihead retain a shared representation
while allowing the final prediction to separate. Empirically, the early transient in the
compact CartPole study does not become a persistent fitting bottleneck on the larger
benchmarks, as shown by the value loss curves in
Appendix~\ref{app:value-loss-diagnostics}.

\paragraph{Mismatch severity.}
The number of environments alone does not determine mismatch severity. At a state $s$ visible
to the critic, the average squared mismatch is
$\E[(e_Z^\pi(s))^2\mid S=s]=\Var(V_Z^\pi(s)\mid S=s)$: many tightly clustered
environments may be benign, whereas two separated values can suffice. This average can also
hide a rarely sampled environment with a large pointwise offset, producing infrequent but
large miscentered updates.

\paragraph{Scope of the theory.} The propositions do not establish a uniform finite-horizon ordering in expected optimal-arm probability or return. Instead, under their stated conditions, they isolate a clean pathwise mechanism: shared centering produces recurrent reversed updates, whereas conditional centering eventually rules them out. 
We deliberately study illustrative deterministic bandits sampled across a
fixed set of environments. First, their baselines are \emph{oracle}: in practice, both the
environment-specific values and their shared marginal must be learned from data. Second,
deterministic rewards make the branch signs pathwise; stochastic returns preserve the offset
identity but not every realized sign. Third, the common optimal arm removes genuine conflict
between environments, and the fixed environment mixture removes policy-dependent changes in
environment sampling.
Deep RL adds state visitation, function approximation, learned critics, GAE, and PPO
transformations, so the experiments test whether the explanatory mechanism transfers rather
than verify a general MDP convergence theorem. The mechanism itself is broader: it can arise
whenever parallel environments produce different continuation values for inputs that the
critic maps to the same representation.

\paragraph{Scope of the method.}
Our experiments use recurring environments with stable logged identifiers. This is a
minimal diagnostic intervention, not a universal conditioning scheme: continuous or unseen
variants may instead require simulator parameters or a learned or inferred representation
\citep{rakelly2019pearl}. The actor never receives the identifier, so deployment remains
unchanged. Conditioning addresses value mismatch, but not genuine task conflict, critic
estimation error, or every possible cost of privileged information.

\paragraph{Conclusion.}
We introduced value mismatch as a direct cause of poorer sampled learning dynamics when one
critic is shared across environments. Because the mechanism depends on value differences
rather than a particular domain, it motivates a simple intervention: condition only the
critic on the environment index. Across four benchmark families, this intervention improves
learning while preserving one shared actor. A promising direction is to identify where
mismatch is large and condition the critic only there, combining accurate conditional value
fitting with statistical sharing elsewhere.

\section*{Acknowledgements}
We thank Jincheng Mei for meaningful discussions. X. Liu thanks the Department of Medicine and the UF AI for Health Institute at the University of Florida for the support. We gratefully acknowledge the support
of NSF IIS-2313131, IIS-2332475, IIS-2543755, and the NSF Simons AI-Institute for the Sky (SkAI) via grants NSF
AST-2421845 and Simons Foundation MPS-AI00010513.
\section*{AI use statement}
Generative AI tools were used only to polish the writing of the manuscript and to
assist in verifying the correctness of the mathematical proofs. All research ideas,
claims, theoretical results, experiment designs, and analyses of results were conceived
and carried out by the authors. The authors take full responsibility
for all content and conclusions of the paper.


\section*{Reproducibility statement}
Section~\ref{sec:setting} specifies the theoretical setting, assumptions, and update
rule. Complete proofs and auxiliary results appear in
Appendices~\ref{app:variance-view}--\ref{app:mismatchproofs}. Appendix~\ref{app:archdetails}
specifies the conditioning architectures and parameter counts, while
Appendix~\ref{app:expdetails} reports the environment pools for each benchmark,
hyperparameters, seed counts, evaluation protocols, and aggregation rules. Figure and
table captions state the number of seeds and the uncertainty convention used for the
reported results.

\newpage

\bibliography{reference}
\bibliographystyle{iclr2027_conference}
\newpage
\appendix
\section{What variance can and cannot explain}
\label{app:variance-view}

Classical baseline theory asks how an action-independent baseline reduces the
variance of a policy gradient estimator
\citep{weaver2001optimal,greensmith2004variance}.  This view gives a valid
aggregate guarantee: within a fixed information set, a score-weighted baseline
minimizes the trace of the gradient estimator covariance and maximizes the
standard smoothness lower bound. It does not identify which sampled combinations of environment and arm carry
the update, however, or how those branches change future sampling.  We first
state the variance guarantee and then show that its exact optimum can produce a
slower and less stable process over a finite horizon than the ordinary conditional value
baseline used in the main text.

Fix a policy parameter $\theta$.  All expectations and covariances below are
under the current on-policy joint law of $(S,Z,A,G)$, where
$A\mid(S,Z)\sim\pi_\theta(\cdot\mid S)$.  Let $J(\theta)$ denote the policy
objective and assume the policy gradient identity
$\E[G\psi]=\nabla J(\theta)$, where
$\psi:=\nabla_\theta\log\pi_\theta(A\mid S)$ and
$w:=\|\psi\|_2^2$.  Let $Y$ denote the information given to the baseline:
$Y_s:=S$ for a shared baseline and $Y_c:=(S,Z)$ for a conditional baseline.
Because the actor does not receive $Z$,
$\E[\psi\mid Y_s]=\E[\psi\mid Y_c]=0$.

For a baseline $B(Y)$, define
$\widehat g_B:=(G-B(Y))\psi$.  If $J$ is $L_J$-smooth, the ascent lemma gives
\[
 \E[J(\theta+\eta\widehat g_B)]
 \ge J(\theta)+\eta\|\nabla J(\theta)\|_2^2
 -\frac{L_J\eta^2}{2}\E\!\left[w(G-B(Y))^2\right].
\]
We denote the right-hand side by $\mathrm{LB}_{Y}(B;\eta)$ and write
$\mathrm{LB}_{s}:=\mathrm{LB}_{Y_s}$ and
$\mathrm{LB}_{c}:=\mathrm{LB}_{Y_c}$.

\begin{proposition}[Richer baseline information improves the variance certificate]
\label{prop:baseline-max-lower-bound}
Suppose $\E[wG^2]<\infty$ and
$0<\E[w\mid Y]<\infty$ almost surely.  Among all baselines measurable with
respect to $Y$ and satisfying $\E[wB(Y)^2]<\infty$, the unique minimizer up to
almost-sure equality is
\[
 B_Y^{\mathrm{mv}}(Y)
 =\frac{\E[wG\mid Y]}{\E[w\mid Y]}.
\]
Equivalently, $B_Y^{\mathrm{mv}}$ minimizes
$\E[w(G-B(Y))^2]$, minimizes the covariance trace
$\operatorname{tr}\Cov(\widehat g_B)$, and maximizes
$\mathrm{LB}_{Y}(B;\eta)$ for every $\eta>0$.  Let
$B_s^{\mathrm{mv}}$ and $B_c^{\mathrm{mv}}$ denote the optima under $Y_s$ and
$Y_c$.  Then
\[
 \mathrm{LB}_{c}(B_c^{\mathrm{mv}};\eta)
 -\mathrm{LB}_{s}(B_s^{\mathrm{mv}};\eta)
 =\frac{L_J\eta^2}{2}\,
 \E\!\left[w\bigl(B_c^{\mathrm{mv}}(S,Z)
                 -B_s^{\mathrm{mv}}(S)\bigr)^2\right].
\]
\end{proposition}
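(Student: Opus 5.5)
The plan is to reduce everything to a weighted $L^2$ projection. First I would check that the three objectives coincide up to terms that do not depend on $B$. Since $B(Y)$ is $Y$-measurable and $\E[\psi\mid Y]=0$, the tower property gives $\E[\widehat g_B]=\E[G\psi]-\E[B(Y)\E[\psi\mid Y]]=\nabla J(\theta)$, the same for every admissible $B$. Hence
\[
\operatorname{tr}\Cov(\widehat g_B)=\E\|\widehat g_B\|_2^2-\|\nabla J(\theta)\|_2^2=\E[w(G-B(Y))^2]-\|\nabla J(\theta)\|_2^2 .
\]
Also, $\mathrm{LB}_Y(B;\eta)$ equals a $B$-free quantity minus $\tfrac{L_J\eta^2}{2}\E[w(G-B)^2]$ with $L_J\eta^2/2>0$. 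So all three characterizations reduce to minimizing $Q(B):=\E[w(G-B(Y))^2]$. Integrability follows from $\E[wG^2]<\infty$, $\E[wB^2]<\infty$ and Cauchy--Schwarz, so $Q(B)<\infty$ and the differences are well defined.

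Next I would establish the weighted Pythagorean identity. Write $B^{\mathrm{mv}}_Y=\E[wG\mid Y]/\E[w\mid Y]$, which is well defined because $0<\E[w\mid Y]<\infty$. By construction $\E[w(G-B^{\mathrm{mv}}_Y)\mid Y]=0$. For any admissible $B$, expand $G-B=(G-B^{\mathrm{mv}}_Y)+(B^{\mathrm{mv}}_Y-B)$. The cross term is
\[
2\E\bigl[(B^{\mathrm{mv}}_Y-B)\,\E[w(G-B^{\mathrm{mv}}_Y)\mid Y]\bigr]=0,
\]
so
\[
Q(B)=Q(B^{\mathrm{mv}}_Y)+\E[w(B-B^{\mathrm{mv}}_Y)^2].
\]
The main technical obstacle is justifying this conditioning step. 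I need $B^{\mathrm{mv}}_Y$ to be admissible, meaning $\E[w(B^{\mathrm{mv}}_Y)^2]<\infty$, and I need the cross term to be integrable. For the first, conditional Cauchy--Schwarz gives $(\E[wG\mid Y])^2\le\E[w\mid Y]\,\E[wG^2\mid Y]$. Hence $\E[w\mid Y](B^{\mathrm{mv}}_Y)^2\le\E[wG^2\mid Y]$, and taking expectations and using $\E[w(B^{\mathrm{mv}}_Y)^2]=\E[\E[w\mid Y](B^{\mathrm{mv}}_Y)^2]$ bounds this by $\E[wG^2]$. The cross term is then integrable by Cauchy--Schwarz in the weighted space. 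Uniqueness follows because $\E[w(B-B^{\mathrm{mv}}_Y)^2]=\E[\E[w\mid Y](B-B^{\mathrm{mv}}_Y)^2]=0$ together with $\E[w\mid Y]>0$ a.s. forces $B=B^{\mathrm{mv}}_Y$ a.s.

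For the comparison, the key observation is that $Y_s=S$ is a function of $Y_c=(S,Z)$. So $B_s^{\mathrm{mv}}(S)$ is an admissible $Y_c$-measurable baseline. Applying the Pythagorean identity under $Y_c$ with $B=B_s^{\mathrm{mv}}$ gives
\[
Q(B_s^{\mathrm{mv}})-Q(B_c^{\mathrm{mv}})=\E[w(B_c^{\mathrm{mv}}-B_s^{\mathrm{mv}})^2].
\]
Both lower bounds share the same $J(\theta)+\eta\|\nabla J\|^2$ term. Multiplying the identity by $L_J\eta^2/2$ therefore yields exactly the stated difference $\mathrm{LB}_c(B_c^{\mathrm{mv}};\eta)-\mathrm{LB}_s(B_s^{\mathrm{mv}};\eta)$.
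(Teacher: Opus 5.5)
Your proposal is correct and follows essentially the same route as the paper: unbiasedness reduces all three criteria to the weighted risk $\E[w(G-B(Y))^2]$, conditional Cauchy--Schwarz shows $B_Y^{\mathrm{mv}}$ is admissible, the orthogonality $\E[w(G-B_Y^{\mathrm{mv}})\mid Y]=0$ gives the Pythagorean decomposition and uniqueness, and applying it under $Y_c$ to the $Y_c$-measurable baseline $B_s^{\mathrm{mv}}$ yields the gap identity. The only differences are cosmetic: you cancel the cross term after taking expectations while the paper completes the square conditionally on $Y$, and you spell out the integrability checks more explicitly.
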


\begin{proof}
The baseline has zero mean score contribution, so every admissible $B$ gives
$\E[\widehat g_B]=\nabla J(\theta)$.  Conditional Cauchy--Schwarz gives
\[
 \E[w\mid Y](B_Y^{\mathrm{mv}})^2
 =\frac{\E[wG\mid Y]^2}{\E[w\mid Y]}
 \le \E[wG^2\mid Y],
\]
so $B_Y^{\mathrm{mv}}$ is admissible.  Moreover,
$\E[w(G-B_Y^{\mathrm{mv}})\mid Y]=0$.  Completing the square conditionally gives
\[
 \E[w(G-B)^2\mid Y]
 =\E[w(G-B_Y^{\mathrm{mv}})^2\mid Y]
 +\E[w\mid Y](B-B_Y^{\mathrm{mv}})^2.
\]
This proves optimality and uniqueness.  Because $Y_c$ refines $Y_s$, the same
orthogonality gives
\[
 \E[w(G-B_s^{\mathrm{mv}})^2]
 =\E[w(G-B_c^{\mathrm{mv}})^2]
 +\E[w(B_c^{\mathrm{mv}}-B_s^{\mathrm{mv}})^2],
\]
which proves the gap identity.  Finally, all admissible estimators have the same
mean, and therefore
\[
 \operatorname{tr}\Cov(\widehat g_B)
 =\E[w(G-B)^2]-\|\nabla J(\theta)\|_2^2.
\]
\end{proof}

\paragraph{Relation to value mismatch.}
Consider the softmax policy with one logit per arm in
Algorithm~\ref{alg:eispg}, two equally
likely environments $Z\in\{E,H\}$, and two arms $(a^*,i)$ with
$r_z=(d_z,0)$, where $d_E,d_H>0$.  Write $p:=\pi(a^*)$.  The squared score
norms are $w(a^*)=2(1-p)^2$ and $w(i)=2p^2$.  Put
$\bar d:=(d_E+d_H)/2$.  The average objective is $J(\theta)=\bar d p$, and
its Hessian with respect to the two logits is
\[
 \nabla^2J(\theta)
 =\bar d\,p(1-p)(1-2p)
 \begin{bmatrix}1&-1\\-1&1\end{bmatrix}.
\]
Consequently,
\[
 \sup_\theta\|\nabla^2J(\theta)\|_{\mathrm{op}}
 =2\bar d\max_{p\in[0,1]}|p(1-p)(1-2p)|
 =\frac{\bar d}{3\sqrt3}.
\]
Thus $L_J=\bar d/(3\sqrt3)$ is the smallest global Euclidean smoothness
constant for this parameterization. Proposition~\ref{prop:baseline-max-lower-bound}
then gives
\[
 B_c^{\mathrm{mv}}(z)=(1-p)d_z,
 \qquad
 B_s^{\mathrm{mv}}=(1-p)\bar d.
\]
The exact gain from the richer information set is
\[
 \mathrm{LB}_{c}(B_c^{\mathrm{mv}};\eta)
 -\mathrm{LB}_{s}(B_s^{\mathrm{mv}};\eta)
 =\frac{\bar d\eta^2}{12\sqrt3}\,
 p(1-p)^3(d_E-d_H)^2.
\]
Here $V_z^\pi=pd_z$, $\bar V^\pi=p\bar d$, and
$e_z^\pi:=V_z^\pi-\bar V^\pi$.  Hence the same difference equals
$\frac{\bar d\eta^2}{3\sqrt3}((1-p)^3/p)\E[(e_Z^\pi)^2]$.
At a fixed policy and data distribution,
the variance certificate therefore improves quadratically with the value mismatch.

The exact baseline that minimizes the covariance trace is not generally the ordinary value prediction used
by actor--critic methods:
\[
 B_Y^{\mathrm{mv}}
 =\E[G\mid Y]
 +\frac{\Cov(w,G\mid Y)}{\E[w\mid Y]}.
\]
The correction appears because $w=\|\mathbf e_A-\pi\|_2^2$ gives more weight to actions
with larger score norms. An ordinary value critic ignores this
action-dependent weight and minimizes the simpler prediction error
$\E[(G-B(Y))^2]$.  It is therefore an unweighted surrogate for the exact
minimizer of the covariance trace, but it still has a precise variance interpretation. Define
$B^{\mathrm{shared}}(S):=\E[G\mid S]$,
$B^{\mathrm{cond}}(S,Z):=V_Z^\pi(S):=\E[G\mid S,Z]$, and
$e_Z^\pi(S):=V_Z^\pi(S)-B^{\mathrm{shared}}(S)$.  Conditional expectation gives
\[
 \E[(G-B^{\mathrm{shared}}(S))^2]
 =\E[(G-B^{\mathrm{cond}}(S,Z))^2]+\E[(e_Z^\pi(S))^2].
\]
Moreover, $w\le2$ for this parameterization, so
\[
 \E[w(G-B)^2]\le2\E[(G-B)^2].
\]
Within either information set, ordinary value prediction therefore minimizes a
valid upper bound on the second moment in the smoothness certificate.  Replacing the shared value by the
conditional value tightens the resulting lower bound by exactly
$L_J\eta^2\E[(e_Z^\pi(S))^2]$. Larger mismatch thus strengthens the conventional
variance argument for conditioning. This aggregate certificate, however,
still does not reveal how the update is allocated across sampled branches or
which stochastic process follows a better path over a finite horizon.

\paragraph{A numerical counterexample.}
Figure~\ref{fig:minvar-counterexample} uses two equally likely environments with
three arms $(a^*,a_2,a_3)$, rewards
$r_E=(1,0.8,0)$ and $r_H=(0.5,0.4,0)$, common initialization
$\pi_0^{\mathrm{mv}}=\pi_0^{\mathrm{cond}}=(0.5,0.4,0.1)$, and $\eta=10$.
We run two copies of the score update in Algorithm~\ref{alg:eispg}.  Each copy recomputes its own
baseline from its current policy at every round:
\[
 B_t^{\mathrm{mv}}(z)
 =\frac{\sum_a\pi_t^{\mathrm{mv}}(a)w_t^{\mathrm{mv}}(a)r_z(a)}
        {\sum_a\pi_t^{\mathrm{mv}}(a)w_t^{\mathrm{mv}}(a)},
 \qquad
 B_t^{\mathrm{cond}}(z)
 =V_z^{\pi_t^{\mathrm{cond}}}
 =\sum_a\pi_t^{\mathrm{cond}}(a)r_z(a),
\]
where
$w_t^{\mathrm{mv}}(a):=\|\mathbf e_a-\pi_t^{\mathrm{mv}}\|_2^2$.
With one logit per arm, the first baseline exactly minimizes the covariance trace;
the second is the ordinary conditional value.

In environment $E$, their initial values are $0.704$ and $0.820$. The
covariance trace minimizer therefore lies below $r_E(a_2)=0.8$ and
reinforces a sample of $a_2$,
whereas the conditional value lies above it and suppresses the same sample.  That
branch sends $\pi(a^*)$ from $0.5$ to $0.279$ and $0.547$, respectively.  Yet the
initial covariance trace is smaller under that baseline:
$0.0423$ versus $0.0472$.

\begin{center}
\begin{minipage}{\linewidth}
  \centering
  \includegraphics[width=\linewidth]{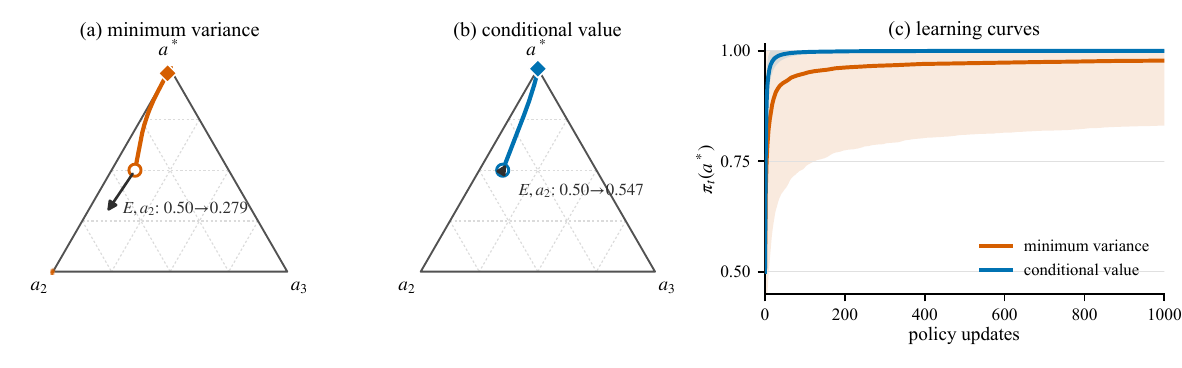}
  \captionof{figure}{Minimizing the covariance trace need not produce a better
  process over a finite horizon. The two environments share the same optimal arm and are
  sampled equally; $r_H=0.5r_E$, $\pi_0=(0.5,0.4,0.1)$, and $\eta=10$.
  (a,b)~Terminal policies at $T=1{,}000$ (light points; $4{,}000$ of
  $20{,}000$ shown) and ensemble mean trajectories (solid curves) from the common
  initialization (circle) to the terminal mean (diamond).  The arrows show the
  same sampled branch at the initial policy: sampling $a_2$ in the easy
  environment $E$ moves away from $a^*$ under the baseline minimizing trace
  covariance but toward $a^*$ under the conditional value. The
  initial covariance traces are $0.0423$ and $0.0472$, respectively.
  (c)~Optimal arm probability over all $20{,}000$ trajectories
  (mean$\,\pm\,$1 s.d.; common random numbers).  Both baselines are recomputed
  from their own current policies at every update. This compares the means at a fixed horizon
  and does not claim pathwise or asymptotic dominance.}
  \label{fig:minvar-counterexample}
\end{minipage}
\end{center}

Over $20{,}000$ trajectories with common random numbers, after $1{,}000$ updates
the process using the covariance trace minimizer reaches $0.9777\pm0.1468$, while the conditional
value process reaches $0.9998\pm0.0004$ (mean$\,\pm\,$1 s.d.).  The conditional
ensemble mean crosses $0.99$ after $38$ updates; the mean under the covariance trace minimizer remains
below $0.99$ through update $1{,}000$.  The second arm remains the policy winner
in $2.2\%$ of runs using that baseline and in none of the conditional value runs.
This is an ensemble comparison rather than pathwise dominance, but it shows that
minimizing the local covariance trace can create a substantially longer and less
stable committal transient.

The example exposes what the variance summary discards.  At initialization,
score weighting reduces the relative weight of the most probable optimal arm and
more than doubles the relative weight of the rare arm with zero reward, pulling the
covariance trace minimizer below $r_E(a_2)$. Baseline placement then changes which
sampled arm is reinforced, and that change feeds back into future action sampling.
The two processes still share the same asymptotic destination.  Indeed, after
averaging over $Z$, each has effective arm means
$\bar r(a)-c_t^B$, where
$c_t^B:=\sum_zq_zB_t^B(z)$ is bounded and predictable; hence
Lemma~\ref{lem:common shift} applies.  The example complements
\citet{chung2021beyond}, whose stronger convergence counterexample uses a
natural policy gradient process with three arms.  Variance remains a useful aggregate
certificate, but it cannot explain the branch allocation that governs the sampled
learning path.

\section{Proof of Proposition~\ref{prop:convergence}}
\label{app:prop1}

This section gives a complete convergence proof for the predictable
reward laws that depend on the environment and are induced by
Algorithm~\ref{alg:eispg}. The proof architecture is adapted from the exploration, barrier,
and elimination arguments of \citet{robertson2025reinforce}, including their use of
Freedman's inequality to prove divergence. All ingredients needed here, including the rate
for tail averages, are stated and proved directly in this appendix.

\paragraph{Proof convention.}
The shaded \emph{Comment} boxes mark the steps at which we adapt the proof architecture
of \citet{robertson2025reinforce} to our multi-environment common-shift setting. The
intervening probabilistic argument is reproduced self-contained.

\paragraph{Proof idea.}
The argument has three main steps.  First, after averaging over the sampled
environment, every baseline induces effective arm means of the form
$\bar r(a)-c_t$, where the same predictable scalar $c_t$ is subtracted from every
arm.  This shift changes the sampled noise but leaves every pairwise arm gap
fixed, and all three baselines induce the same conditional mean update.  Second,
conditional Borel--Cantelli gives infinite exploration.  The optimal logit has
positive mean drift; infinite exploration makes its cumulative drift diverge,
and stopped concentration prevents the noise from canceling that drift, so the
optimal logit tends to $+\infty$.  Third, a barrier argument reaches an arbitrarily
large simultaneous gap and then traps the process in a region where every rival
has strictly negative mean drift.  Continued exploration makes each accumulated
negative drift diverge, driving every rival logit to $-\infty$.  Consequently,
$\pi_t(a^*)\to1$ almost surely for all three baselines. Finally, the same
compensator argument yields, without requiring $c_t$ to settle, a pathwise
$O((1+\log N)/N)$ bound on average suboptimality over a random tail after the
transient; it is neither a last iterate nor an expected rate.

\paragraph{Effective reward law.}
Let $\mathcal F_t$ contain the complete history before $(Z_t,a_t)$ is sampled, so
$\pi_t$ is $\mathcal F_t$-measurable. Because
Algorithm~\ref{alg:eispg} samples $Z_t$ independently
from $q$ and the actor does not observe $Z_t$, conditionally on $\mathcal F_t$ the draws
$Z_t\sim q$ and $a_t\sim\pi_t$ are independent.  Define
\[
Y_t^B:=r_{Z_t}(a_t)-B_t^B,
\qquad
\bar r(a):=\sum_zq_zr_z(a),
\qquad
\bar V^{\pi_t}:=\sum_a\pi_t(a)\bar r(a).
\]
Also put $R:=\max_{z,a}|r_z(a)|<\infty$.

\begin{lemma}[Exact reduction to a common shift]
\label{lem:effective-common shift}
For $B\in\{0,\mathrm{shared},\mathrm{cond}\}$, the conditional law of $Y_t^B$
given $(\mathcal F_t,a_t=a)$ is predictable, supported on $[-2R,2R]$, and has mean
\begin{equation}
m_t^B(a):=\E[Y_t^B\mid\mathcal F_t,a_t=a]
=\bar r(a)-c_t^B,
\quad
c_t^0=0,
\quad
c_t^{\mathrm{shared}}=c_t^{\mathrm{cond}}=\bar V^{\pi_t}.
\label{eq:effective-arm-means}
\end{equation}
Consequently, every pairwise gap is fixed,
$m_t^B(a)-m_t^B(i)=\bar r(a)-\bar r(i)$, and
\begin{equation}
\E[\Delta\theta_t(a)\mid\mathcal F_t]
=\eta\pi_t(a)\bigl(\bar r(a)-\bar V^{\pi_t}\bigr).
\label{eq:app-pooled-drift}
\end{equation}
\end{lemma}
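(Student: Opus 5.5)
The plan is a direct computation that uses only the conditional independence of $Z_t$ and $a_t$ given $\mathcal F_t$. I will treat the three baselines separately, because the centering term is a different random variable in each case.

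\emph{Predictability and support.} Conditionally on $(\mathcal F_t,a_t=a)$, the only remaining randomness is $Z_t\sim q$, which is independent of $a_t$ because the actor does not observe $Z_t$. The three baselines behave as follows.
\begin{itemize}[leftmargin=*]
\item $B^0_t=0$ is constant.
\item $B^{\mathrm{shared}}_t=\bar V^{\pi_t}$ is $\mathcal F_t$-measurable.
\item $B^{\mathrm{cond}}_t=V^{\pi_t}_{Z_t}$ is a deterministic function of $\pi_t$ and $Z_t$.
\end{itemize}
Hence in every case $Y^B_t$ equals $f_t(Z_t)$ for some $\mathcal F_t$-measurable function $f_t$ of the environment label. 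Its conditional law is therefore the pushforward of $q$ under $f_t$, and this law is determined by $\mathcal F_t$ and $a$, so it is predictable. For the support, $|r_z(a)|\le R$ and $|B^B_t|\le R$, since each baseline is either $0$ or a convex combination of rewards. This gives $|Y^B_t|\le 2R$.

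\emph{Means.} Averaging $r_{Z_t}(a)$ over $Z_t\sim q$ gives $\bar r(a)$. For the shared baseline, the subtracted term $\bar V^{\pi_t}$ is constant given $\mathcal F_t$. For the conditional baseline, the subtracted term has conditional mean
\[
\sum_z q_z\sum_b\pi_t(b)r_z(b)=\sum_b\pi_t(b)\bar r(b)=\bar V^{\pi_t},
\]
where exchanging the two finite sums is the key step. This gives $c^{\mathrm{shared}}_t=c^{\mathrm{cond}}_t=\bar V^{\pi_t}$ and $c^0_t=0$. The gap identity follows immediately, because the shift $c^B_t$ does not depend on $a$.

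\emph{Drift.} Write $\Delta\theta_t(a)=\eta Y^B_t\bigl(\ind{a_t=a}-\pi_t(a)\bigr)$. Conditioning first on $a_t$ and then averaging over $a_t\sim\pi_t$ gives
\[
\E[\Delta\theta_t(a)\mid\mathcal F_t]=\eta\Bigl[\pi_t(a)m^B_t(a)-\pi_t(a)\sum_b\pi_t(b)m^B_t(b)\Bigr].
\]
Substitute $m^B_t(b)=\bar r(b)-c^B_t$. The shift $c^B_t$ appears with coefficient $\pi_t(a)-\pi_t(a)\sum_b\pi_t(b)=0$, so it cancels. What remains is $\eta\pi_t(a)\bigl(\bar r(a)-\bar V^{\pi_t}\bigr)$.

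I do not expect a real obstacle here. The one point that needs care is the predictability claim for $B^{\mathrm{cond}}$: the baseline is random through $Z_t$, so the argument must use the conditional independence of $Z_t$ and $a_t$ given $\mathcal F_t$ when computing the law of $Y^B_t$ conditional on $a_t=a$.
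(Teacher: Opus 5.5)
Your proposal is correct and follows essentially the same route as the paper. Both arguments use the conditional independence of $Z_t$ and $a_t$ given $\mathcal F_t$ to get $\sum_z q_z\bigl(r_z(a)-V_z^{\pi_t}\bigr)=\bar r(a)-\bar V^{\pi_t}$, bound $|Y_t^B|\le 2R$ via convex combinations of the reward table, and obtain the drift by substituting into the score update; you just spell out the predictability claim and the cancellation of $c_t^B$ more explicitly than the paper does.
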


\begin{proof}
The zero and shared cases follow directly from their definitions.  For the conditional
baseline, conditional independence of $Z_t$ and $a_t$ gives
\[
\E[Y_t^{\mathrm{cond}}\mid\mathcal F_t,a_t=a]
=\sum_zq_z\bigl(r_z(a)-V_z^{\pi_t}\bigr)
=\bar r(a)-\bar V^{\pi_t}.
\]
\adaptationnote{
The conditional oracle baseline is not a common shift on a realized environment
branch. Independence of $Z_t$ and $a_t$ makes its mean conditional on the arm exactly
the same predictable common shift after averaging over the environment index.}
Substitution into the score update proves \eqref{eq:app-pooled-drift}.  Every oracle
value is a convex combination of the finite reward table, so $|Y_t^B|\le2R$ in all
three cases.
\end{proof}

Throughout the remainder of the proof, write
$\E_t[\cdot]:=\E[\cdot\mid\mathcal F_t]$ and
$\Var_t(\cdot):=\Var(\cdot\mid\mathcal F_t)$; for a stopping time $\tau$,
$\Prob_\tau(\cdot):=\Prob(\cdot\mid\mathcal F_\tau)$.

We first record a stopped concentration lemma that is uniform over time. It is a direct
Freedman--Bernstein argument, included so that the buffers used below have explicit
probabilities and do not depend on an external bandit result.

\begin{lemma}[Stopped Freedman bound under drift dominance]
\label{lem:stopped-freedman}
Let $\tau$ be a stopping time finite almost surely and let $(X_t)_{t\ge\tau}$ be an
adapted process taking real values with
\[
X_{t+1}-X_t=s_t+\xi_{t+1},
\qquad
\E[\xi_{t+1}\mid\mathcal F_t]=0,
\qquad
|\xi_{t+1}|\le b,
\]
where $0\le b<\infty$ is deterministic. Suppose that, for some deterministic
$0\le c<\infty$ with $b+c>0$, one of the following two conditions holds
at every active step:
\begin{align}
s_t=\nu_t\ge0,
&\qquad \E[\xi_{t+1}^2\mid\mathcal F_t]\le c\nu_t,
\label{eq:freedman-positive-case}\\
s_t=-\nu_t\le0,
&\qquad \E[\xi_{t+1}^2\mid\mathcal F_t]\le c\nu_t,
\label{eq:freedman-negative-case}
\end{align}
where $\nu_t\ge0$ is predictable. For $q\in(0,1)$,
define
\begin{equation}
D_q(b,c):=2(b+c)\log(1/q).
\label{eq:freedman-explicit-buffer}
\end{equation}
Conditionally on $\mathcal F_\tau$, under \eqref{eq:freedman-positive-case},
\begin{equation}
\Prob_\tau\!\left(
\inf_{n\ge\tau}(X_n-X_\tau)<-D_q(b,c)
\right)\le q,
\label{eq:freedman-lower-buffer}
\end{equation}
whereas under \eqref{eq:freedman-negative-case},
\begin{equation}
\Prob_\tau\!\left(
\sup_{n\ge\tau}(X_n-X_\tau)>D_q(b,c)
\right)\le q.
\label{eq:freedman-upper-buffer}
\end{equation}
Moreover, if $\Lambda_n:=\sum_{t=\tau}^{n-1}\nu_t\to\infty$, then $X_n\to+\infty$
under \eqref{eq:freedman-positive-case} and $X_n\to-\infty$ under
\eqref{eq:freedman-negative-case}, almost surely.  More precisely, writing
$M_n:=\sum_{t=\tau}^{n-1}\xi_{t+1}$, almost surely there is a finite random index
$N$ such that
\begin{equation}
|M_n|\le \frac12\Lambda_n\qquad(n\ge N).
\label{eq:freedman-eventual-tracking}
\end{equation}
The conclusions remain valid up to
a further stopping time after multiplying every increment by the predictable indicator
that this stopping time has not yet occurred.
\end{lemma}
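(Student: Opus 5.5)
The plan is an exponential supermartingale argument of Freedman--Bernstein type, applied to the process after time $\tau$, together with a peeling argument for the tracking statement. By symmetry ($X\mapsto -X$ exchanges the two cases), it suffices to treat the positive case \eqref{eq:freedman-positive-case}. The buffer bound \eqref{eq:freedman-lower-buffer} is about the downward excursion of $X_n-X_\tau=\Lambda_n+M_n$.

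For $\lambda\in(0,1/b]$, I would use the elementary inequality $e^{x}\le 1+x+x^2$ for $x\le1$. This gives
\[
\E_t\bigl[e^{-\lambda\xi_{t+1}}\bigr]\le 1+\lambda^2\E_t[\xi_{t+1}^2]\le \exp(\lambda^2c\nu_t).
\]
Hence
\[
\E_t\bigl[e^{-\lambda(s_t+\xi_{t+1})}\bigr]\le \exp\bigl(-\lambda\nu_t(1-\lambda c)\bigr)\le 1
\qquad\text{whenever }\lambda\le 1/c.
\]
So with $\lambda:=1/(b+c)$, which satisfies both $\lambda\le1/b$ and $\lambda\le 1/c$, the process $W_n:=\exp(-\lambda(X_n-X_\tau))$ for $n\ge\tau$ is a nonnegative supermartingale under $\Prob_\tau$, with $W_\tau=1$. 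Ville's maximal inequality then gives
\[
\Prob_\tau\bigl(\sup_{n\ge\tau}W_n\ge 1/q\bigr)\le q,
\]
that is, $\inf_n(X_n-X_\tau)<-(b+c)\log(1/q)$ with probability at most $q$. This is already stronger than the stated buffer $D_q=2(b+c)\log(1/q)$. The factor $2$ is slack, and it also absorbs the strict-versus-weak inequality and the edge cases $b=0$ or $c=0$. Stopping at a further stopping time is handled simply: multiplying the increments by a predictable indicator keeps every hypothesis (the drift sign, the bound $|\xi|\le b$, and the variance domination), with $\nu_t$ replaced by $\nu_t\ind{t<\sigma}$. The same supermartingale therefore applies.

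For the tracking claim \eqref{eq:freedman-eventual-tracking} and divergence, I would run the same computation for both signs of $M$ separately, where $M$ has predictable quadratic variation at most $c\Lambda_n$. For any $\lambda\le 1/b$, each of
\[
\exp\bigl(\pm\lambda M_n-\lambda^2 c\Lambda_n\bigr)
\]
is a supermartingale. I would then peel over dyadic levels $\Lambda_n\in[2^k,2^{k+1})$. On level $k$, take $\lambda_k:=\min\{1/b,\ 1/(8c)\}$, a fixed constant; the exact choice does not matter, only that $\lambda_k c\le 1/8$ and $\lambda_k$ does not depend on $k$. Ville's inequality then bounds
\[
\Prob\Bigl(\exists n:\ \Lambda_n\in[2^k,2^{k+1}),\ |M_n|>\tfrac12\Lambda_n\Bigr)
\le 2\exp\bigl(-\lambda_k 2^{k-1}+\lambda_k^2c\,2^{k+1}\bigr)
\le 2\exp(-\lambda_k2^{k-2}).
\]
These bounds are summable in $k$. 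By Borel--Cantelli, only finitely many levels contain a violation. Since $\Lambda_n\to\infty$ and is nondecreasing, every level is eventually passed, so there is a finite $N$ with $|M_n|\le\frac12\Lambda_n$ for $n\ge N$. Divergence follows directly: $X_n-X_\tau=\Lambda_n+M_n\ge\frac12\Lambda_n\to\infty$.

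I expect the main obstacle to be bookkeeping rather than a hard estimate. Two points need care. First, Ville's inequality must be applied conditionally on $\mathcal F_\tau$ with a random start time, which I would handle by shifting time to $n\mapsto\tau+n$ and using the optional stopping property of supermartingales. Second, the peeling events must be stated with stopping times (the first $n$ at which $\Lambda_n\ge 2^k$), since $\Lambda_n$ is predictable but random. The degenerate case $b=0$, where $\xi\equiv0$, is trivial and should be separated out so that $\lambda=1/b$ is never taken.
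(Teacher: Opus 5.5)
Your proposal is correct and follows essentially the paper's route: a fixed-$\lambda$ exponential supermartingale with Ville's inequality for the buffer, and the same device combined with Borel--Cantelli over levels of $\Lambda_n$ for the eventual tracking bound and divergence. The differences are cosmetic. You use $e^x\le 1+x+x^2$ for $x\le 1$ with $\lambda=1/(b+c)$, which gives half the stated buffer, where the paper uses the conditional Bernstein MGF bound with $\lambda=1/[2(b+c)]$; and your dyadic peeling is unnecessary because your $\lambda$ does not vary with the level, which is exactly how the paper's integer-level argument proceeds.
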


\begin{proof}
For $0<\lambda<3/b$, the conditional Bernstein bound for the martingale difference is
\begin{equation}
\E_t[e^{\lambda\xi_{t+1}}]
\le\exp\!\left(
\frac{\lambda^2}{2(1-\lambda b/3)}
\E_t[\xi_{t+1}^2]
\right),
\label{eq:conditional-bernstein-mgf}
\end{equation}
and the same inequality holds for $-\xi_{t+1}$.  Take
$\lambda:=1/[2(b+c)]$.  Then $\lambda b\le1/2$, $\lambda c\le1/2$, and, writing
$\psi_b(\lambda):=\lambda^2/[2(1-\lambda b/3)]$,
\begin{equation}
\psi_b(\lambda)c\le\frac{3}{10}\lambda<\frac12\lambda.
\label{eq:bernstein-slack}
\end{equation}
For $M_n:=\sum_{t=\tau}^{n-1}\xi_{t+1}$ and
$\Lambda_n:=\sum_{t=\tau}^{n-1}\nu_t$, the exponential process
\[
\exp\!\left(
\lambda M_n-\psi_b(\lambda)
\sum_{t=\tau}^{n-1}\E_t[\xi_{t+1}^2]
\right)
\]
is a nonnegative supermartingale starting at one; so is the process with $M_n$
replaced by $-M_n$.  Ville's inequality and
\eqref{eq:bernstein-slack} therefore give, for every $x>0$,
\begin{equation}
\Prob_\tau(\exists n\ge\tau:M_n\ge\Lambda_n+x)\le e^{-\lambda x},
\qquad
\Prob_\tau(\exists n\ge\tau:-M_n\ge\Lambda_n+x)\le e^{-\lambda x}.
\label{eq:ville-two-sided}
\end{equation}
Since $X_n-X_\tau=\Lambda_n+M_n$ in the positive case and
$X_n-X_\tau=-\Lambda_n+M_n$ in the negative case, taking
$x=\lambda^{-1}\log(1/q)$ proves
both \eqref{eq:freedman-lower-buffer} and \eqref{eq:freedman-upper-buffer}.

For the divergence statements, \eqref{eq:bernstein-slack} leaves the deterministic
slack
$\kappa:=\lambda/2-\psi_b(\lambda)c>0$.  Ville's inequality gives, for each sign,
\[
\Prob_\tau(\exists n:\Lambda_n\ge u,\ M_n\ge\Lambda_n/2)
\le e^{-\kappa u},
\qquad
\Prob_\tau(\exists n:\Lambda_n\ge u,\ -M_n\ge\Lambda_n/2)
\le e^{-\kappa u}
\]
for every $u>0$.  Apply both bounds at $u=1,2,\ldots$.  Their failure probabilities
are summable.  By Borel--Cantelli, almost surely only finitely many integer levels $u$
admit an index $n$ with $\Lambda_n\ge u$ and either displayed deviation.  On
$\Lambda_n\to\infty$, such a deviation at arbitrarily late indices would cross arbitrarily
large integer levels and contradict this conclusion.  Thus eventually
$|M_n|<\Lambda_n/2$.  Hence
$\Lambda_n+M_n\to+\infty$ in the positive case and
$-\Lambda_n+M_n\to-\infty$ in the negative
case.  Multiplying by a predictable stopping indicator preserves all conditional
mean, variance, and increment bounds, proving the final assertion.
\end{proof}

\adaptationnote{
For $G(\pi):=\operatorname{Diag}(\pi)-\pi\pi^\top$, every predictable common
shift satisfies
\[
G(\pi_t)(\mu-c_t\mathbf 1)=G(\pi_t)\mu,
\qquad G(\pi_t)\mathbf 1=0.
\]
Thus the nonsettling offset changes the martingale noise law but never the mean
direction or an arm gap. The coordinate formulas and offset-uniform variance bounds
are derived below wherever they are used.}

\begin{lemma}[Elimination for general $K$ under a predictable common shift]
\label{lem:general-k-elimination}
Assume $K\ge2$, $0<\eta<\infty$, and
\begin{equation}
\theta_{t+1}=\theta_t+\eta Y_t(\mathbf e_{a_t}-\pi_t),
\qquad
\pi_t=\operatorname{softmax}(\theta_t),
\qquad
|Y_t|\le C,
\label{eq:elimination-score-update}
\end{equation}
where $\mathbf e_a$ is the $a$th standard basis vector,
$a_t\mid\mathcal F_t\sim\pi_t$, and
\begin{equation}
\E[Y_t\mid\mathcal F_t,a_t=a]=\mu(a)-c_t.
\label{eq:elimination-common shift}
\end{equation}
Here $\mu$ is fixed, $c_t$ is predictable, and $\mu$ has the unique maximizer $a^*$.
Suppose, almost surely, that
\begin{equation}
\theta_t(a^*)\to+\infty,
\qquad
\sum_{t=0}^\infty\pi_t(i)=\infty\quad(i\ne a^*).
\label{eq:elimination-inputs}
\end{equation}
Then
\begin{equation}
\theta_t(i)\to-\infty\quad(i\ne a^*),
\qquad
\pi_t(a^*)\to1
\quad\text{almost surely}.
\label{eq:elimination-conclusion}
\end{equation}
\end{lemma}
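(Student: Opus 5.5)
The plan is a barrier-and-trap argument. We find a region near the optimal corner where every rival logit has strictly negative drift and $\theta(a^*)$ has nonnegative drift. We show the process enters that region with a large buffer and then, with high probability, never leaves it. Lemma~\ref{lem:stopped-freedman} is then applied coordinatewise, using $\sum_t\pi_t(i)=\infty$ to make the accumulated negative drift diverge.

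\emph{Step 1: coordinate drifts and variances.} From \eqref{eq:elimination-score-update} and \eqref{eq:elimination-common shift},
\[
\E_t[\Delta\theta_t(a)]=\eta\pi_t(a)\Bigl(\mu(a)-\textstyle\sum_b\pi_t(b)\mu(b)\Bigr),
\]
and $c_t$ cancels by the identity $G(\pi_t)\mathbf 1=0$ recorded in the comment box. Let $\delta:=\mu(a^*)-\max_{i\ne a^*}\mu(i)>0$ and $\Delta_{\max}:=\mu(a^*)-\min_a\mu(a)$. On $R_\varepsilon:=\{\pi_t(a^*)\ge1-\varepsilon\}$ with $\varepsilon\Delta_{\max}\le\delta/2$, every rival satisfies
\[
\E_t[\Delta\theta_t(i)]\le-\tfrac{\eta\delta}{2}\pi_t(i)=:-\nu_t(i).
\]
Since $\Delta\theta_t(i)=\eta Y_t(\mathbf 1\{a_t=i\}-\pi_t(i))$, its increments are bounded by $2\eta C$ and its conditional second moment is at most $2\eta^2C^2\pi_t(i)$, which is at most a constant times $\nu_t(i)$. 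So the negative case \eqref{eq:freedman-negative-case} holds with $b,c$ independent of $c_t$. Similarly, $\E_t[\Delta\theta_t(a^*)]=\eta\pi_t(a^*)\sum_{i\ne a^*}\pi_t(i)(\mu(a^*)-\mu(i))\ge\eta(1-\varepsilon)\delta(1-\pi_t(a^*))$. Its second moment is at most $2\eta^2C^2(1-\pi_t(a^*))$, so the positive case \eqref{eq:freedman-positive-case} holds for $\theta(a^*)$.

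\emph{Step 2: trapping.} Let $\tau$ be a stopping time at which $\theta_\tau(a^*)-\max_{i\ne a^*}\theta_\tau(i)\ge L+2D_q$, where $L$ is chosen so that a gap $\ge L$ forces $\pi(a^*)\ge1-\varepsilon$ (e.g.\ $(K-1)e^{-L}\le\varepsilon$). Apply Lemma~\ref{lem:stopped-freedman} to $\theta(a^*)$ and to each $\theta(i)$, with increments stopped at the exit time from $R_\varepsilon$ (the final clause of that lemma). With probability at least $1-Kq$, $\theta(a^*)$ never drops more than $D_q$ and no $\theta(i)$ rises more than $D_q$. On that event the gap stays $\ge L$, so the exit time is infinite. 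Using $\sum_t\nu_t(i)=\tfrac{\eta\delta}{2}\sum_t\pi_t(i)=\infty$ from \eqref{eq:elimination-inputs}, the divergence part of Lemma~\ref{lem:stopped-freedman} then gives $\theta_t(i)\to-\infty$, and $\pi_t(a^*)\to1$ follows.

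\emph{Step 3: reaching the buffer, repeatedly.} This is the main obstacle. We must show that almost surely the gap $\theta(a^*)-\max_i\theta(i)$ exceeds $L+2D_q$ at some stopping time, and again after every failed trap. The difficulty is that $\theta(a^*)\to+\infty$ alone does not exclude a rival logit drifting upward alongside it. The gap increment $\eta Y_t(\mathbf 1\{a_t=a^*\}-\mathbf 1\{a_t=i\})$ has a mean that still involves $c_t$ whenever $\pi_t(a^*)\ne\pi_t(i)$, so a direct drift argument on the gap is unavailable.

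I would first try the conserved sum $\sum_a\theta_t(a)=\sum_a\theta_0(a)$ (score increments sum to zero), combined with a rival-by-rival induction ordered by $\mu$. Take the currently largest rival logit $j$. Whenever $\theta(j)$ is within a bounded distance of $\theta(a^*)\to\infty$, the remaining rivals carry vanishing mass, so $\sum_b\pi(b)\mu(b)$ is a convex combination dominated by $a^*$ and $j$ with $\pi(a^*)$ bounded below. This gives $\theta(j)$ a strictly negative drift proportional to $\pi(j)$, which by Step 1's variance bound and Lemma~\ref{lem:stopped-freedman} pushes $\theta(j)$ down relative to $\theta(a^*)$. This should show that the gap is unbounded above along a subsequence.

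Once the gap is reached infinitely often with fresh stopping times, each trap attempt succeeds with conditional probability at least $1-Kq$. Conditional Borel--Cantelli (Lévy's extension) then shows that some attempt succeeds almost surely, which yields \eqref{eq:elimination-conclusion}.
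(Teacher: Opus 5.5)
\textbf{There is a genuine gap in your Step 3.} Your Steps 1--2 are sound and essentially coincide with the second half of the paper's proof. That half uses a cone $\{\pi(a^*)\ge\rho\}$ in which every rival has drift at most $-\eta\gamma\pi_t(i)$. It enters the cone with a buffered gap, applies stopped Freedman bounds to $\theta(a^*)$ and to each rival, and gets divergence of the negative compensator from $\sum_t\pi_t(i)=\infty$.

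The problem is Step 3, which you correctly flag as the crux but do not close. With $d_i:=\mu(a^*)-\mu(i)$, $\delta:=\min_{i\ne a^*} d_i$ and $L:=\max_{i\ne a^*} d_i$, you must show that for every deterministic $G$, almost surely $\theta_t(a^*)-\max_{i\ne a^*}\theta_t(i)\ge G$ at some finite time. Your sketch for this fails in three places:
\begin{itemize}
\item The claim that the remaining rivals carry vanishing mass when the top rival $j$ is near $\theta(a^*)$ is false. Being the largest rival says nothing about how far below it the others lie, and several rivals can be tied at $\theta(j)$. Logit conservation only sends the \emph{sum} of the rival logits to $-\infty$.
\item Without that claim, a bounded-below $\pi(a^*)$ does not give $j$ negative drift. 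Write $\bar d_t:=\sum_{i\ne a^*}\pi_t(i)d_i$. The drift of $\theta(j)$ is $\eta\pi_t(j)(\bar d_t-d_j)$, which is positive if, say, $\pi_t(a^*)=\pi_t(j)=\pi_t(k)=1/3$, $d_j=\delta$ and $d_k>2\delta$. Negative drift for every rival needs $\bar d_t<\delta$, which is essentially the cone you are trying to reach, so the argument is circular.
\item Pushing one rival down does not enlarge a maximum over rivals, because another rival can take over as the maximizer.
\end{itemize}
Separately, your reason for dismissing a drift argument on the gap is a slip. The increment of $\theta(a^*)-\theta(i)$ is $\eta Y_t(\mathbf 1\{a_t=a^*\}-\mathbf 1\{a_t=i\}-\pi_t(a^*)+\pi_t(i))$, and its mean $m_t(a^*)-m_t(i)$ does not depend on $c_t$. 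The real obstruction is that this mean, $\eta[(\pi_t(a^*)-\pi_t(i))\bar d_t+\pi_t(i)d_i]$, can be negative when $\pi_t(i)>\pi_t(a^*)$.

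\textbf{The missing idea is a potential that controls all rivals at once}, namely $\Phi_t:=\sum_{i\ne a^*}[\theta_t(i)]_+$. The paper argues as follows.
\begin{itemize}
\item A Freedman buffer first keeps $\theta(a^*)\ge U_1$ forever, with conditional probability at least $1-q$.
\item While some rival exceeds $U_2:=U_1-G$, split the rivals into two groups. $\mathcal B_t$ contains those with logit at least $B_0=\eta C$; their positive parts move exactly with their logits. $\mathcal C_t$ contains the rest; their positive parts rise by at most $2\eta C\pi_t(i)$ in expectation.
\item Write $p_t^*:=\pi_t(a^*)$, $P_B:=\pi_t(\mathcal B_t)$ and $P_C:=\pi_t(\mathcal C_t)$. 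The $\mathcal B_t$ drifts sum to at most $\eta[-\delta p_t^*P_B+(L-\delta)P_BP_C]$.
\item The choice $U_2=B_0+\log((K-1)M)$ forces $P_B/P_C\ge M$ and $p_t^*/P_C\ge M$. For $M$ large enough, the total drift of $\Phi$ is nonpositive.
\item So the stopped $\Phi$ is a nonnegative supermartingale. On the good event, either every rival drops below $U_2$, which gives the gap at once, or $\Phi$ stays bounded while $\theta(a^*)\to+\infty$, which gives it eventually. Letting $q\downarrow0$ proves the claim.
\end{itemize}
Once arbitrary gaps are reachable, you also do not need repeated trap attempts. A single entrance at gap $G_\varepsilon$ succeeds with probability at least $1-\varepsilon$, and letting $\varepsilon\downarrow0$ finishes the proof.
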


\begin{proof}
Let $\mathcal R:=\{1,\ldots,K\}\setminus\{a^*\}$ and define the fixed optimal gaps
\[
d_i:=\mu(a^*)-\mu(i),
\qquad
\delta:=\min_{i\in\mathcal R}d_i>0,
\qquad
L:=\max_{i\in\mathcal R}d_i<\infty.
\]
\adaptationnote{
Robertson et al. organize elimination by reward tiers so that ties and multiple
optimal arms are allowed. Here the shared optimum is strict, so one simultaneous
optimal--rival cone controls every rival without a tier induction. The proof below
defines the required stopping times and good events explicitly.}
Write $p_t^*:=\pi_t(a^*)$ and
$\bar d_t:=\sum_{i\in\mathcal R}\pi_t(i)d_i$.  The common shift cancels exactly,
giving
\begin{align}
m_t(a^*)&:=\E_t[\Delta\theta_t(a^*)]
=\eta p_t^*\bar d_t,
\label{eq:rankfree-optimal-drift}\\
m_t(i)&:=\E_t[\Delta\theta_t(i)]
=\eta\pi_t(i)(\bar d_t-d_i).
\label{eq:rankfree-rival-drift}
\end{align}
The score form and $|Y_t|\le C$ also give, for every arm $a$,
\begin{align}
|\Delta\theta_t(a)|&\le B_0:=\eta C,
\label{eq:rankfree-increment-bound}\\
\E_t|\Delta\theta_t(a)|
&\le2\eta C\pi_t(a)(1-\pi_t(a)),\nonumber\\
\Var_t(\Delta\theta_t(a))
&\le\eta^2C^2\pi_t(a)(1-\pi_t(a)).
\label{eq:rankfree-moments}
\end{align}

\emph{Step 1: arbitrary simultaneous gaps are reached.}
We claim that, for every deterministic $G>0$, almost surely there is a finite time
$T_G$ such that
\begin{equation}
\theta_{T_G}(a^*)-\max_{i\in\mathcal R}\theta_{T_G}(i)\ge G.
\label{eq:simultaneous-gap}
\end{equation}
Fix $G>0$ and $q\in(0,1)$.  Choose a deterministic $M>1$ satisfying
\begin{equation}
M\ge\frac{2(L-\delta)}{\delta},
\qquad
\frac{M^2}{1+2M}\ge\frac{4C}{\delta},
\label{eq:gap-M-choice}
\end{equation}
and set
\begin{equation}
U_2:=B_0+\log((K-1)M),
\qquad
U_1:=U_2+G.
\label{eq:gap-levels}
\end{equation}
From \eqref{eq:rankfree-optimal-drift} and \eqref{eq:rankfree-moments},
\begin{equation}
m_t(a^*)\ge\eta\delta p_t^*(1-p_t^*)\ge0,
\qquad
\Var_t(\Delta\theta_t(a^*))
\le c_*m_t(a^*),
\quad c_*:=\frac{\eta C^2}{\delta}.
\label{eq:rankfree-optimal-self-bound}
\end{equation}
The centered optimal increment has magnitude at most $b:=2B_0$.  Define
$D_*(q):=D_q(b,c_*)$ and the stopping time
\begin{equation}
\tau:=\inf\{t:\theta_t(a^*)\ge U_1+D_*(q)\}.
\label{eq:gap-start-time}
\end{equation}
It is finite almost surely by \eqref{eq:elimination-inputs}.  Lemma
\ref{lem:stopped-freedman}, restarted at $\tau$, shows that
\begin{equation}
\mathcal E_q:=\{\theta_t(a^*)\ge U_1\text{ for every }t\ge\tau\}
\label{eq:gap-optimal-good-event}
\end{equation}
has conditional probability at least $1-q$.

Define $\Phi_t:=\sum_{i\in\mathcal R}[\theta_t(i)]_+$, where
$[x]_+:=\max\{x,0\}$.  Consider a time $t\ge\tau$ at which
$\theta_t(a^*)\ge U_1$ and at least one rival logit is at least $U_2$.  Partition
\[
\mathcal B_t:=\{i\in\mathcal R:\theta_t(i)\ge B_0\},
\qquad
\mathcal C_t:=\mathcal R\setminus\mathcal B_t,
\]
and write $P_B:=\pi_t(\mathcal B_t)$ and $P_C:=\pi_t(\mathcal C_t)$.  For
$i\in\mathcal B_t$, a step of magnitude at most $B_0$ cannot cross below zero, so
the expected increment of $[\theta_t(i)]_+$ equals $m_t(i)$.  For
$i\in\mathcal C_t$, the $1$-Lipschitz property of $[\cdot]_+$ and
\eqref{eq:rankfree-moments} give an expected increase of at most
$2\eta C\pi_t(i)$.  Furthermore, \eqref{eq:rankfree-rival-drift} gives
\begin{align}
\frac1\eta\sum_{i\in\mathcal B_t}m_t(i)
&=-(1-P_B)\sum_{i\in\mathcal B_t}\pi_t(i)d_i
+P_B\sum_{i\in\mathcal C_t}\pi_t(i)d_i\nonumber\\
&\le-\delta p_t^*P_B+(L-\delta)P_BP_C.
\label{eq:positive-part-drift}
\end{align}
If $P_C=0$, this is nonpositive.  Otherwise, at least one rival is above $U_2$,
whereas all logits in $\mathcal C_t$ are below $B_0$.  Hence
\begin{equation}
\frac{P_B}{P_C}\ge M,
\qquad
\frac{p_t^*}{P_C}\ge M.
\label{eq:gap-softmax-ratios}
\end{equation}
Indeed, each denominator has at most $K-1$ terms, and
$\theta_t(a^*)\ge U_1>U_2$.  If $x:=p_t^*/P_C$ and $y:=P_B/P_C$, normalization gives
$P_C=(1+x+y)^{-1}$ and therefore
\begin{equation}
\frac{p_t^*P_B}{P_C}
=\frac{xy}{1+x+y}
\ge\frac{M^2}{1+2M}
\ge\frac{4C}{\delta}.
\label{eq:gap-product-ratio}
\end{equation}
The first condition in \eqref{eq:gap-M-choice} also gives
$(L-\delta)P_BP_C\le(\delta/2)p_t^*P_B$.  Combining these bounds yields
\begin{equation}
\E_t[\Phi_{t+1}-\Phi_t]
\le\eta\left[-\delta p_t^*P_B+(L-\delta)P_BP_C+2CP_C\right]
\le0.
\label{eq:positive-parts-supermartingale}
\end{equation}

Let $\nu$ be the first $t\ge\tau$ at which either
$\theta_t(a^*)<U_1$ or every rival logit is below $U_2$.  The drift bound in
\eqref{eq:positive-parts-supermartingale} shows that
$\Phi_{t\wedge\nu}=\sum_{i\in\mathcal R}[\theta_{t\wedge\nu}(i)]_+$ is a
nonnegative supermartingale.  To handle its
random starting value formally, restrict to each event
$\{\tau=n,\Phi_n\le m\}\in\mathcal F_n$ for integers $n,m$ and then take their countable
union. The convergence theorem for nonnegative supermartingales shows that the stopped
process has a finite limit almost surely.  On $\mathcal E_q$, the first stopping
condition never occurs.  If the second occurs, \eqref{eq:gap-levels} gives
\eqref{eq:simultaneous-gap}; if it never occurs, the positive parts of all rival logits
are bounded while $\theta_t(a^*)\to+\infty$, and \eqref{eq:simultaneous-gap} again
holds at a finite later time.  Thus the probability of never attaining the gap is at
most $q$.  Letting $q\downarrow0$ proves the claim.

\emph{Step 2: the process is eventually trapped in a cone where every rival has strictly negative drift.}
Choose
\begin{equation}
\rho\in\left(\max\left\{1-\frac{\delta}{L},\frac1K\right\},1\right),
\qquad
\gamma:=\delta-L(1-\rho)>0.
\label{eq:strict-cone}
\end{equation}
Whenever $p_t^*\ge\rho$, \eqref{eq:rankfree-optimal-drift} and
\eqref{eq:rankfree-rival-drift} imply
\begin{equation}
m_t(a^*)\ge0,
\qquad
m_t(i)\le-\eta\gamma\pi_t(i)\quad(i\in\mathcal R).
\label{eq:cone-drifts}
\end{equation}
Together with \eqref{eq:rankfree-moments}, these give
\begin{equation}
\Var_t(\Delta\theta_t(a^*))\le c_*m_t(a^*),
\qquad
\Var_t(\Delta\theta_t(i))\le c_R[-m_t(i)],
\quad c_R:=\frac{\eta C^2}{\gamma}.
\label{eq:cone-self-bounds}
\end{equation}

Fix $\varepsilon\in(0,1)$, put $q:=\varepsilon/K$, and define
\begin{equation}
D_*:=D_q(b,c_*),
\qquad
D_R:=D_q(b,c_R),
\qquad
g_\rho:=\log\frac{(K-1)\rho}{1-\rho},
\qquad
G_\varepsilon:=g_\rho+D_*+D_R.
\label{eq:cone-buffers}
\end{equation}
By Step~1, the first time $T_\varepsilon$ such that
\begin{equation}
\theta_{T_\varepsilon}(a^*)-\theta_{T_\varepsilon}(i)
\ge G_\varepsilon
\qquad(i\in\mathcal R)
\label{eq:cone-entrance}
\end{equation}
is a stopping time finite almost surely. This inequality implies
$p_{T_\varepsilon}^*\ge\rho$.  Let
$\sigma:=\inf\{t\ge T_\varepsilon:p_t^*<\rho\}$ and define the stopped increments
$\widetilde{\Delta\theta_t(a)}:=\mathbf 1\{t<\sigma\}\Delta\theta_t(a)$.
The indicator is $\mathcal F_t$-measurable, and every active update lies in the cone,
so both \eqref{eq:cone-drifts} and \eqref{eq:cone-self-bounds} apply to the stopped process.
Lemma~\ref{lem:stopped-freedman}, restarted at
$T_\varepsilon$, and a union bound over all $K$ arms give an event
$\mathcal G_\varepsilon$ of conditional probability at least $1-\varepsilon$ on which,
simultaneously for all $n\ge T_\varepsilon$ and all rivals $i$,
\begin{equation}
\theta_{n\wedge\sigma}(a^*)
\ge\theta_{T_\varepsilon}(a^*)-D_*,
\qquad
\theta_{n\wedge\sigma}(i)
\le\theta_{T_\varepsilon}(i)+D_R.
\label{eq:cone-good-event}
\end{equation}
If $\sigma<\infty$ on this event, then
\eqref{eq:cone-entrance} and \eqref{eq:cone-good-event} imply
\[
\theta_\sigma(a^*)-\theta_\sigma(i)\ge g_\rho
\quad(i\in\mathcal R),
\]
and hence
\[
p_\sigma^*
=\frac{1}{1+\sum_{i\in\mathcal R}
e^{\theta_\sigma(i)-\theta_\sigma(a^*)}}
\ge\frac{1}{1+(K-1)e^{-g_\rho}}
=\rho,
\]
contradicting the definition of $\sigma$.  Therefore, with probability at least
$1-\varepsilon$, the process never leaves the cone after $T_\varepsilon$.  If
$\mathcal A$ denotes the event that there is no finite time after which the process remains in the cone,
then $\Prob(\mathcal A)\le\varepsilon$ for every $\varepsilon>0$.  Thus
\begin{equation}
\Prob(\exists T<\infty:\ p_t^*\ge\rho\text{ for every }t\ge T)=1.
\label{eq:eventual-cone}
\end{equation}

\emph{Step 3: eliminate every rival from deterministic restart times.}
The last entrance time in \eqref{eq:eventual-cone} need not be a stopping time.  For
each deterministic integer $n\ge0$, define instead
\[
\sigma_n:=\inf\{t\ge n:p_t^*<\rho\}.
\]
Fix $i\in\mathcal R$ and use the increments
$\mathbf 1\{t<\sigma_n\}\Delta\theta_t(i)$ from time $n$.  On
$\{\sigma_n=\infty\}$, \eqref{eq:cone-drifts} and \eqref{eq:cone-self-bounds}
give the case of negative drift in Lemma~\ref{lem:stopped-freedman}, and its compensator
satisfies
\[
\sum_{t=n}^\infty[-m_t(i)]
\ge\eta\gamma\sum_{t=n}^\infty\pi_t(i)
=\infty
\]
by \eqref{eq:elimination-inputs}.  Hence
$\theta_t(i)\to-\infty$ on $\{\sigma_n=\infty\}$, almost surely. The event in
\eqref{eq:eventual-cone} agrees almost surely with
$\bigcup_{n\ge0}\{\sigma_n=\infty\}$, whose probability is one; the countable union therefore proves
the limit for this rival.  There are finitely many rivals, so all limits hold
simultaneously. Together with $\theta_t(a^*)\to+\infty$, every margin between the optimal arm and a rival
diverges and $\pi_t(a^*)\to1$.
\end{proof}

The preceding rank-free barrier plays the role of the tier-elimination step in
earlier proofs for softmax bandits. We can now prove the abstract convergence
statement needed by Proposition~\ref{prop:convergence}.

\begin{lemma}[Bounded predictable common shifts]
\label{lem:common shift}
Let $(\mathcal F_t)_{t\ge0}$ be a filtration, let $\theta_t$ be
$\mathcal F_t$-measurable, and put $\pi_t=\operatorname{softmax}(\theta_t)$.  Suppose
$c_t$ is $\mathcal F_t$-measurable, $(a_t,Y_t)$ and hence $\theta_{t+1}$ are
$\mathcal F_{t+1}$-measurable, $a_t\mid\mathcal F_t\sim\pi_t$, and the conditional
law of $Y_t$ given $(\mathcal F_t,a_t=a)$ is an $\mathcal F_t$-measurable kernel.
Consider
\begin{equation}
\theta_{t+1}=\theta_t+\eta Y_t(\mathbf e_{a_t}-\pi_t),
\qquad |Y_t|\le C<\infty,
\label{eq:abstract-score-update}
\end{equation}
where
\begin{equation}
\E[Y_t\mid\mathcal F_t,a_t=a]=\mu(a)-c_t.
\label{eq:abstract-common shift}
\end{equation}
Here $\mu\in\mathbb R^K$ is fixed, $c_t$ is a predictable scalar, and $\mu$ has a
unique maximizer $a^*$.  For every fixed finite $\eta>0$ and finite initial logits,
\begin{equation}
\pi_t(a^*)\longrightarrow1
\qquad\text{almost surely}.
\label{eq:abstract-convergence}
\end{equation}
\end{lemma}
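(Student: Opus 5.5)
The plan is to reduce Lemma~\ref{lem:common shift} to Lemma~\ref{lem:general-k-elimination}. That lemma already delivers $\pi_t(a^*)\to1$ once its two inputs \eqref{eq:elimination-inputs} hold: $\theta_t(a^*)\to+\infty$, and $\sum_t\pi_t(i)=\infty$ for every rival $i$. So it suffices to establish these two facts under \eqref{eq:abstract-score-update}--\eqref{eq:abstract-common shift}.

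\emph{Step 1: infinite exploration.} Since the shift $c_t$ cancels through $G(\pi_t)\mathbf 1=0$, the conditional mean increment of each logit is
\[
\E_t[\Delta\theta_t(a)]=\eta\pi_t(a)\bigl(\mu(a)-\textstyle\sum_b\pi_t(b)\mu(b)\bigr).
\]
Every increment is bounded by $\eta C$.

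First, I will show $\sum_t\pi_t(a)=\infty$ for every arm, not only rivals. Consider the logit gap $\theta_t(a)-\theta_t(b)$ for a pair of arms. The gap changes only when $a$ or $b$ is sampled, apart from a term proportional to $\pi_t(a)-\pi_t(b)$, and the per-step mean gap change is bounded by $O(\pi_t(a)+\pi_t(b))$. Suppose, on some event, $\sum_t\pi_t(a)<\infty$. By conditional Borel--Cantelli (Lévy's extension), arm $a$ is then sampled only finitely often. I will then argue that $\theta_t(a)$ has summable total movement: its increments when $a$ is not sampled are $-\eta Y_t\pi_t(a)$, which are summable. So $\theta_t(a)$ converges to a finite limit.

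Because the maximal logit cannot diverge to $+\infty$ without the minimal one going to $-\infty$, I will also use the invariant $\sum_a\Delta\theta_t(a)=0$: the logits sum to a constant. Hence some other arm must keep a logit bounded below, so $\pi_t(a)\ge c\,e^{\theta_t(a)-\max_b\theta_t(b)}$. For $\pi_t(a)$ to be summable, $\max_b\theta_t(b)\to\infty$ along a subsequence fast enough. I expect this to be the main obstacle. I would try to rule it out by showing that the logits of arms other than $a$ evolve as a bandit with one fewer arm whose per-step movement is controlled. Concretely, the positive parts of their deviations are bounded by a supermartingale argument like Step~1 of Lemma~\ref{lem:general-k-elimination}, since a very dominant arm $b$ has $\E_t|\Delta\theta_t(b)|\le2\eta C(1-\pi_t(b))$. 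The quantity $1-\pi_t(b)$ is dominated by $\pi_t$ of the remaining arms, and induction on the number of arms gives that those remaining arms are explored. Then $\sum_t(1-\pi_t(b))$ being finite forces bounded movement of $\theta_t(b)$, which contradicts divergence. This induction on $K$, with $K=1$ trivial, is the route I would commit to, following \citet{robertson2025reinforce}.

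\emph{Step 2: optimal logit diverges.} By \eqref{eq:rankfree-optimal-self-bound}, $\theta_t(a^*)$ has nonnegative drift
\[
m_t(a^*)\ge\eta\delta\,p_t^*(1-p_t^*),
\]
with conditional variance bounded by $c_*m_t(a^*)$ and increments bounded by $2\eta C$. By the positive case of Lemma~\ref{lem:stopped-freedman}, it suffices to show $\sum_tp_t^*(1-p_t^*)=\infty$. If $p_t^*\to1$ we are already done. Otherwise $1-p_t^*$ does not vanish, and I use Step~1: $\sum_tp_t^*=\infty$ and $\sum_t(1-p_t^*)\ge\sum_t\pi_t(i)=\infty$. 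Splitting time according to whether $p_t^*\le1/2$ gives
\[
p_t^*(1-p_t^*)\ge\tfrac12\min\{p_t^*,1-p_t^*\}.
\]
Any divergence of $\sum_tp_t^*(1-p_t^*)$ that fails would force one of these two sums to be finite on the respective set of times, and I would handle this with the same Borel--Cantelli argument. Lemma~\ref{lem:stopped-freedman} then gives $\theta_t(a^*)\to+\infty$, and Lemma~\ref{lem:general-k-elimination} concludes \eqref{eq:abstract-convergence}.
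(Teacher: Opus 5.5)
Your outer structure matches the paper: reduce to Lemma~\ref{lem:general-k-elimination}, and get $\theta_t(a^*)\to+\infty$ from the positive case of Lemma~\ref{lem:stopped-freedman}. The exploration step, however, has a genuine gap.

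You correctly isolate the bad scenario: an arm $a$ with $\sum_t\pi_t(a)<\infty$, a convergent logit $\theta_t(a)$, and $\max_b\theta_t(b)\to+\infty$. The induction cannot exclude it. Grant that the other $K-1$ arms behave like a smaller bandit; they do not exactly, since the softmax is still normalized over all $K$ arms and $a$ can still be drawn. The induction would then show those arms are explored. So for $K\ge3$, any candidate dominant arm $b$ has $\sum_t(1-\pi_t(b))\ge\sum_t\pi_t(c)=\infty$ for another explored arm $c$. Your closing step, ``$\sum_t(1-\pi_t(b))$ finite forces bounded movement of $\theta_t(b)$,'' is therefore the opposite of what the induction delivers; it works only for $K=2$. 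Divergence of the maximal logit is also not contradictory by itself, because a smaller bandit does drive its best logit to $+\infty$. Making this route work would need a quantitative rate comparison that you do not supply.

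The missing idea is the paper's pairwise margin argument, which needs neither induction nor rates.
\begin{itemize}
\item Logit conservation gives an arm $b$ with $\liminf_t\theta_t(b)=-\infty$. It must be sampled infinitely often, because finitely sampled arms have finite logit limits. Fix the deterministic pair $(a,b)$.
\item On $\{\theta_t(b)\le\theta_t(a)\}$ one has $\pi_t(b)\le\pi_t(a)$. Conditional Borel--Cantelli then shows that $b$ is sampled only finitely often while the margin $\theta_t(b)-\theta_t(a)$ is nonpositive.
\item A sample of any third arm changes the margin by $\eta Y_t(\pi_t(a)-\pi_t(b))$. While the margin is nonpositive this is at most $\eta C\pi_t(a)$, a summable amount.
\item So once the margin falls below $-2\varepsilon$ late enough, it never returns to zero, and $b$ is never sampled again. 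This is a contradiction.
\end{itemize}
Your remark that pairwise gaps move by $O(\pi_t(a)+\pi_t(b))$ was the right seed. It has to be used pathwise on the region of nonpositive margin, not in mean.

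Step~2 has a smaller gap. Divergence of $\sum_tp_t^*$ and $\sum_t(1-p_t^*)$ does not imply $\sum_t\min\{p_t^*,1-p_t^*\}=\infty$. A path alternating between $p_t^*$ near $0$ and near $1$ could make both restricted sums finite, and Borel--Cantelli does not rule this out. What rules it out is that the log-odds $\log\bigl(p_t^*/(1-p_t^*)\bigr)$ changes by at most $2\eta C$ per step. Hence, if neither regime holds eventually, there are infinitely many upcrossings of $1/2$. Each lands in $[1/2,\,e^{2\eta C}/(1+e^{2\eta C})]$, where $p(1-p)$ is bounded away from zero, so the sum diverges.
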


\begin{proof}
Put $p_t:=\pi_t(a^*)$.  The case $K=1$ is immediate, so suppose $K\ge2$.

\emph{Conservation and moment bounds.}
The score coordinates sum to zero, so
$\sum_a\theta_t(a)=\sum_a\theta_0(a)$ pathwise.  The predictable shift cancels from
the drift:
\begin{align}
\E_t[\Delta\theta_t(a)]
&=\eta\pi_t(a)\left(\mu(a)-\sum_j\pi_t(j)\mu(j)\right),
\label{eq:common shift-drift}\\
|\Delta\theta_t(a)|&\le\eta C,\nonumber\\
\E_t|\Delta\theta_t(a)|
&\le2\eta C\pi_t(a)(1-\pi_t(a)),\nonumber\\
\Var_t(\Delta\theta_t(a))
&\le\eta^2C^2\pi_t(a)(1-\pi_t(a)).
\label{eq:common shift-moments}
\end{align}
\emph{Exploration of all arms.}
L\'evy's conditional Borel--Cantelli lemma gives, simultaneously for all arms,
\begin{equation}
\{a_t=a\text{ only finitely often}\}
=\left\{\sum_t\pi_t(a)<\infty\right\}
\quad\text{almost surely}.
\label{eq:exploration-bc}
\end{equation}
Suppose an arm $a$ were sampled only finitely often.  After its last sample,
$|\Delta\theta_t(a)|\le\eta C\pi_t(a)$, so \eqref{eq:exploration-bc} implies that
$\theta_t(a)$ converges to a finite value.  At the same time $\pi_t(a)\to0$, hence
$\max_j\theta_t(j)\to+\infty$.  Logit conservation then forces some arm $b$ to have
$\liminf_t\theta_t(b)=-\infty$.  Every finitely sampled arm has a finite logit limit
by the preceding argument, so $b$ is sampled infinitely often.  As there are finitely
many arms, it is enough to fix a deterministic pair $(a,b)$ with these properties.

Conditional Borel--Cantelli gives
$\sum_t\pi_t(b)=\infty$ and $\sum_t\pi_t(a)<\infty$.  On the event
$\{\theta_t(b)\le\theta_t(a)\}$, we have $\pi_t(b)\le\pi_t(a)$; hence
\[
\sum_t\Prob(a_t=b,\ \theta_t(b)\le\theta_t(a)\mid\mathcal F_t)
\le\sum_t\pi_t(a)<\infty.
\]
Only finitely many such samples occur. Fix $\varepsilon>0$. Choose $T$ after the last sample of $a$ and
the last sample of $b$ made while $\theta_t(b)\le\theta_t(a)$, and so late that
$\eta C\sum_{t\ge T}\pi_t(a)<\varepsilon$.  Since
$\liminf_t[\theta_t(b)-\theta_t(a)]=-\infty$, choose $u\ge T$ with this margin below
$-2\varepsilon$.  Until the margin crosses zero, neither $a$ nor $b$ is sampled.  A
sample of any other arm can increase it by at most $\eta C\pi_t(a)$, so the total
possible increase after $u$ is less than $\varepsilon$.  The margin can never cross
zero, contradicting the fact that $b$ is sampled infinitely often but eventually
never while its margin is nonpositive.  Thus every arm is sampled infinitely often.

\emph{Divergence of the optimal logit.}
Let $\Delta_*:=\min_{i\ne a^*}[\mu(a^*)-\mu(i)]>0$.  From
\eqref{eq:common shift-drift} and \eqref{eq:common shift-moments},
\begin{align}
\E_t[\Delta\theta_t(a^*)]
&\ge\eta\Delta_*p_t(1-p_t),
\label{eq:optimal-positive-drift}\\
\Var_t(\Delta\theta_t(a^*))
&\le\frac{\eta C^2}{\Delta_*}\,
\E_t[\Delta\theta_t(a^*)].
\label{eq:optimal-var-drift}
\end{align}
Exploration and \eqref{eq:exploration-bc} imply
$\sum_tp_t=\sum_t(1-p_t)=\infty$.  If eventually $p_t\le1/2$, then
$p_t(1-p_t)\ge p_t/2$; if eventually $p_t\ge1/2$, then
$p_t(1-p_t)\ge(1-p_t)/2$.  Otherwise there are infinitely many upcrossings of
$1/2$. The aggregate log odds
\[
\log\frac{p_t}{1-p_t}
=\theta_t(a^*)-\log\sum_{i\ne a^*}e^{\theta_t(i)}
\]
changes by at most $2\eta C$ in one step.  An upcrossing therefore lands in the
compact interval
$[1/2,e^{2\eta C}/(1+e^{2\eta C})]$, on which $p(1-p)$ is bounded away from zero.
In all three cases,
\begin{equation}
\sum_t p_t(1-p_t)=\infty.
\label{eq:product-divergence}
\end{equation}
Apply Lemma~\ref{lem:stopped-freedman} to the optimal coordinate from time zero.
Its compensator diverges by \eqref{eq:optimal-positive-drift} and
\eqref{eq:product-divergence}, its variance is self-bounded by
\eqref{eq:optimal-var-drift}, and its centered increments have magnitude at most
$2\eta C$.  The divergence conclusion of the lemma gives
$\theta_t(a^*)\to+\infty$ almost surely.

The equivalence in \eqref{eq:exploration-bc} now gives
$\sum_t\pi_t(i)=\infty$ for every rival.  All hypotheses of
Lemma~\ref{lem:general-k-elimination} hold, so every rival logit tends to $-\infty$ and
\eqref{eq:abstract-convergence} follows.
\end{proof}

\adaptationnote{
The rate proof in Theorem~E.1 of Appendix~E in \citet{robertson2025reinforce} is
driven by the optimal-logit compensator, not by stationarity of every realized reward
law. Because offset cancellation makes it exactly equal to the fixed-$\mu$ stationary
bandit compensator, the same scalar mechanism can be proved directly without waiting
for $c_t$ to converge. The following argument also replaces the auxiliary deterministic
recursion cited in that appendix by a direct exponential potential calculation.}

\begin{corollary}[Rate for tail averages under a common shift]
\label{cor:common shift-rate}
Under Lemma~\ref{lem:common shift}, suppose $K\ge2$, and define
\[
\delta:=\min_{i\ne a^*}[\mu(a^*)-\mu(i)]>0,
\qquad
L:=\max_{i\ne a^*}[\mu(a^*)-\mu(i)],
\qquad
\kappa:=\frac{\eta\delta}{4}.
\]
On an event of probability one, there is a finite random integer $\tau$ such that,
simultaneously for every integer $T>\tau$,
\begin{equation}
\frac1{T-\tau}\sum_{t=\tau}^{T-1}
\left(\mu(a^*)-\sum_a\pi_t(a)\mu(a)\right)
\le
\frac{L}{\kappa(T-\tau)}
\log\!\left(1+(K-1)(e^\kappa-1)(T-\tau)\right).
\label{eq:common shift-rate-explicit}
\end{equation}
Consequently, the left-hand side is
$O((1+\log(1+T-\tau))/(T-\tau))$ pathwise.  For $K=1$, it is identically zero.
\end{corollary}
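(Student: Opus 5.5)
The plan is to turn the exact compensator of the optimal logit into a scalar potential recursion, and to do so pathwise. By Lemma~\ref{lem:common shift} and its proof we already have, almost surely, $\theta_t(a^*)\to+\infty$ and $\sum_t\pi_t(i)=\infty$ for every rival. By \eqref{eq:eventual-cone} in Lemma~\ref{lem:general-k-elimination}, the process is eventually trapped in the cone $\{p_t\ge\rho\}$, where every rival has drift $m_t(i)\le-\eta\gamma\pi_t(i)$. Write $\bar d_t:=\sum_{i\ne a^*}\pi_t(i)[\mu(a^*)-\mu(i)]$, which is exactly the instantaneous suboptimality on the left of \eqref{eq:common shift-rate-explicit}. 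It satisfies
\[
\delta(1-p_t)\le\bar d_t\le L(1-p_t),
\]
and the optimal-logit compensator is $\lambda_t:=\E_t[\Delta\theta_t(a^*)]=\eta p_t\bar d_t$ by \eqref{eq:rankfree-optimal-drift}. The predictable shift $c_t$ never appears in $\lambda_t$. This is why no convergence of $c_t$ is needed.

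\emph{Step 1 (pathwise tracking).} I would apply the eventual-tracking conclusion \eqref{eq:freedman-eventual-tracking} of Lemma~\ref{lem:stopped-freedman} in two places. The first is the optimal coordinate, whose variance is self-bounded by \eqref{eq:optimal-var-drift} and whose compensator diverges. The second is each rival coordinate, run from deterministic restart times $n$ with the stopping indicator $\mathbf 1\{t<\sigma_n\}$ exactly as in Step~3 of Lemma~\ref{lem:general-k-elimination}, taking the countable union over $n$.

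This yields a finite random time $\tau$ with the following properties for all $t\ge\tau$:
\begin{itemize}
\item $p_t\ge\max\{\rho,1/2\}$;
\item rival logits never exceed their values at $\tau$ by more than a constant that is absorbed by moving $\tau$ later;
\item $\theta_t(a^*)-\theta_\tau(a^*)\ge\tfrac12(\Lambda_t-\Lambda_\tau)$ with $\Lambda_t:=\sum_{s<t}\lambda_s$.
\end{itemize}
Since $\theta_t(a^*)\to\infty$ while the rivals are bounded above, we may also take $\tau$ so late that $\theta_\tau(a^*)-\max_i\theta_\tau(i)\ge0$. Combining these facts gives
\[
1-p_t\le\sum_{i\ne a^*}e^{\theta_t(i)-\theta_t(a^*)}\le(K-1)e^{-x_t/2},
\qquad x_t:=\Lambda_t-\Lambda_\tau .
\]

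\emph{Step 2 (exponential potential).} On the same tail, $\lambda_t\le\eta L(1-p_t)$, so $x_{t+1}\le x_t+\eta L(K-1)e^{-x_t/2}$; the increments are also bounded because $|\Delta\theta|\le\eta C$. Rather than solving an auxiliary recursion, I would track $\Phi_t:=e^{\kappa' x_t}$ for a suitable $\kappa'$ and show $\Phi_{t+1}-\Phi_t\le(K-1)(e^{\kappa}-1)$. The mechanism is convexity of the exponential together with the factor $e^{-x_t/2}$ cancelling the growth of $\Phi_t$. Summing gives $x_T\le\kappa'^{-1}\log(1+(K-1)(e^\kappa-1)(T-\tau))$. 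Finally, $\sum_{t=\tau}^{T-1}\bar d_t\le L\sum(1-p_t)\le \frac{L}{\eta\delta p_{\min}}x_T$, and $p_t\ge1/2$ on the tail converts this into the stated $\frac{L}{\kappa(T-\tau)}\log(\cdots)$ with $\kappa=\eta\delta/4$. Dividing by $T-\tau$ gives the corollary, and $K=1$ is trivial.

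\emph{Main obstacle.} The delicate step is matching constants in Step~2. The exponent in the bound on $1-p_t$ is $x_t/2$, which comes from the tracking inequality. The exponent in the conversion from $x_T$ to $\sum(1-p_t)$ involves $\eta\delta/2$. The choice $\kappa=\eta\delta/4$ must be compatible with both. If the one-step bound on $\Phi$ does not close directly, I would first try rescaling the potential to $e^{x_t/2}$, bounding $e^{x_{t+1}/2}-e^{x_t/2}$ by the mean-value theorem using the bounded increment, and only afterwards re-express the result through $\kappa$, possibly weakening the constant inside the logarithm.
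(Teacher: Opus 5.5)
Your skeleton matches the paper's: eventual Freedman tracking of the optimal-logit compensator, in which $c_t$ never appears, followed by an exponential potential. However, Step~2 as you set it up cannot produce the displayed inequality, whose logarithm contains exactly $(K-1)(e^{\kappa}-1)$ with $\kappa=\eta\delta/4$.

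\textbf{Where Step~2 breaks.} The obstruction is the potential variable. You exponentiate the cumulative compensator $x_t$. Its one-step increment $\lambda_t=\eta p_t\bar d_t$ can only be bounded through $\lambda_t\le\eta L(1-p_t)$. The final conversion $\sum_{t=\tau}^{T-1}(1-p_t)\le\frac{2}{\eta\delta}x_T$, by contrast, uses the lower constant $\delta$. Matching the prefactor $L/\kappa=4L/(\eta\delta)$ forces $\kappa'=1/2$. Your two inequalities, $1-p_t\le(K-1)e^{-x_t/2}$ and $\lambda_t\le\eta L(1-p_t)$ with $1-p_t\le\frac12$, then support at best
\[
e^{x_t/2}\bigl(e^{\lambda_t/2}-1\bigr)\le\frac{K-1}{1-p_t}\bigl(e^{\eta L(1-p_t)/2}-1\bigr)\le 2(K-1)\bigl(e^{\eta L/4}-1\bigr).
\]
This always exceeds $(K-1)(e^{\eta\delta/4}-1)$, and taking $\kappa'<1/2$ instead inflates the prefactor. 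So the claimed step $\Phi_{t+1}-\Phi_t\le(K-1)(e^\kappa-1)$ is exactly where the argument fails. Your route gives a valid bound with the right prefactor $L/\kappa$ and the right order, but with $2(K-1)(e^{\eta L/4}-1)$ inside the logarithm. It proves the $O(\cdot)$ consequence, not the stated explicit inequality.

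\textbf{The missing idea.} Pass to error units before exponentiating.
\begin{itemize}
\item Normalize $\theta_0(a^*)=0$ and let $\Lambda_t$ be the compensator from time zero.
\item Choose $\tau$ so that for $t\ge\tau$ three things hold: the tracking bound $\theta_t(a^*)\ge\frac12\Lambda_t$, $p_t\ge\frac12$, and $\theta_t(i)\le0$ for every rival. The last follows from elimination, $\theta_t(i)\to-\infty$; no rival-side Freedman bounds or cone restarts are needed.
\item With $S_t:=\sum_{s=\tau}^{t-1}(1-p_s)$, drop the nonnegative pre-$\tau$ drift to get $\theta_t(a^*)\ge\frac12\Lambda_t\ge\frac{\eta\delta}{4}S_t=\kappa S_t$, hence $1-p_t\le(K-1)e^{-\kappa S_t}$.
\item Now the feedback variable is the one being summed, and its increments lie in $[0,1]$. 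The chord bound $e^{\kappa y}-1\le y(e^\kappa-1)$ gives $e^{\kappa S_{t+1}}-e^{\kappa S_t}\le(K-1)(e^\kappa-1)$ exactly.
\end{itemize}
In this route $L$ enters only at the very end, via $\bar d_t\le L(1-p_t)$.

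\textbf{A smaller issue in Step~1.} Tracking from time zero gives the absolute bound $\theta_t(a^*)\ge\theta_0(a^*)+\frac12\Lambda_t$. It does not give your increment bound $\theta_t(a^*)-\theta_\tau(a^*)\ge\frac12(\Lambda_t-\Lambda_\tau)$ for all $t\ge\tau$. From time zero you only get $\frac12\Lambda_t-\frac32\Lambda_\tau$, and restarting at $\tau$ tracks only after a further random time. The absolute form above avoids this problem entirely.
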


\begin{proof}
The case $K=1$ was separated in the statement, so assume $K\ge2$.  Softmax and the
score update are invariant under adding the same constant to every logit.  Subtracting
$\theta_0(a^*)\mathbf 1$ from the entire logit trajectory, we may therefore assume
without loss of generality that $\theta_0(a^*)=0$.

Put
\[
p_t:=\pi_t(a^*),
\qquad
\varepsilon_t:=1-p_t,
\qquad
g_t:=\E_t[\Delta\theta_t(a^*)],
\]
and define the cumulative conditional drift and its martingale remainder by
\[
\Xi_t:=\sum_{s=0}^{t-1}g_s,
\qquad
M_t:=\theta_t(a^*)-\Xi_t.
\]
The proof of Lemma~\ref{lem:common shift} gives
\begin{equation}
g_t\ge\eta\delta p_t\varepsilon_t,
\qquad
\Var_t(\Delta\theta_t(a^*))
\le\frac{\eta C^2}{\delta}g_t,
\label{eq:rate-drift-self-bound}
\end{equation}
and \eqref{eq:product-divergence} implies $\Xi_t\to\infty$.  Applying the quantitative
conclusion \eqref{eq:freedman-eventual-tracking} of
Lemma~\ref{lem:stopped-freedman} from time zero therefore shows that, almost surely,
\begin{equation}
M_t\ge-\frac12\Xi_t
\qquad\text{for all sufficiently large }t.
\label{eq:rate-martingale-tracking}
\end{equation}

The convergence and elimination parts of Lemma~\ref{lem:common shift} prove
$p_t\to1$ and $\theta_t(i)\to-\infty$ for every $i\ne a^*$. Hence, on the same event
of probability one, choose a finite random integer $\tau$ such that, for every
$t\ge\tau$, \eqref{eq:rate-martingale-tracking} holds, $p_t\ge1/2$, and
$\theta_t(i)\le0$ for all $i\ne a^*$.  Define
\[
S_t:=\sum_{s=\tau}^{t-1}\varepsilon_s,
\qquad t\ge\tau.
\]
Since $g_s\ge0$, \eqref{eq:rate-drift-self-bound} yields
\[
\Xi_t
\ge\sum_{s=\tau}^{t-1}g_s
\ge\frac{\eta\delta}{2}S_t.
\]
Using $\theta_0(a^*)=0$ and \eqref{eq:rate-martingale-tracking}, we obtain
\begin{equation}
\theta_t(a^*)=\Xi_t+M_t
\ge\frac12\Xi_t
\ge\kappa S_t.
\label{eq:rate-optimal-logit-lower}
\end{equation}
The softmax odds identity and $\theta_t(i)\le0$ now give
\begin{align}
\varepsilon_t
&\le\frac{\varepsilon_t}{p_t}
 =\sum_{i\ne a^*}\exp\{\theta_t(i)-\theta_t(a^*)\}\nonumber\\
&\le(K-1)e^{-\theta_t(a^*)}
\le(K-1)e^{-\kappa S_t}.
\label{eq:rate-error-exponential-feedback}
\end{align}

Let $U_t:=e^{\kappa S_t}$.  For $x\in[0,1]$, convexity gives the chord bound
$e^{\kappa x}-1\le x(e^\kappa-1)$.  Thus
\[
U_{t+1}-U_t
=U_t(e^{\kappa\varepsilon_t}-1)
\le(e^\kappa-1)U_t\varepsilon_t
\le(K-1)(e^\kappa-1).
\]
Since $U_\tau=1$, summing from $\tau$ to $T-1$ yields
\begin{equation}
S_T\le\frac1\kappa
\log\!\left(1+(K-1)(e^\kappa-1)(T-\tau)\right).
\label{eq:rate-cumulative-error-log}
\end{equation}
Finally,
\[
\mu(a^*)-\sum_a\pi_t(a)\mu(a)
=\sum_{i\ne a^*}\pi_t(i)[\mu(a^*)-\mu(i)]
\le L\varepsilon_t.
\]
Summing this inequality and applying \eqref{eq:rate-cumulative-error-log} gives
\[
\sum_{t=\tau}^{T-1}
\left(\mu(a^*)-\sum_a\pi_t(a)\mu(a)\right)
\le
\frac{L}{\kappa}
\log\!\left(1+(K-1)(e^\kappa-1)(T-\tau)\right).
\]
Dividing by $T-\tau$ proves \eqref{eq:common shift-rate-explicit}.  The stated order
follows from $1+\alpha N\le(1+\alpha)(1+N)$ for $N\ge1$, with
$\alpha:=(K-1)(e^\kappa-1)$. 
\end{proof}

\adaptationnote{
The final multi-environment bandit specialization depends only on the
pooled mean vector.}

\begin{proof}[\textbf{Proof of Proposition}~\ref{prop:convergence}]

The common strict optimum implies
\[
\bar r(a^*)-\bar r(i)
=
\sum_z q_z\bigl(r_z(a^*)-r_z(i)\bigr)
>0
\qquad (i\ne a^*).
\]
By Lemma~\ref{lem:effective-common shift}, each of the three baseline
processes satisfies the assumptions of Lemma~\ref{lem:common shift} with
$C=2R$ and $\mu=\bar r$. This proves the almost-sure convergence conclusion.
Equation~\plaineqref{eq:app-pooled-drift} records their common conditional mean update.

For each baseline process, Corollary~\ref{cor:common shift-rate} gives an
almost surely finite random integer $\tau_B$. Define
\[
\delta_{\bar r}
:=
\min_{i\ne a^*}
\bigl[\bar r(a^*)-\bar r(i)\bigr]
>0,
\qquad
L_{\bar r}
:=
\max_{i\ne a^*}
\bigl[\bar r(a^*)-\bar r(i)\bigr],
\]
\[
\kappa_{\bar r}
:=
\frac{\eta\delta_{\bar r}}{4},
\qquad
\alpha_{\bar r}
:=
(K-1)\bigl(e^{\kappa_{\bar r}}-1\bigr),
\]
and
\[
\mathcal E_t^B
:=
\bar r(a^*)-\sum_a\pi_t^B(a)\bar r(a).
\]
For every integer $T>\tau_B$, putting $N:=T-\tau_B$, the corollary gives
\[
\frac{1}{N}
\sum_{t=\tau_B}^{T-1}\mathcal E_t^B
\le
\frac{L_{\bar r}}{\kappa_{\bar r}N}
\log\!\left(1+\alpha_{\bar r}N\right).
\]
The sum before $\tau_B$
\[
C_B^{\mathrm{pre}}
:=
\sum_{t=0}^{\tau_B-1}\mathcal E_t^B
\]
is finite almost surely. Hence, for every $T>\tau_B$,
\[
\frac1T\sum_{t=0}^{T-1}\mathcal E_t^B
\le
\frac{C_B^{\mathrm{pre}}}{T}
+
\frac{L_{\bar r}}{\kappa_{\bar r}T}
\log\!\left(
1+\alpha_{\bar r}(T-\tau_B)
\right).
\]
Set
\[
T_{0,B}:=\max\{\tau_B+1,2\}
\]
and
\[
C_B
:=
\frac{C_B^{\mathrm{pre}}}{\log 2}
+
\frac{L_{\bar r}}{\kappa_{\bar r}}
\left(
1+
\frac{\log(1+\alpha_{\bar r})}{\log 2}
\right).
\]
Both are finite almost surely. For every integer $T\ge T_{0,B}$,
\[
\log\!\left(
1+\alpha_{\bar r}(T-\tau_B)
\right)
\le
\log(1+\alpha_{\bar r})+\log T,
\]
and therefore
\[
\frac1T\sum_{t=0}^{T-1}\mathcal E_t^B
\le
C_B\frac{\log T}{T}.
\]
This proves~\eqref{eq:prop1-rate}. The entrance time may differ across
baseline processes, but the same deterministic bound applies after entrance as a
function of the tail length. The argument does not require the
shared or conditional offset to converge.
\end{proof}

\section{Proof of Proposition~\ref{prop:single-ratchet}}
\label{app:single-ratchet}

\begin{proof}
Let $p:=\pi(a^*)$, $\pi_i:=\pi(i)$, and
$x_i:=\theta(a^*)-\theta(i)$ for $i\ne a^*$. Finite logits and increments bounded over
a single step imply that every softmax probability at finite time is strictly positive.
Write $r_*=r(a^*)$, $r_{(2)}=\max_{i\ne a^*}r(i)$,
$r_-=\min_a r(a)$, and define the finite threshold
\[
\bar p:=\max\left\{\frac12,
\frac{r_{(2)}-r_-}{r_*-r_-}\right\}<1.
\]

\emph{Value baseline.}
Suppose $p>\bar p$. Then $p>1/2$, so $p>\pi_i$ for every rival, and
\[
V^\pi\ge p r_*+(1-p)r_->r_{(2)}.
\]
At any finite time, softmax has full support and $a^*$ is uniquely optimal, so also $V^\pi<r_*$. 
Consequently $A(a^*)=r_*-V^\pi>0$ and $A(i)=r(i)-V^\pi<0$ for every rival. If $a^*$
is sampled, then, for every $i\ne a^*$,
\begin{equation}
\Delta x_i=\eta A(a^*)\bigl(1-p+\pi_i\bigr)>0.
\label{eq:app-opt-branch}
\end{equation}
If rival $j$ is sampled, then
\begin{equation}
\Delta x_j=-\eta A(j)\bigl(1+p-\pi_j\bigr)>0,
\qquad
\Delta x_i=\eta A(j)(\pi_i-p)>0\quad(i\ne j).
\label{eq:app-rival-branch}
\end{equation}
Thus every possible branch strictly increases every margin between the optimal arm and a rival. Since
\[
\pi(a^*)=\left(1+\sum_{i\ne a^*}e^{-x_i}\right)^{-1},
\]
it strictly increases as well. The set $\{p>\bar p\}$ is therefore invariant under future updates.
Proposition~\ref{prop:convergence} gives $p_t\to1$ almost surely, so its first entrance
time $\tau_V$ is almost surely finite, proving \eqref{eq:single-ratchet}.  For $K=2$,
the two displays reduce to the two familiar positive margin increments under value centering.

\emph{No baseline.}
Define the nonempty set of rivals with positive rewards
$\mathcal I_+:=\{i\ne a^*:r(i)>0\}$.
Let $M:=\sum_{i\ne a^*}x_i$. For a sampled arm $a$, direct summation of the score update
gives the exact identity
\begin{equation}
\Delta M=K\eta r(a)\bigl(\ind{a=a^*}-p\bigr).
\label{eq:app-total-margin}
\end{equation}
Hence every draw $a=i\in\mathcal I_+$ gives
$\Delta M=-K\eta r(i)p<0$. Conditional on the history before the action, its probability is
$\sum_{i\in\mathcal I_+}\pi_t(i)>0$ at every finite time.

Again by Proposition~\ref{prop:convergence}, almost surely there is a finite time after
which $p_t>1/2$. Thereafter, if $i\in\mathcal I_+$ is sampled, then
\[
\Delta x_i=-\eta r(i)(1+p_t-\pi_t(i))<0,
\qquad
\Delta x_j=\eta r(i)(\pi_t(j)-p_t)<0\quad(j\ne i).
\]
All margins between the optimal arm and a rival, and therefore $p_t$, strictly decrease.
The exploration argument in the proof of Lemma~\ref{lem:common shift} shows that
softmax REINFORCE with bounded updates samples every arm infinitely often almost surely.
Thus, on the intersection of
the convergence and exploration events, which has probability one, every fixed $i\in\mathcal I_+$ produces
infinitely many strict drawdowns. This proves the second claim and
\eqref{eq:no-eventual-ratchet}.
\end{proof}

\section{Proofs for Section~\ref{sec:mismatch}}
\label{app:mismatchproofs}

\begin{lemma}[Population targets of shared and conditional critics]
\label{lem:projection}
Assume $G\in L^2$. Among square-integrable critics that observe $S$ but not
$Z$, the minimizer of the population squared error is
\[
\bar V^\pi(S):=\E[G\mid S]
=\sum_zq_z^\pi(S)V_z^\pi(S)
\quad\text{almost surely}.
\]
Among critics that also observe $Z$, the corresponding minimizer is
$V_Z^\pi(S):=\E[G\mid S,Z]$. Their optimal population risks differ by
\[
\E\!\left[(V_Z^\pi(S)-\bar V^\pi(S))^2\right].
\]
Values outside the support of the data distribution are unconstrained.
\end{lemma}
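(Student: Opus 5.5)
The plan is to treat this as the standard $L^2$ projection property of conditional expectation, applied once with information $\sigma(S)$ and once with $\sigma(S,Z)$, and then use the tower property to compare the two optimal risks.

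\emph{Step 1: shared critic.} Let $f(S)$ be any square-integrable function of $S$. Since $G\in L^2$, the conditional expectation $\bar V^\pi(S):=\E[G\mid S]$ exists and lies in $L^2$. Expand
\[
\E[(G-f(S))^2]=\E[(G-\bar V^\pi(S))^2]+\E[(\bar V^\pi(S)-f(S))^2]+2\E[(G-\bar V^\pi(S))(\bar V^\pi(S)-f(S))].
\]
The cross term vanishes by conditioning on $S$, because $\bar V^\pi(S)-f(S)$ is $S$-measurable and $\E[G-\bar V^\pi(S)\mid S]=0$. Hence $\bar V^\pi(S)$ minimizes the risk, and any other minimizer agrees with it almost surely under the law of $S$. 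This is why values off the support are unconstrained: changing $f$ on a null set of $S$-values does not change the risk.

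The mixture representation also follows from the tower property:
\[
\E[G\mid S]=\E\bigl[\E[G\mid S,Z]\mid S\bigr]=\sum_z \Prob(Z=z\mid S)\,V_z^\pi(S).
\]
This is a finite sum when $\mathcal Z$ is finite, and an integral against the conditional law of $Z$ given $S$ in general.

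\emph{Step 2: conditional critic.} The argument is identical with the information $\sigma(S,Z)$. It gives the minimizer $V_Z^\pi(S)=\E[G\mid S,Z]$, again unique almost surely.

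\emph{Step 3: risk gap.} Apply the Step 1 decomposition with $f(S)=\bar V^\pi(S)$, but orthogonalize against $\sigma(S,Z)$ instead:
\[
\E[(G-\bar V^\pi(S))^2]=\E[(G-V_Z^\pi(S))^2]+\E[(V_Z^\pi(S)-\bar V^\pi(S))^2].
\]
The cross term vanishes because $V_Z^\pi(S)-\bar V^\pi(S)$ is $(S,Z)$-measurable and $\E[G-V_Z^\pi(S)\mid S,Z]=0$. This gives exactly the stated gap.

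\emph{Main obstacle.} None of these steps is deep. The only care needed is measure-theoretic bookkeeping: stating uniqueness almost surely with respect to the data distribution, and identifying $q_z^\pi(S)$ as a regular conditional probability so the mixture formula holds almost surely. I would state these points explicitly rather than compute anything further.
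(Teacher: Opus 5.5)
Your proposal is correct and follows essentially the same route as the paper: the same Pythagorean decomposition of the squared error under $\sigma(S)$ and under $\sigma(S,Z)$, with the risk gap obtained by orthogonalizing $G-\bar V^\pi(S)$ against $\sigma(S,Z)$. The only addition is that you spell out the tower-property derivation of the mixture formula $\sum_z q_z^\pi(S)V_z^\pi(S)$, which the paper leaves implicit.
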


\begin{proof}
For any shared critic $\widehat V(S)$, conditional expectation gives
\[
\E[(\widehat V(S)-G)^2]
=\E[(\widehat V(S)-\E[G\mid S])^2]
+\E[(G-\E[G\mid S])^2].
\]
The first term is uniquely minimized almost surely by
$\widehat V(S)=\bar V^\pi(S)$. Conditioning instead on $(S,Z)$ gives the
conditional target $V_Z^\pi(S)$. Finally, applying the same orthogonal
decomposition to $V_Z^\pi(S)$ and $\bar V^\pi(S)$ gives the stated risk gap.
\end{proof}

\begin{proof}[\textbf{Proof of Proposition}~\ref{prop:pool-ratchet}]
Write
$\pi_t^s:=\pi_t^{B^{\mathrm{shared}}}$ and
$\pi_t^c:=\pi_t^{B^{\mathrm{cond}}}$, with
$p_t^B:=\pi_t^B(a^*)$. For the two processes, define
\[
A_{s,t}(z,a):=r_z(a)-\bar V^{\pi_t^s},
\qquad
A_{c,t}(z,a):=r_z(a)-V_z^{\pi_t^c},
\]
where $V_z^\pi:=\sum_a\pi(a)r_z(a)$ and
$\bar V^\pi:=\sum_zq_zV_z^\pi$. These are exactly the centered residuals induced
by the shared and conditional oracle baselines defined in Section~\ref{sec:setting}.

For the conditional process, put
\[
r_z^{(2)}:=\max_{i\ne a^*}r_z(i),\qquad
r_z^-:=\min_a r_z(a),\qquad
\bar p_c:=\max_z\max\left\{\frac12,
\frac{r_z^{(2)}-r_z^-}{r_z(a^*)-r_z^-}\right\}<1.
\]
For the shared process, put
\[
\bar r_*:=\sum_zq_zr_z(a^*),\qquad
\mathcal Z_-:=\{z:r_z(a^*)<\bar r_*\},\qquad
q_-:=\sum_{z\in\mathcal Z_-}q_z,
\]
and abbreviate
$e_z^\star:=r_z(a^*)-\bar r_*$ and
$g_z(i):=r_z(a^*)-r_z(i)$.

\emph{Conditional ratchet.}
Let $p:=p_t^c$. If $p>\bar p_c$, then, simultaneously for every environment $z$,
$p>1/2$ and
\[
V_z^{\pi_t^c}\ge p\,r_z(a^*)+(1-p)r_z^->r_z^{(2)}.
\]
Thus $A_{c,t}(z,a^*)>0>A_{c,t}(z,i)$ for every rival. The branch calculations
in \eqref{eq:app-opt-branch} and \eqref{eq:app-rival-branch}, applied with the rewards of the
sampled environment, show that every possible combination of environment and action strictly increases
every margin between the optimal arm and a rival. Hence $p$ strictly increases and the region
$\{p>\bar p_c\}$ is invariant under future updates. Proposition~\ref{prop:convergence} gives
$p_t^c\to1$ almost surely, so the first entrance time $\tau_c$ is finite almost surely.
This proves \eqref{eq:conditional-pool-ratchet}.

\emph{Limiting shared signs.}
Along the process with the shared baseline, Proposition~\ref{prop:convergence} yields
$\pi_t^s(a^*)\to1$ almost surely. Consequently,
\[
\bar V^{\pi_t^s}\longrightarrow
\bar r_*:=\sum_zq_zr_z(a^*).
\]
For each $z\in\mathcal Z_-$ define
$\delta_z:=\bar r_*-r_z(a^*)>0$. Then
\[
A_{s,t}(z,a^*)=r_z(a^*)-\bar V^{\pi_t^s}\longrightarrow-\delta_z
\qquad\text{almost surely}.
\]
More generally, for every rival $i$,
\[
A_{s,t}(z,i)=r_z(i)-\bar V^{\pi_t^s}
\longrightarrow r_z(i)-\bar r_*=e_z^\star-g_z(i).
\]
These limits give the signs of the optimal arm in hard environments and rival arms in easy environments discussed in
Section~\ref{sec:mismatch}.

\emph{Mismatch severity.}
Let $\bar r(a):=\sum_zq_zr_z(a)$ and define the suboptimality in average reward
$\mathcal E_t^s:=\bar r_*-\sum_a\pi_t^s(a)\bar r(a)
=\bar r_*-\bar V^{\pi_t^s}\ge0$.
For a hard environment $z$, the identity
$A_{s,t}(z,a^*)=\mathcal E_t^s-\delta_z$ is exact. Hence its update to the optimal arm flips
sign precisely when $\mathcal E_t^s<\delta_z$, and after the flip its residual magnitude
is $\delta_z-\mathcal E_t^s$. A larger limiting mismatch therefore moves the boundary at
which the sign flips to a larger remaining error in average reward and increases the drawdown magnitude at a
fixed policy.  Likewise, for an easy environment let
$e_z^+:=r_z(a^*)-\bar r_*>0$.  Every rival $i$ has the exact shared residual
$A_{s,t}(z,i)=e_z^+-g_z(i)+\mathcal E_t^s$; increasing $e_z^+$ lifts more rivals above
zero and strengthens every promoted branch.  These identities order onset by policy
quality $\mathcal E_t^s$: along the same trajectory under the shared baseline, a larger $\delta_z$ threshold
is crossed no later than a smaller one.  They do not, by themselves, order hitting
times across separately trained processes.  For a fixed set $\mathcal Z_-$, severity
changes onset and magnitude, not the limiting trigger frequency $q_-$ proved below;
changing the offsets can, of course, change membership in $\mathcal Z_-$.

\emph{Persistent shared drawdowns.}
Fix $z\in\mathcal Z_-$. For every $\epsilon\in(0,\delta_z)$, there is an almost surely finite time
$T_{z,\epsilon}$ such that
$A_{s,t}(z,a^*)\le-(\delta_z-\epsilon)$ for all $t\ge T_{z,\epsilon}$.
On a round with $(Z_t,a_t)=(z,a^*)$, the softmax policy gradient update gives, for every
$i\ne a^*$,
\begin{align*}
&\bigl[\theta^s_{t+1}(a^*)-\theta^s_{t+1}(i)\bigr]
-\bigl[\theta^s_t(a^*)-\theta^s_t(i)\bigr]\\
&\qquad=
\eta A_{s,t}(z,a^*)
\bigl[(1-\pi^s_t(a^*))-(-\pi^s_t(i))\bigr]\\
&\qquad=
\eta A_{s,t}(z,a^*)
\bigl(1-\pi^s_t(a^*)+\pi^s_t(i)\bigr)\\
&\qquad\le
-\eta(\delta_z-\epsilon)
\bigl(1-\pi^s_t(a^*)+\pi^s_t(i)\bigr)<0.
\end{align*}
The optimal probability can be written as
\[
\pi^s_t(a^*)=
\left(1+\sum_{i\ne a^*}
\exp\{-[\theta^s_t(a^*)-\theta^s_t(i)]\}\right)^{-1},
\]
which is strictly increasing in every margin between the optimal arm and a rival. Their simultaneous strict
decrease therefore implies $\pi^s_{t+1}(a^*)<\pi^s_t(a^*)$.

Let $E_t^z:=\{Z_t=z,a_t=a^*\}$. With $\mathcal F_t$ denoting the history before the
environment and arm are sampled (so $E_t^z\in\mathcal F_{t+1}$),
\[
\Prob(E_t^z\mid\mathcal F_t)=q_z\pi^s_t(a^*)\longrightarrow q_z>0
\qquad\text{almost surely}.
\]
Thus $\sum_t\Prob(E_t^z\mid\mathcal F_t)=\infty$ almost surely, and L\'evy's conditional
Borel--Cantelli lemma gives $\Prob(E_t^z\ \mathrm{i.o.})=1$. Every sufficiently late
occurrence is a strict drawdown.

For the frequency statement, let
$J_t:=\ind{Z_t\in\mathcal Z_-,\,a_t=a^*}$ and
$q_-:=\sum_{z\in\mathcal Z_-}q_z$. Then
\[
\E[J_t\mid\mathcal F_t]=q_-\pi^s_t(a^*)\longrightarrow q_-.
\]
The differences $J_t-\E[J_t\mid\mathcal F_t]$ are bounded martingale differences, so
the martingale strong law and Ces\`aro convergence imply
\[
\frac1n\sum_{t=0}^{n-1}J_t\longrightarrow q_-
\qquad\text{almost surely}.
\]
Because $\mathcal Z_-$ is finite, the common time
$T_-:=\max_{z\in\mathcal Z_-}T_{z,\delta_z/2}$ is almost surely finite. After $T_-$,
every event counted by $J_t$ is a strict drawdown. Conversely, if
$Z_t\notin\mathcal Z_-$ and $a_t=a^*$, then
$A_{s,t}(Z_t,a^*)=r_{Z_t}(a^*)-\bar r_*+\mathcal E_t^s>0$ at every finite time,
so that branch increases the optimal probability. Let
$I_t^\downarrow:=\ind{\pi^s_{t+1}(a^*)<\pi^s_t(a^*)}$ and
$R_t:=\ind{a_t\ne a^*}$. For every $t\ge T_-$,
\[
J_t\le I_t^\downarrow\le J_t+R_t.
\]
The same martingale strong law gives
$n^{-1}\sum_{t<n}R_t\to0$ almost surely because
$\E[R_t\mid\mathcal F_t]=1-\pi_t^s(a^*)\to0$. Therefore
\[
\frac1n\sum_{t=0}^{n-1}I_t^\downarrow\longrightarrow q_->0
\qquad\text{almost surely}.
\]
This exact frequency statement implies the infinitely often conclusion in
\eqref{eq:shared-persistent-drawdown}.
\end{proof}

\paragraph{Branch comparisons for the same sample.}
Fix an interior policy $\pi$, write $p:=\pi(a^*)$, and define each environment value
$V_z:=\sum_a\pi(a)r_z(a)$.  Write $\bar V^\pi:=\sum_zq_zV_z$ for the shared value,
and let $E$ and $H$ denote environments satisfying $V_E>\bar V^\pi>V_H$.
Condition on the history before the update and compare the conditional and shared baselines on
the same sampled tuple $(Z,a,G)$, where $G=r_Z(a)$ in the deterministic bandit. Writing
$e_Z^\pi:=V_Z-\bar V^\pi$, the score update with one logit per arm gives the following
exact identity when both baselines are evaluated at the same policy and on the same sample:
\begin{equation}
\Delta\theta^{\mathrm{shared}}-\Delta\theta^{\mathrm{cond}}
=\eta e_Z^\pi(\mathbf e_a-\pi).
\label{eq:app-update-shift}
\end{equation}
This comparison couples the two baselines for one branch; after their policies diverge,
it does not equate their subsequent updates. For
$x_i:=\theta(a^*)-\theta(i)$, an update using baseline $B$ gives
\[
\Delta x_i(B)
=\eta(G-B)\bigl(\ind{a=a^*}-\ind{a=i}-p+\pi(i)\bigr).
\]

On the event in the easy environment where $Z=E$, $a=i\ne a^*$, and
$\bar V^\pi<G<V_E$, this becomes
\begin{align*}
\Delta x_i^c
&=\eta(V_E-G)(1+p-\pi(i))>0,\\
\Delta x_i^s
&=-\eta(G-\bar V^\pi)(1+p-\pi(i))<0.
\end{align*}
Thus the same rival sample corrects the policy under conditional centering but moves it
in the wrong direction under shared centering.

On the event in the hard environment where $Z=H$, $a=a^*$, and $G>V_H$, it gives
\begin{align*}
\Delta x_i^c&=\eta(G-V_H)(1-p+\pi(i))>0,\\
\Delta x_i^s&=\eta(G-\bar V^\pi)(1-p+\pi(i)).
\end{align*}
Their difference is $\eta(\bar V^\pi-V_H)(1-p+\pi(i))$. If
$V_H<G<\bar V^\pi$, the second line is
negative, so shared centering reverses an otherwise helpful update to the optimal arm. If
$G>\bar V^\pi$, both updates have the correct sign, but their magnitude ratio is
$(G-\bar V^\pi)/(G-V_H)\in(0,1)$, so shared centering attenuates the helpful update.

\paragraph{Effect of the learning rate.}
For a fixed sampled branch, the logit discrepancy created by replacing the
conditional value with the shared value is proportional to $\eta$.  The resulting
policy trajectory is nonlinear, so this local fact does not imply a general ordering
of returns across learning rates.  Figure~\ref{fig:learning-rate-sweep} shows the
corresponding effect at the same finite budget in the instance from Figure~\ref{fig:simplex}.

\begin{center}
\begin{minipage}{\linewidth}
\centering
\includegraphics[width=\linewidth]{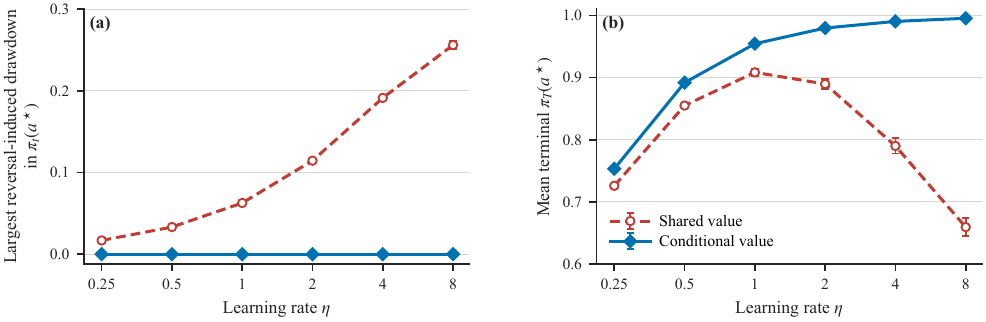}
\captionof{figure}{\textbf{At a fixed horizon, larger steps amplify the separation caused
by reversed updates in the instance from Figure~\ref{fig:simplex}.}
For each learning rate, the shared and conditional value baselines reuse the same paired
random streams. \textbf{(a)} Mean across $4{,}000$ runs of the largest decrease in a single step
in $\pi_t(a^*)$ on steps where $G-\bar V^{\pi_t}$ and
$G-V_{Z_t}^{\pi_t}$ have opposite signs; the conditional reference is zero by construction.
\textbf{(b)} Mean terminal probability of the optimal arm at $T=100$.
Error bars are $95\%$ confidence intervals. In this fixed instance, larger $\eta$
magnifies reversed branches and widens the separation at this fixed budget; the plot is not a
general monotonicity claim about learning rate or return.}
\label{fig:learning-rate-sweep}
\end{minipage}
\end{center}


\begin{lemma}[Attenuation in a hard environment before a sign flip]
\label{lem:hard-attenuation}
Fix a deterministic bandit with $K\ge2$ arms, unique optimal arm $a^*$, gaps
$\Delta_i:=r(a^*)-r(i)>0$, and learning rate $\eta>0$.  There exists
$\varepsilon_0>0$, depending only on $\eta$ and the reward gaps, with the following
property.  Write an interior frozen policy as
$p_*:=\pi(a^*)=1-\varepsilon$ and
$\pi(i)=\varepsilon w_i$, where every sum over $i$ below ranges over $i\ne a^*$ and
$\sum_iw_i=1$.  If
$0<\varepsilon<\varepsilon_0$, let $V:=\sum_a\pi(a)r(a)$ and compare the conditional
bar $V$ with a shared bar $V+d$ in a hard environment, where
$0<d<r(a^*)-V$.  Every sampled branch still moves probability toward $a^*$ under
both bars, and their expected logit updates are identical.  Let
$\pi_B^+(a^*)$ denote the probability of the optimal arm after one update using baseline $B$
and a sampled arm $a\sim\pi$.  Then
\[
\E_{a\sim\pi}\!\left[\pi_{V+d}^+(a^*)\right]
<
\E_{a\sim\pi}\!\left[\pi_V^+(a^*)\right].
\]
\end{lemma}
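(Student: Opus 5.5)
The plan is to reduce everything to the margins $x_i:=\theta(a^*)-\theta(i)$. In these coordinates the post-update optimal probability is a fixed smooth function of the margin increments, and the whole comparison becomes a first-order estimate in $d$.

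\textbf{Setup and branch signs.} Write $\pi^+(a^*)=\varphi(\Delta x)$ with
\[
\varphi(y):=\Bigl(1+\sum_i c_ie^{-y_i}\Bigr)^{-1},
\qquad
c_i:=\frac{\varepsilon w_i}{1-\varepsilon}.
\]
As in the branch comparisons above, the margin increment for baseline $B$ is
\[
\Delta x_i(B)=\eta\,(r(a)-B)\,u_i(a),
\qquad
u_i(a):=\ind{a=a^*}-\ind{a=i}-p_*+\pi(i).
\]
Since $\E_{a\sim\pi}[\mathbf e_a-\pi]=0$, we get $\E_a u(a)=0$. This gives the identity of the expected updates.

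For the branch signs, note that $r(a^*)-V=\varepsilon\bar\Delta$ with $\bar\Delta:=\sum_iw_i\Delta_i$.
\begin{itemize}
\item On the $a^*$ branch, the advantage under the bar $V+d$ is $\varepsilon\bar\Delta-d>0$, by the hypothesis on $d$.
\item On a rival branch $j$, the advantage is $-(\Delta_j-\varepsilon\bar\Delta)-d<0$ once $\varepsilon<\min_i\Delta_i/\max_i\Delta_i$.
\end{itemize}
Once $\varepsilon<1/2$ as well, the computations \eqref{eq:app-opt-branch}--\eqref{eq:app-rival-branch} then show that every branch increases every margin under both bars.

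\textbf{Exact first-order representation.} Coupling the two bars on the same sampled arm, we have $\Delta x(V+d)=\Delta x(V)-\eta d\,u(a)$. By the fundamental theorem of calculus,
\[
\E\bigl[\varphi(\Delta x(V+d))-\varphi(\Delta x(V))\bigr]
=-\eta d\int_0^1\E_a\!\left[\nabla\varphi\bigl(\Delta x(V+sd)\bigr)\cdot u(a)\right]ds.
\]
It therefore suffices to show that $\Phi(s):=\E_a[\nabla\varphi(\Delta x(V+sd))\cdot u(a)]>0$ for all $s\in[0,1]$, all $0<d<\varepsilon\bar\Delta$, all weights $w$, and all $\varepsilon<\varepsilon_0$. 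We have $\partial_i\varphi(y)=\varphi(y)^2c_ie^{-y_i}$.

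\textbf{Leading-order expansion.} Since $sd\le\varepsilon\bar\Delta$, every intermediate bar lies within $O(\varepsilon)$ of $V$. Hence the branch increments equal their $\varepsilon=0$ limits up to $O(\varepsilon)$, uniformly in $s,d,w$.
\begin{itemize}
\item On the $a^*$ branch (probability $1-\varepsilon$): $\Delta x_i=O(\varepsilon)$, $u_i=\varepsilon(1+w_i)+O(\varepsilon^2)$ and $\partial_i\varphi=\varepsilon w_i(1+O(\varepsilon))$. This branch contributes $+\varepsilon^2\sum_iw_i(1+w_i)+O(\varepsilon^3)$.
\item On a rival branch $j$ (probability $\varepsilon w_j$): $u_j\approx-2$ with $\Delta x_j\approx2\eta\Delta_j$, while for $i\ne j$ we have $u_i\approx-1$ with $\Delta x_i\approx\eta\Delta_j$. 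This branch contributes
\[
-\varepsilon^2w_j\Bigl(2w_je^{-2\eta\Delta_j}+\sum_{i\ne j}w_ie^{-\eta\Delta_j}\Bigr)+O(\varepsilon^3).
\]
\end{itemize}
Summing over $j$ and writing the positive term as $\varepsilon^2\sum_jw_j\sum_iw_i(1+\ind{i=j})$, we obtain
\[
\Phi(s)\ge\varepsilon^2(1-e^{-\eta\Delta_{\min}})\sum_jw_j\sum_iw_i(1+\ind{i=j})-O(\varepsilon^3)\ge\varepsilon^2(1-e^{-\eta\Delta_{\min}})-C\varepsilon^3,
\]
where $C$ depends only on $\eta$, $K$ and the gaps. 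Choosing $\varepsilon_0$ accordingly gives $\Phi>0$, and hence the strict inequality.

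\textbf{Main obstacle.} The main difficulty is making the $O(\varepsilon^3)$ remainders uniform in $w$ (some $w_i$ may be tiny), in $d$ and in $s$. This uniformity is what makes $\varepsilon_0$ depend only on $\eta$ and the gaps. I would handle it by writing each branch contribution exactly as $\varepsilon^2w_j(\cdots)$ with explicit factors. Then I would bound $|\varphi^2-1|$, $|e^{-\Delta x_i}-1|$ on the $a^*$ branch, and the deviation of the rival-branch exponents from $\eta\Delta_j$, all by constants times $\varepsilon$. Throughout I would use $\varphi\le1$, $|r(a)-V-sd|\le2\max_i\Delta_i$ and $\sum_iw_i=1$.
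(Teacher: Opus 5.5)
Your proposal is correct and follows essentially the paper's proof: your coupled interpolation $s\mapsto V+sd$, with $\Phi(s)=-H'(sd)/\eta$, is exactly the paper's $H(\lambda)$ argument, and your leading-order branch contributions reproduce its coefficient $\Gamma(w)\ge 1-e^{-\eta\Delta_{\min}}$. The only difference is in how the $O(\varepsilon^3)$ remainder is made uniform. You use explicit factor-by-factor bounds, where each term carries a factor $w_i$ so that $\sum_i w_i=1$ absorbs small weights. The paper instead rescales to $\ell=\lambda/\varepsilon$ and invokes a uniform Taylor bound on a compact set.
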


\begin{proof}
Put $\bar\Delta_w:=\sum_iw_i\Delta_i$.  The conditional advantages are
$A_*:=r(a^*)-V=\varepsilon\bar\Delta_w$ and
$A_i:=r(i)-V=-\Delta_i+\varepsilon\bar\Delta_w$.  Thus, after reducing
$\varepsilon_0$ if necessary, $A_*>0>A_i$ for every rival and
$p_*>1/2>\pi_i$.  Since $0<d<A_*$, raising the bar by $d$ preserves all signs;
the corresponding score directions strictly increase every optimal--rival margin.
Moreover, on a sampled arm $a$ the
difference between the two logit updates is $-\eta d(\mathbf e_a-\pi)$, whose expectation is
zero because $\sum_a\pi(a)(\mathbf e_a-\pi)=0$.

It remains to compare the nonlinear probabilities.  Interpolate the bar as
$V+\lambda$, $0\le\lambda\le d$, and let $H(\lambda)$ be the expected
optimal probability after the update. On the optimal branch use direction
$u_*=\mathbf e_{a^*}-\pi$; on a
branch for rival $i$ use $u_i=\pi-\mathbf e_i$. If
$F_a(t):=[\softmax(\theta+t u_a)]_{a^*}$, then
\[
H(\lambda)
=p_*F_*\!\left(\eta(A_*-\lambda)\right)
 +\sum_i\varepsilon w_iF_i\!\left(\eta(-A_i+\lambda)\right).
\]
The exact odds between the optimal arm and each rival give, uniformly over the vector $w$ of rival masses and
$\lambda\in[0,A_*]$,
\begin{align*}
p_*F_*'\!\left(\eta(A_*-\lambda)\right)
 &=\varepsilon^2\left(1+\sum_iw_i^2\right)+O(\varepsilon^3),\\
\sum_i\varepsilon w_iF_i'\!\left(\eta(-A_i+\lambda)\right)
 &=\varepsilon^2\sum_iw_i\!\left[
 2w_i e^{-2\eta\Delta_i}+(1-w_i)e^{-\eta\Delta_i}
 \right]+O(\varepsilon^3).
\end{align*}
For completeness, these expansions follow by dividing the probability of each arm after the update by that of the optimal arm. On the optimal branch the rival-$i$ odds are
\[
\frac{\varepsilon w_i}{1-\varepsilon}
\exp\{-t\varepsilon(1+w_i)\},
\]
where $t=O(\varepsilon)$; on rival-$i$'s branch, its own odds acquire
$e^{-2\eta\Delta_i}+O(\varepsilon)$ and every other rival's odds acquire
$e^{-\eta\Delta_i}+O(\varepsilon)$.

Differentiating $H$ now yields
\[
H'(\lambda)=\eta[-\varepsilon^2\Gamma(w)+R_\varepsilon(w,\lambda)],
\qquad R_\varepsilon(w,\lambda)=O(\varepsilon^3),
\]
where
\[
\Gamma(w)=\sum_i\left[
2w_i^2(1-e^{-2\eta\Delta_i})
+w_i(1-w_i)(1-e^{-\eta\Delta_i})
\right].
\]
With $\Delta_{\min}:=\min_i\Delta_i$,
$\Gamma(w)\ge g_0:=1-e^{-\eta\Delta_{\min}}>0$, uniformly in $w$.
To make the remainder uniform explicit, put $\ell:=\lambda/\varepsilon$; then
$0\le\ell\le\bar\Delta_w\le\Delta_{\max}$.  The odds expressions above extend to
analytic functions of $(\varepsilon,w,\ell)$ on a compact set, with denominators
uniformly bounded away from zero.  After factoring out $\varepsilon^2$, the remaining
coefficients are uniformly continuously differentiable there.  A uniform first-order
Taylor bound therefore gives constants $C_0\in(0,\infty)$ and $\varepsilon_1>0$, depending only on $K$, $\eta$, and the
gaps, such that $|R_\varepsilon(w,\lambda)|\le C_0\varepsilon^3$ for all $w$,
$0\le\lambda\le A_*$, and
$0<\varepsilon\le\varepsilon_1$.  Taking, for example,
\[
\varepsilon_0\le
\min\!\left\{\varepsilon_1,\frac12,
\frac{\Delta_{\min}}{2\Delta_{\max}},
\frac{g_0}{2C_0}\right\},
\qquad \Delta_{\max}:=\max_i\Delta_i,
\]
makes every rival advantage negative and gives
$H'(\lambda)\le-(\eta g_0/2)\varepsilon^2<0$ throughout $[0,d]$.  Integrating gives
$H(d)<H(0)$.  When rival mass is small, the gain from strengthening all rare
branches with negative reinforcement is therefore smaller than the loss from weakening the
frequent optimal branch, even before any sign reversal.
\end{proof}

\paragraph{Large $\varepsilon$ in hard environments.}
The attenuation mechanism in Lemma~\ref{lem:hard-attenuation} need not be confined to a
nearly deterministic policy.
Consider the following instance with three arms and two environments:
\[
q_H=\frac1{20},
\quad r_H=(1,0,0),
\qquad
q_E=\frac{19}{20},
\quad r_E=(1,\tfrac{19}{20},\tfrac{19}{20}),
\]
with $\eta=3$ and
$\pi_0=(1/2,1/4,1/4)$, so the initial rival mass is already
$\varepsilon=1/2$.  In the hard environment, conditional centering gives
optimal/rival residuals $(0.5,-0.5)$, whereas sharing gives
$(0.04875,-0.95125)$.  Sharing therefore buys stronger negative reinforcement by
removing most of the much larger positive update to the optimal arm.

This comparison does not rely on a sign reversal: whenever $p=\pi(a^*)\ge1/2$, the
shared optimal residual remains positive and every rival residual remains negative;
each branch preserves this condition.  Figure~\ref{fig:symmetric-hard-attenuation}
shows that the conditional curve remains visibly above the shared curve over the
displayed horizon. Thus the attenuation mechanism can hold when $\varepsilon$ is bounded
away from zero, not only arbitrarily close to the optimum.

\begin{center}
\begin{minipage}{\linewidth}
  \centering
  \includegraphics[width=0.69\textwidth]{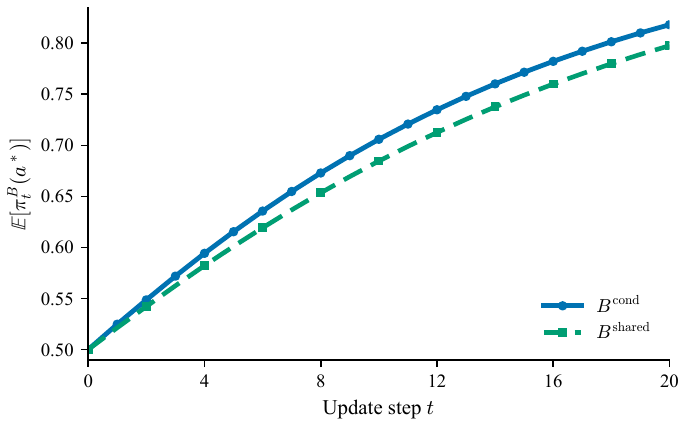}
  \captionof{figure}{Separation after finitely many steps without sign reversal. Algorithm~\ref{alg:eispg} with
  $K=3$, $q_H=1/20$, $q_E=19/20$, $r_H=(1,0,0)$,
  $r_E=(1,19/20,19/20)$, $\eta=3$, and $\pi_0=(1/2,1/4,1/4)$.
  Curves are Monte Carlo means of $\pi_t^B(a^*)$ over $400{,}000$ paired trajectories.}
  \label{fig:symmetric-hard-attenuation}
\end{minipage}
\end{center}

\paragraph{Derivation and closed form of the GAE mismatch recursion.}
Generalized Advantage Estimation (GAE) is a TD($\lambda$) estimator built from
bootstrapped temporal difference residuals \citep{schulman2016gae}. Consider one rollout
segment $(s_u,r_u,s_{u+1})_{u=t}^{T-1}$ from a fixed environment
$z$, and put $n:=T-t$.  Let $m_u\in\{0,1\}$ be the continuation mask after transition
$u$: it is zero when that transition terminates the episode and one otherwise. Along this
rollout, write
\[
V_u^{\mathrm{shared}}:=V^{\mathrm{shared}}(s_u),
\qquad
V_u^{\mathrm{cond}}:=V^{\mathrm{cond}}(s_u,z),
\qquad
e_u:=V_u^{\mathrm{cond}}-V_u^{\mathrm{shared}}.
\]
For each $u\in\{t,\ldots,T-1\}$ and
$B\in\{\mathrm{shared},\mathrm{cond}\}$, define
\[
\delta_u^B=r_u+\gamma m_uV_{u+1}^B-V_u^B,
\qquad
\widehat A_u^{B,\lambda}
=\sum_{\ell=0}^{T-u-1}(\gamma\lambda)^\ell M_{u,\ell}\delta_{u+\ell}^B,
\]
where $M_{u,0}:=1$ and $M_{u,k}:=\prod_{j=0}^{k-1}m_{u+j}$ for $k\ge1$. Then
\[
\delta_u^{\mathrm{shared}}-\delta_u^{\mathrm{cond}}
=e_u-\gamma m_ue_{u+1}.
\]
Define the GAE mismatch by
$D_u^\lambda:=\widehat A_u^{\mathrm{shared},\lambda}
-\widehat A_u^{\mathrm{cond},\lambda}$ for
$u<T$, with $D_T^\lambda:=0$ at the rollout boundary.  The standard backward
GAE recursion then gives, for $u=t,\ldots,T-1$,
\begin{equation}
D_u^\lambda=e_u-\gamma m_ue_{u+1}
+\gamma\lambda m_uD_{u+1}^\lambda,
\qquad D_T^\lambda=0.
\label{eq:gae-mismatch-filter}
\end{equation}
Unrolling this recursion, or equivalently
substituting into the finite sum and reindexing its second term, gives
\begin{align*}
D_t^\lambda
={}\sum_{\ell=0}^{n-1}(\gamma\lambda)^\ell M_{t,\ell}e_{t+\ell} 
-\sum_{k=1}^{n}\gamma(\gamma\lambda)^{k-1}M_{t,k}e_{t+k}.
\end{align*}
Collecting the coefficient of each $e_{t+k}$ yields the closed form
\begin{align}
D_t^\lambda={}e_t-\gamma(1-\lambda)\sum_{k=1}^{n-1}(\gamma\lambda)^{k-1}
M_{t,k}e_{t+k}\nonumber 
-\gamma(\gamma\lambda)^{n-1}M_{t,n}e_{t+n}.
\label{eq:gae-mismatch-closed}
\end{align}
For a single TD step ($\lambda=0$), the recursion reduces to
$D_t^0=e_t-\gamma m_te_{t+1}$.  For $\lambda=1$, it telescopes to
$D_t^1=e_t-\gamma^nM_{t,n}e_T$.  At a genuine terminal endpoint,
$m_{T-1}=0$ and hence $M_{t,n}=0$; at a bootstrapped nonterminal truncation,
$m_{T-1}=1$ and the final mismatch term remains.  The identity compares the same
rollout, rewards, evaluation states, and masks before batchwise advantage normalization
and before entering PPO's clipped objective.
It shows that GAE linearly filters critic mismatch and therefore does not erase it in
general, although particular offset sequences can cancel.  The environment index is
fixed along the trajectory; if values vary over time, the stage is included in the
state label.


\section{Full experimental details}
\label{app:expdetails}


\subsection{Conditioning architectures: FiLM, multihead, and PopArt}
\label{app:archdetails}

Every main comparison changes only the value function; the diagnostic in
Figure~\ref{fig:bipedal-conditioning-location} that also conditions the actor is the sole exception. On the MLP
benchmarks (CartPole, the MuJoCo suite, BipedalWalker) all trunks have two hidden layers
of width $64$ with $\tanh$. CartPole shares one trunk between actor and critic, and its
reported conditional critic uses a multihead value readout. The continuous control
environments use separate actor and critic trunks, and FiLM modulates the critic trunk
only. Continuous control actors output a Gaussian mean from the actor trunk with a
learned $\log\sigma$ independent of state (initialized to $0$); hidden layers use
orthogonal initialization, with gain $0.01$ on the policy mean head and $1.0$ on the
value head. Writing $h(s)\in\R^{64}$ for the critic trunk features, the three MLP
critics are
\begin{align*}
V_{\mathrm{sh}}(s)&=w^\top h(s)+b,\\
V_{\mathrm{FiLM}}(s,z)&=w^\top\bigl(h(s)\odot(1+\gamma_z)+\beta_z\bigr)+b,
\qquad \gamma_z=W_\gamma c_z+u_\gamma,\ \ \beta_z=W_\beta c_z+u_\beta,\\
V_{\mathrm{mh}}(s,z)&=w_z^\top h(s)+b_z,
\end{align*}
where $c_z\in\R^{d}$ is a learned embedding for each level. The reference count in
Table~\ref{tab:paramcounts} uses the minimal choice $d=1$ for CartPole; the evaluated
FiLM critics use $d=4$ on the MuJoCo bodies and $d=16$ on BipedalWalker, while the
Procgen implementation below also uses $d=16$. The matrices
$W_\gamma,W_\beta\colon\R^{d}\to\R^{64}$ with biases $u_\gamma,u_\beta$ are shared generators
\citep{perez2018film}. The multihead critic replaces the scalar readout by one
readout row per level --- the multitask value function of
\citet{hessel2019popartmultitask} with levels playing the role of tasks. The PopArt variant
\citep{vanhasselt2016popart,hessel2019popartmultitask} augments the multihead critic
with adaptive normalization of the value targets for each head: running first and second
moments $(\mu_z,\nu_z)$ track head $z$'s target distribution, the head predicts a
normalized value, $V=\sigma_z\cdot\mathrm{out}+\mu_z$, the value loss is computed in
normalized space, and row $z$ is rescaled whenever the statistics change so the
denormalized output is preserved. The statistics are buffers, not parameters, so the
parameter count equals that of the multihead critic.

\paragraph{Identical initialization.} Every conditioned critic is initialized to
coincide \emph{exactly} with the shared critic at step $0$: $W_\gamma,W_\beta,
u_\gamma,u_\beta$ are initialized to zero, so $\gamma_z=\beta_z=0$ and $V_{\mathrm{FiLM}}(s,z)=
V_{\mathrm{sh}}(s)$; every row $(w_z,b_z)$ of the multihead readout is initialized to
the same values, so all levels return the same value. Any difference between variants is
therefore produced by learning, not by a different initialization. The CartPole scalar bias
control likewise starts from $\beta_0=\beta_1=0$.

\paragraph{The actor does not see $z$ in the proposed intervention.} In every main
comparison, $z$ is an arbitrary identity label rather than a vector of physical or
procedural environment parameters. It is a privileged signal available only to the critic
during training; the actor is architecturally identical to the baseline's, and at deployment
only the actor runs, so conditioning the critic costs nothing at test time. FiLM learns a
lookup embedding for this label and
multihead uses it only to select a value head; neither receives a structured descriptor
from which environment dynamics could be inferred directly.

\paragraph{Procgen critics.} On Procgen all variants share Procgen's large CNN trunk in the IMPALA style
\citep{espeholt2018impala,cobbe2020procgen}
with a final embedding $\phi(s)$ of dimension $256$ and a categorical policy head over
the $15$ actions; the policy head always consumes the unmodulated $\phi(s)$. The value
heads are
\begin{align*}
V_{\mathrm{sh}}(s)&=w^\top\phi(s)+b, && w\in\R^{256},\\
V_{\mathrm{FiLM}}(s,z)&=w^\top\bigl(\phi(s)+\gamma_z\odot\phi(s)+\beta_z\bigr)+b,
&& c_z\in\R^{16},\\
V_{\mathrm{mh}}(s,z)&=w_z^\top\phi(s)+b_z, && W\in\R^{200\times256}.
\end{align*}
Here $\gamma_z=W_\gamma c_z+u_\gamma$ and $\beta_z=W_\beta c_z+u_\beta$.
FiLM's cost for each level is the $16$ embedding entries; $W_\gamma,W_\beta\colon
\R^{16}\to\R^{256}$ and their biases are initialized to zero as above. The rows of the multihead critic are
initialized as copies of the shared head; its cost for each level is $257$ free parameters.

\mbox{}\vspace{-\baselineskip}\par
\paragraph{FiLM versus multihead.} The two architectures parameterize the same object
--- the correction $e_z^\pi(s)$ for each level identified in Section~\ref{sec:mismatch} --- through
opposite statistical tradeoffs, and this is the point of running both. Each multihead
readout row is an independent vector fit from \emph{only that level's data}, whereas
FiLM routes an embedding with low dimension through a generator and readout fit from
\emph{all levels jointly}. For $L$ levels and hidden width $H$, FiLM adds
$Ld+2H(d+1)$ parameters, while multihead replaces one value head with $H+1$ parameters
by $L$ such heads and therefore adds $(L-1)(H+1)$ parameters relative to the
shared model. With $10$ levels multihead is the
cheaper of the two; with the $100$ levels of BipedalWalker it becomes the more
expensive one ($+54.3\%$ vs.\ $+31.9\%$).
Table~\ref{tab:paramcounts} gives the corresponding counts for each architecture; the
coverage paragraph below states which combinations were evaluated.

\begin{center}
\begin{minipage}{\linewidth}
\centering
\captionof{table}{Total trainable actor and critic parameter counts for the reference
combinations of architecture and benchmark. Parentheses report increases relative to the shared
model. FiLM retains the shared value head and adds a level embedding table and two affine
generators; multihead replaces the shared value head by one head per level. The CartPole
FiLM reference uses the minimal embedding dimension $d=1$, and the Procgen counts use the
large PLR CNN \citep{jiang2021plr}. PopArt has the same trainable count as multihead.
The table includes parameterizations not used in the main comparisons; the coverage
paragraph below states which variants were evaluated.}
\label{tab:paramcounts}
\small
\setlength{\tabcolsep}{3.5pt}
\begin{tabular}{lccrrr}
\toprule
Environment & obs / act & levels & shared & FiLM & multihead\\
\midrule
CartPole      & 4 / 2  & 2   & 4{,}675  & 4{,}933 ($+5.5\%$) & 4{,}740 ($+1.4\%$)\\
BipedalWalker & 24 / 4 & 100 & 11{,}849 & 15{,}625 ($+31.9\%$) & 18{,}284 ($+54.3\%$)\\
Walker2d      & 17 / 6 & 10  & 11{,}085 & 11{,}765 ($+6.1\%$) & 11{,}670 ($+5.3\%$)\\
Hopper        & 11 / 3 & 10  & 10{,}119 & 10{,}799 ($+6.7\%$) & 10{,}704 ($+5.8\%$)\\
HalfCheetah   & 17 / 6 & 10  & 11{,}085 & 11{,}765 ($+6.1\%$) & 11{,}670 ($+5.3\%$)\\
Procgen       & $64^2{\times}3$ / 15 & 200 & 626{,}256 & 638{,}160 ($+1.9\%$) & 677{,}399 ($+8.2\%$)\\
\bottomrule
\end{tabular}
\end{minipage}
\end{center}

\paragraph{PopArt configurations.} On the MLP benchmarks the moment EMA rate is
$\beta=3\times10^{-3}$. On Procgen, where each estimate for a level sees roughly $1/200$
of the batch, the statistics update once per minibatch with rate $\beta=3\times10^{-4}$
and the standard debiasing correction, with three stabilizers: a variance floor
$\sigma_{\min}=10^{-2}$ (PopArt with a single head uses $10^{-4}$); a head's statistics update
only when it has at least $8$ samples in the minibatch; and normalization stays off
(statistics still accumulating) for the first $50$ updates --- without this warmup the
first update produced gradient spikes of order $10^{4}$ from early
returns.

\paragraph{Coverage.} The experiments actually reported are as follows. The CartPole
mechanism study reports a shared critic, a multihead critic, a multihead critic given a
constant index, and a critic with scalar biases. The main MuJoCo return comparison reports
shared and FiLM critics; the value loss diagnostic in
Figure~\ref{fig:value-loss-continuous} additionally reports multihead. BipedalWalker
additionally reports multihead, and Procgen reports shared, FiLM, multihead, and
multihead$+$PopArt.

\subsection{CartPole}
\label{app:cartpole}

CartPole \citep{barto1983neuronlike} is the classic cart--pole balancing task: a
$4$-dimensional observation, two discrete actions, reward $+1$ per step while the pole
stays up. The mechanism study retains two distinct logged level identities in both
settings. In the identical control, $(g_0,g_1)=(10,10)$; in the heterogeneous setting,
$(g_0,g_1)=(10,50)$. Every other parameter (force magnitude, pole mass, pole length, cart
mass) remains at its default. Gravity is not part of the observation, so the two levels
cannot be distinguished from the current state alone. Episodes end when the pole falls
(maximum $200$ steps), so episode lengths can differ between levels.

Train and evaluation use the \emph{same} two levels: the question is the learning signal,
not generalization to unseen levels. Returns are averaged within each level before averaging
the two levels with equal weight.

\paragraph{Mechanism variants.}
The shared critic is $V_{\mathrm{sh}}(s)=w^\top h(s)+b$, and the multihead critic is
$V_{\mathrm{mh}}(s,z)=w_z^\top h(s)+b_z$. The constant index control uses the same
multihead parameterization but routes every sample to head $0$. The scalar bias control uses
\[
V_{\mathrm{bias}}(s,z)=V_{\mathrm{sh}}(s)+\beta_z,
\qquad \beta_0=\beta_1=0\ \text{ at initialization}.
\]
Thus the two multihead variants have $4{,}740$ trainable parameters, the shared critic has
$4{,}675$, and the critic with a scalar bias has $4{,}677$. The actor, shared trunk, optimizer,
data, and PPO pipeline are otherwise unchanged.

\paragraph{Diagnostics.}
At every logged update and for each level, value loss is the mean squared difference between
the GAE return target and the value prediction. Advantage is the mean raw GAE estimate over
transitions from that level. We compute these
levelwise quantities before PPO clipping. Figure~\ref{fig:cartpole-mechanism} averages each metric across $20$ seeds,
with shading denoting one standard error. Table~\ref{tab:hp-cartpole} lists the full
configuration.

\begin{table}[h]
\centering
\caption{CartPole mechanism study hyperparameters ($20$ seeds per variant and setting).}
\label{tab:hp-cartpole}
\small
\begin{tabular}{ll}
\toprule
Parameter & Value\\
\midrule
Algorithm & PPO \citep{schulman2017ppo}\\
Parallel environments & 16\\
Rollout length & 128\\
Total environment steps & 2{,}048{,}000 (1{,}000 updates)\\
PPO epochs / minibatches & 4 / 4\\
Clip parameter & 0.2\\
Learning rate & $3\times10^{-4}$ (Adam, $\epsilon=10^{-5}$)\\
Discount $\gamma$ / GAE $\lambda$ & 0.99 / 0.95 \citep{schulman2016gae}\\
Value loss coefficient & 0.5\\
Entropy coefficient & 0.01\\
Max gradient norm & 0.5\\
Advantage normalization & off (all variants)\\
Hidden width & 64\\
Conditional readout & multihead with two value heads\\
Gravity pairs & $(10,10)$ and $(10,50)$\\
Seeds & 1--20\\
\bottomrule
\end{tabular}
\end{table}

\subsection{MuJoCo suite: Walker2d, Hopper, HalfCheetah}
\label{app:mujoco}

The three standard MuJoCo locomotion bodies \citep{todorov2012mujoco} (Gym v4;
\citealp{brockman2016gym}) use an
identical protocol so they can be compared directly:

\begin{center}
\small
\begin{tabular}{lccc}
\toprule
 & Walker2d-v4 & Hopper-v4 & HalfCheetah-v4\\
\midrule
Observation dim & 17 & 11 & 17\\
Action dim & 6 & 3 & 6\\
Episode length & 1000 (fixed) & 1000 (fixed) & 1000 (fixed)\\
Mass multiplier & $[0.5,2.5]$ & $[0.5,2.5]$ & $[0.5,2.5]$\\
\bottomrule
\end{tabular}
\end{center}

\paragraph{Levels.} Ten levels differ in \emph{body mass only}: every
\texttt{body\_mass} entry of the model is multiplied by a scale factor for each level. The
factors are drawn from a distribution uniform in log space on $[0.5,2.5]$ and sorted, using the \emph{run seed}
as the RNG seed. Consequently the shared and conditional runs of a given seed face the
identical collection (the comparison is properly paired), while different seeds see different
collections (the result is not an artifact of one particular grid). Level $0$ is always the
lightest and level $9$ the heaviest; the mass is not explicitly provided in the observation.


\paragraph{Train / test.} Reported numbers are training returns on the ten training
environments, averaged within each level over the last $25\%$ of training and then across
levels with equal weight. The advantage heatmaps for individual levels in
Figure~\ref{fig:mujoco} plot the mean sampled GAE advantage for each level, with rows ordered from lightest to heaviest level.


\begin{table}[h]
\centering
\caption{MuJoCo suite hyperparameters ($30$ seeds per method and body).}
\label{tab:hp-mujoco}
\small
\begin{tabular}{ll}
\toprule
Parameter & Value\\
\midrule
Algorithm & PPO\\
Levels & 10\\
Environments per level & 2 (20 parallel)\\
Rollout length & 256 (5{,}120 transitions per update)\\
Total environment steps & 10{,}000{,}000\\
PPO epochs / minibatches & 10 / 8\\
Clip parameter & 0.2\\
Learning rate & $3\times10^{-4}$ (Adam, $\epsilon=10^{-5}$)\\
Discount $\gamma$ / GAE $\lambda$ & 0.99 / 0.95\\
Value loss coefficient & 0.5\\
Entropy coefficient & 0.0\\
Max gradient norm & 0.5\\
Advantage normalization & on\\
Hidden width & 64\\
Level embedding dimension & 4\\
Seeds & 1--30\\
\bottomrule
\end{tabular}
\end{table}

\subsection{BipedalWalker}
\label{app:bipedal}

BipedalWalker \citep{brockman2016gym} is planar bipedal locomotion over procedurally
generated terrain: a $24$-dimensional lidar/proprioception observation and $4$
continuous torque actions. The training collection contains $100$ \emph{pinned} terrains:
$10$ \texttt{BipedalWalker-v3} terrains (levels $0$--$9$) and $90$
\texttt{BipedalWalkerHardcore-v3} terrains (levels $10$--$99$) --- a collection dominated
by hard terrains with stumps, pits, and stairs. Terrain is generated from the environment RNG at
reset, so each environment is reseeded with its own fixed terrain seed before
\emph{every} reset and replays the same terrain for the whole run. Terrain identity is
not explicitly included in the observation vector.

\paragraph{Train / test.} Training uses the $100$ pinned terrains, one parallel
environment each. Testing uses $100$ \emph{unseen} terrains generated with seed offset
$500$ --- a fresh set with the same $10/90$ normal/hardcore split that never appears in
training --- evaluated every 1M steps, one episode per terrain, reporting average returns.


\begin{table}[h]
\centering
\caption{BipedalWalker hyperparameters ($10$ seeds per method).}
\label{tab:hp-bipedal}
\small
\begin{tabular}{ll}
\toprule
Parameter & Value\\
\midrule
Algorithm & PPO\\
Parallel environments & 100 (one per terrain)\\
Rollout length & 256\\
Total environment steps & 60{,}000{,}000\\
PPO epochs / minibatches & 10 / 8\\
Clip parameter & 0.2\\
Learning rate & $3\times10^{-4}$ (Adam, $\epsilon=10^{-5}$)\\
Discount $\gamma$ / GAE $\lambda$ & 0.99 / 0.95\\
Value loss coefficient & 0.5\\
Entropy coefficient & 0.0\\
Max gradient norm & 0.5\\
Advantage normalization & on\\
Hidden width & 64\\
Level embedding dimension & 16\\
Evaluation interval & 1{,}000{,}000 steps\\
Seeds & 1--10\\
\bottomrule
\end{tabular}
\end{table}

\paragraph{Capacity control.} Figure~\ref{fig:bipedal}(c) repeats the full comparison
with each value network widened while its conditioning mechanism is held fixed, giving
every critic variant roughly five times as many trainable parameters; all other settings are
identical. The shared critic plateaus at the same level, and the conditioned critics
again reach returns of roughly $150$ and above --- the gap is not a capacity artifact.

\paragraph{Other conditioning choices.}
Figure~\ref{fig:bipedal-conditioning-location} compares two less successful ways to use
the environment index. Assigning a separate value network to every terrain removes the
shared value representation and learns substantially more slowly than FiLM and
multihead. Supplying the index to the actor as well as the multihead critic instead
collapses both training return and return on unseen terrains. Because unseen terrains
carry no valid index, we evaluate this variant that conditions the actor by running the actor
with each training index in turn and averaging the resulting returns. The successful intervention on this benchmark
is therefore deliberately asymmetric: the actor and critic representation remain shared,
while only value prediction is conditioned on the environment.

\begin{center}
\begin{minipage}{\linewidth}
  \centering
  \begin{minipage}[t]{0.49\linewidth}
    \centering
    \includegraphics[width=\linewidth]{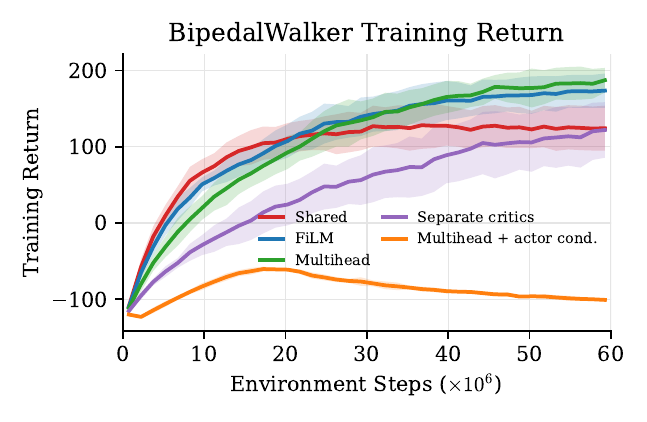}
  \end{minipage}\hfill
  \begin{minipage}[t]{0.49\linewidth}
    \centering
    \includegraphics[width=\linewidth]{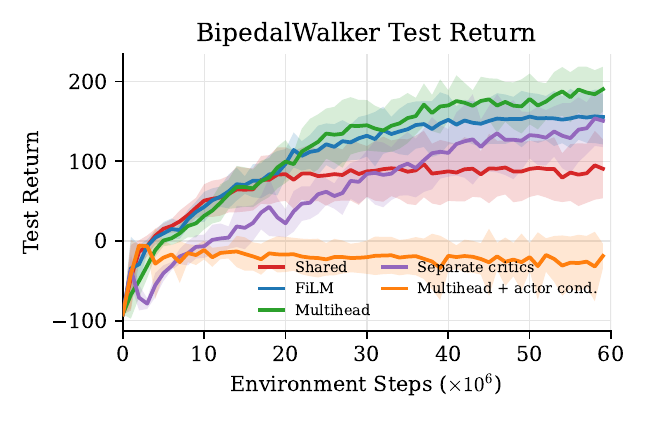}
  \end{minipage}
  \captionof{figure}{Alternative conditioning choices on BipedalWalker. Training return on the
  $100$ pinned terrains (left) and return on $100$ unseen terrains (right),
  mean$\,\pm\,$1 s.d.\ over $10$ seeds. Separate critics learn more slowly but approach
  FiLM's final test mean; also conditioning the actor causes both returns to collapse in
  this setting.}
  \label{fig:bipedal-conditioning-location}
\end{minipage}
\end{center}

\subsection{Procgen}
\label{app:procgen}

Procgen \citep{cobbe2020procgen} is a suite of $16$ procedurally generated arcade
games with $64\times64\times3$ observations and a common $15$-action discrete space; a
level seed controls the layout, assets, entity locations and spawn times, and other
details specific to each game. Each game is trained
separately. Instead of sampling levels freely, every run trains on a fixed collection of
$L=200$ \emph{pinned} levels under a hybrid difficulty split: parallel environment $i$
is permanently assigned level seed $i$, with environments $0$--$99$ drawn from the
\texttt{easy} distribution and $100$--$199$ from \texttt{hard}. The assignment never
changes during training, so level identity is exactly the environment index and
statistics for each level can be logged at every update. Levels are visited with uniform
weighting (one parallel copy each); we refer to this as the uniform level sampling reference.
Returns are normalized with the standard scaling based on the running return (discount $0.999$);
observations are only scaled by $1/255$. Advantages are computed with GAE and
normalized per update batch before the policy loss. Figure~\ref{fig:procgen-environments}
gives a visual overview of the full benchmark.

\begin{center}
\begin{minipage}{\linewidth}
\centering
\includegraphics[width=0.68\textwidth]{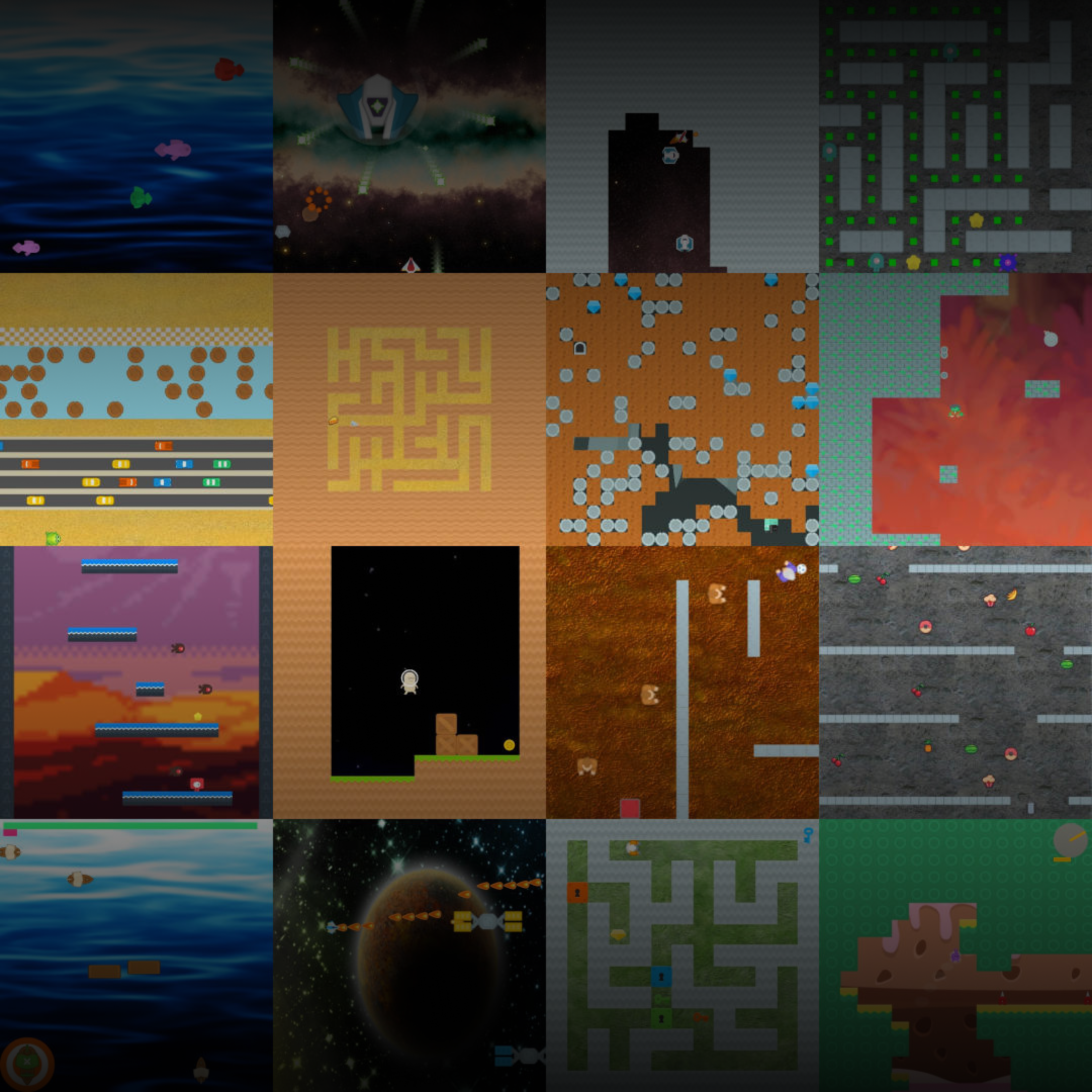}
\captionof{figure}{Example observations from all $16$ Procgen games: Bigfish, Bossfight,
Caveflyer, Chaser, Climber, Coinrun, Dodgeball, Fruitbot, Heist, Jumper, Leaper,
Maze, Miner, Ninja, Plunder, and Starpilot. Each game generates many visually and
structurally distinct levels while retaining its own mechanics. Screenshots are adapted
from the official Procgen release \citep{cobbe2020procgen}.}
\label{fig:procgen-environments}
\end{minipage}
\end{center}

\begin{table}[h]
\centering
\caption{Procgen PPO hyperparameters (per game and method).}
\label{tab:hp-procgen}
\small
\begin{tabular}{@{}lp{0.55\linewidth}@{}}
\toprule
Parameter & Value\\
\midrule
Parallel environments & 200 (one per pinned level)\\
Rollout length & 256 steps per environment\\
Batch size per update & 51{,}200\\
PPO epochs / minibatches & 3 / 8\\
Clip parameter & 0.2\\
Learning rate & $5\times10^{-4}$ (Adam, $\epsilon=10^{-5}$)\\
Discount $\gamma$ / GAE $\lambda$ & 0.999 / 0.95\\
Entropy coefficient & 0.01\\
Value loss coefficient & 0.5\\
Max gradient norm & 0.5\\
Total environment steps & 25M ($\approx488$ updates)\\
Trunk & Large CNN in the IMPALA style, $\phi(s)\in\R^{256}$\\
Level embedding dimension (FiLM) & 16\\
Advantage normalization & on (per update batch)\\
Evaluation interval & $5\times10^{5}$ steps\\
Seeds per method & 10\\
\bottomrule
\end{tabular}
\end{table}

\paragraph{Evaluation protocol.} Every $5\times10^{5}$ environment steps, a separate,
freshly reset evaluation copy of all $200$ pinned levels is rolled out with the
deterministic argmax policy for exactly one episode per level; the reported evaluation
return is the unweighted mean over the $200$ episode returns (easy/hard splits average
levels $0$--$99$ and $100$--$199$). Because every level contributes exactly one
episode, this metric weights levels equally and is unaffected by bias caused by episode length
in running averages collected under the policy. A run's final evaluation return on pinned levels is the mean of its last
$4$ evaluations. We use \emph{evaluation return on the pinned training levels} for this
metric; the main Procgen learning curves, Appendix Figure~\ref{fig:procgen-full}, and
Table~\ref{tab:popart} use it. The training curve reports the running mean
return of the last $100$ finished episodes under the stochastic behavior policy.
The rows for individual games in Table~\ref{tab:popart} report raw, unscaled episodic scores.
Normalized evaluation returns in its last row divide each run's final evaluation
return on pinned levels
by the mean final evaluation return of the shared critic under uniform level sampling over
all its runs, in percent. The last row aggregates the resulting $16\times10$ scores
for each method and reports their mean~$\pm$~1 s.d.

\paragraph{Evaluation on unseen levels.} At the final checkpoint, we evaluate the deterministic
policy for one episode on each of $600$ unseen levels per game: seeds $1000$--$1299$
under both the easy and hard modes. Levels receive equal weight. Table~\ref{tab:heldout}
reports mean$\,\pm\,$1 s.d.\ over the same $10$ training seeds; its normalized row divides
each score for a game and seed by that game's shared critic mean and aggregates the resulting
$16\times10$ normalized scores.

\begin{table}[t]
\centering\small
\caption{Procgen, $16$ games: final evaluation return on the $200$ pinned training
levels ($25$M steps; mean$\,\pm\,$1 s.d.\ over $10$ seeds per method). Normalized training returns per run are computed by dividing the average test return per run for each environment by the corresponding
average test return of the shared critic baseline over all runs. \textbf{Bold} $=$ best in the row.}
\label{tab:popart}
\setlength{\tabcolsep}{4pt}
\renewcommand{\arraystretch}{1.08}
\begin{tabular}{l rrrr}
\toprule
& \multicolumn{4}{c}{Final evaluation return} \\
\cmidrule(lr){2-5}
Game & shared & FiLM & multihead & \,+\,PopArt \\
\midrule
Bigfish & 5.89\,\tiny$\pm$0.64 & 6.77\,\tiny$\pm$0.58 & 7.36\,\tiny$\pm$0.69 & \textbf{7.44\,\tiny$\pm$1.86} \\
Bossfight & 5.11\,\tiny$\pm$3.56 & 8.86\,\tiny$\pm$0.27 & \textbf{10.13\,\tiny$\pm$0.24} & 9.11\,\tiny$\pm$0.59 \\
Caveflyer & 3.11\,\tiny$\pm$0.17 & 3.83\,\tiny$\pm$0.21 & \textbf{4.18\,\tiny$\pm$0.21} & 3.03\,\tiny$\pm$0.39 \\
Chaser & 1.27\,\tiny$\pm$0.21 & 1.43\,\tiny$\pm$0.17 & \textbf{1.96\,\tiny$\pm$0.15} & 1.33\,\tiny$\pm$0.16 \\
Climber & 3.87\,\tiny$\pm$0.27 & 4.39\,\tiny$\pm$0.15 & \textbf{5.29\,\tiny$\pm$0.18} & 3.51\,\tiny$\pm$0.34 \\
Coinrun & 6.83\,\tiny$\pm$0.37 & 7.66\,\tiny$\pm$0.20 & \textbf{7.86\,\tiny$\pm$0.10} & 7.50\,\tiny$\pm$0.36 \\
Dodgeball & 3.08\,\tiny$\pm$0.20 & 3.42\,\tiny$\pm$0.15 & \textbf{3.66\,\tiny$\pm$0.12} & 3.57\,\tiny$\pm$0.25 \\
Fruitbot & 13.35\,\tiny$\pm$1.01 & \textbf{13.70\,\tiny$\pm$0.32} & 13.34\,\tiny$\pm$0.45 & 13.66\,\tiny$\pm$0.62 \\
Heist & 4.14\,\tiny$\pm$0.30 & \textbf{4.27\,\tiny$\pm$0.22} & 4.12\,\tiny$\pm$0.26 & 3.29\,\tiny$\pm$0.38 \\
Jumper & 5.44\,\tiny$\pm$0.22 & \textbf{5.75\,\tiny$\pm$0.12} & 5.41\,\tiny$\pm$0.10 & 3.88\,\tiny$\pm$1.11 \\
Leaper & 3.26\,\tiny$\pm$0.26 & 3.61\,\tiny$\pm$0.31 & \textbf{4.08\,\tiny$\pm$0.37} & 3.59\,\tiny$\pm$0.20 \\
Maze & 5.21\,\tiny$\pm$0.23 & 5.45\,\tiny$\pm$0.25 & \textbf{5.96\,\tiny$\pm$0.15} & 5.08\,\tiny$\pm$0.32 \\
Miner & 1.41\,\tiny$\pm$0.25 & 1.94\,\tiny$\pm$0.32 & \textbf{2.11\,\tiny$\pm$0.26} & 2.07\,\tiny$\pm$0.35 \\
Ninja & 5.71\,\tiny$\pm$0.30 & 6.67\,\tiny$\pm$0.16 & \textbf{6.73\,\tiny$\pm$0.31} & 6.73\,\tiny$\pm$0.68 \\
Plunder & 3.46\,\tiny$\pm$0.60 & \textbf{3.67\,\tiny$\pm$0.38} & 2.63\,\tiny$\pm$0.29 & 2.70\,\tiny$\pm$0.38 \\
Starpilot & 14.44\,\tiny$\pm$1.50 & 16.61\,\tiny$\pm$1.10 & 17.77\,\tiny$\pm$2.04 & \textbf{19.16\,\tiny$\pm$1.69} \\
\midrule
\textbf{Normalized training return (\%)} & 100.0\,\tiny$\pm$19.0 & 116.5\,\tiny$\pm$19.0 & \textbf{124.2\,\tiny$\pm$28.3} & 110.0\,\tiny$\pm$29.7 \\
\bottomrule
\end{tabular}
\end{table}

\clearpage


\begin{center}
\begin{minipage}{\linewidth}
\centering
\includegraphics[width=\textwidth]{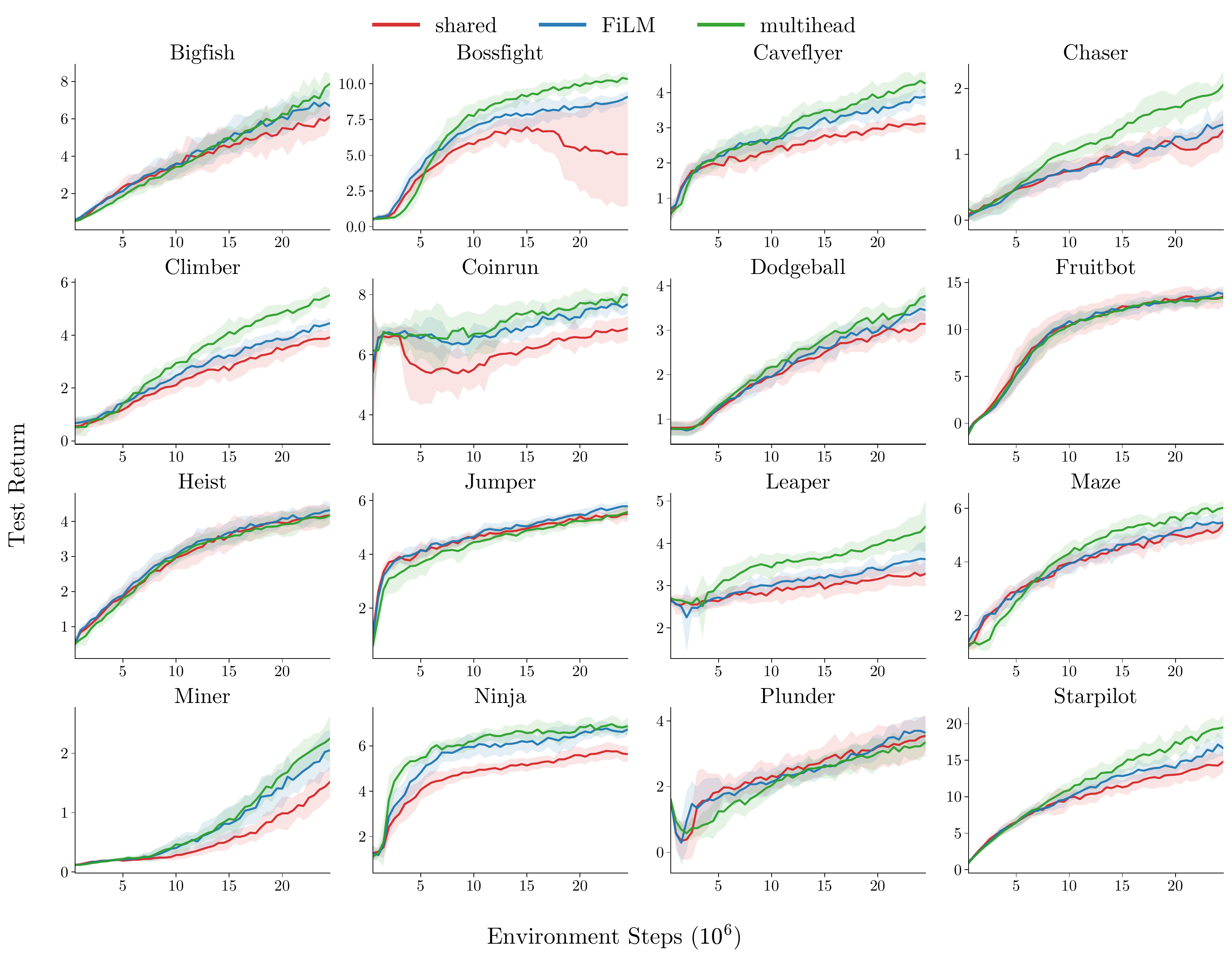}
\captionof{figure}{Complete Procgen learning curves for all $16$ games. Evaluation return on the
$200$ pinned training levels over $25$M environment steps, shown as mean$\,\pm\,$1 s.d.\
over $10$ seeds for the shared, FiLM, and multihead critics.
Figure~\ref{fig:procgen}
shows the subset of eight games in the main text.}
\label{fig:procgen-full}
\end{minipage}
\end{center}

\subsection{Value loss diagnostics}
\label{app:value-loss-diagnostics}

We report the clipped PPO value objective throughout training to test whether critic
conditioning creates a sustained fitting cost beyond the early CartPole transient.
At each minibatch this is one half of the larger of the two squared errors from the
unclipped and clipped value predictions, averaged over PPO epochs and minibatches. The loss is measured
on each method's own training stream and therefore diagnoses the
optimization burden encountered by that method. Across the larger benchmarks,
conditioning does not impose a systematic penalty: the early multihead excess closes on
BipedalWalker, conditioned losses are lower through most of training on all three MuJoCo
tasks, and their ordering is game dependent but often favorable on Procgen.

\begin{center}
\begin{minipage}{\linewidth}
\centering
\includegraphics[width=\textwidth]{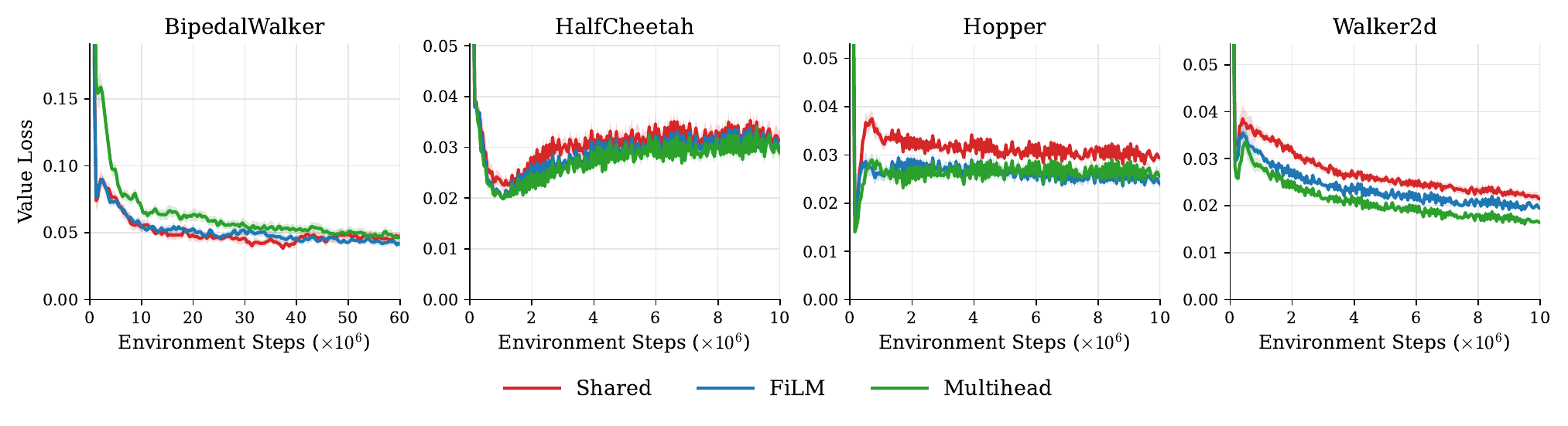}
\captionof{figure}{\textbf{Value loss diagnostics beyond CartPole.}
The clipped PPO value objective on BipedalWalker and the three MuJoCo tasks; each panel
uses its own vertical scale. On BipedalWalker, FiLM reaches a slightly lower loss late in training
and the early multihead excess closes to a comparable level. On HalfCheetah, Hopper, and
Walker2d, the conditioned critics remain below the shared critic through most of
training. Curves show mean$\,\pm\,$1 s.d.\ over $10$ seeds on BipedalWalker and
$30$ seeds for each MuJoCo task.}
\label{fig:value-loss-continuous}
\end{minipage}
\end{center}

\begin{center}
\begin{minipage}{\linewidth}
\centering
\includegraphics[width=\textwidth]{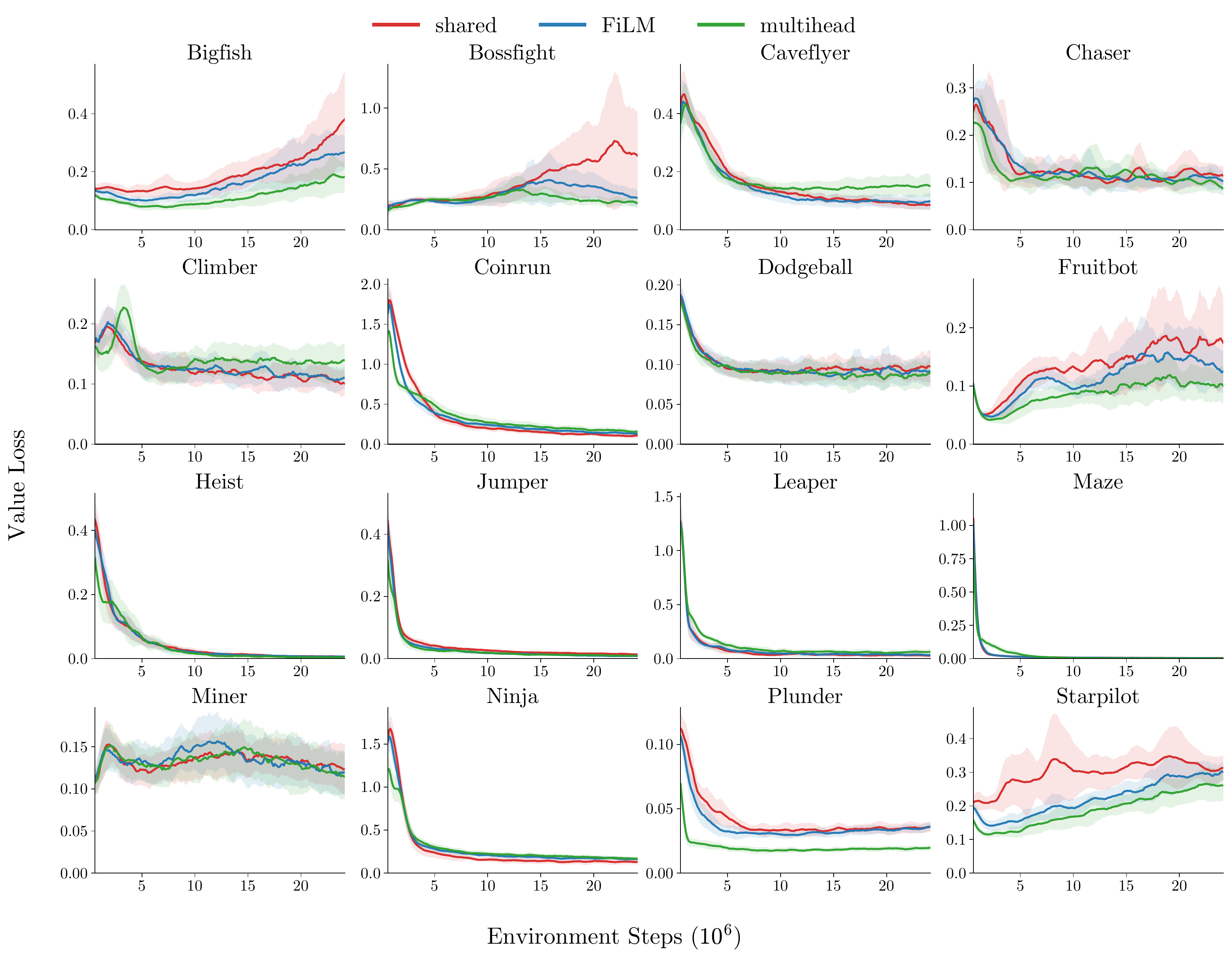}
\captionof{figure}{\textbf{Procgen value loss trajectories.}
The clipped PPO value objective over $25$M environment steps for all $16$ games; each
panel uses its own vertical scale. In late training, both the FiLM and multihead mean
curves lie below the mean for the shared critic in $11$ games. Curves show mean$\,\pm\,$1 s.d.\
over $10$ seeds. The ordering varies by task, and conditioning does not increase the loss
uniformly or create a systematic bottleneck in value fitting.}
\label{fig:value-loss-procgen}
\end{minipage}
\end{center}

\end{document}